\newcommand{\mnote}[1]{}
\newcommand{\snote}[1]{}
\newcommand{\todonote}[1]{}
\date{}
\title{On the Gradient Complexity of Private Optimization \\ with Private Oracles}
\author{\makebox[1.8in]{Michael Menart  \thanks{Department of Computer Science,
University of Toronto, 
Vector Institute, 
\texttt{michael.menart@utoronto.ca}}}
\makebox[1.8in]{Aleksandar Nikolov \thanks{Department of Computer Science,
University of Toronto, 
\texttt{anikolov@cs.toronto.edu}}
}
}
\newif\ifarxiv
\begin{document}

\maketitle

\begin{abstract}%
We study the running time, in terms of first order oracle queries, of differentially private empirical/population risk minimization of Lipschitz convex losses. We first consider the setting where the loss is non-smooth and the optimizer interacts with a private proxy oracle, which sends only private messages about a minibatch of gradients. In this setting, we show that expected running time $\Omega(\min\{\frac{\sqrt{d}}{\alpha^2}, \frac{d}{\log(1/\alpha)}\})$ is necessary to achieve $\alpha$ excess risk on problems of dimension $d$ when $d \geq 1/\alpha^2$. Upper bounds via DP-SGD show these results are tight when $d>\tilde{\Omega}(1/\alpha^4)$. In fact, the lower bound nearly matches the best known upper bound for general private optimizers in this regime. We further show our lower bound can be strengthened to $\Omega(\min\{\frac{d}{\bar{m}\alpha^2}, \frac{d}{\log(1/\alpha)} \})$ for algorithms which use minibatches of size at most $\bar{m} < \sqrt{d}$. We next consider smooth losses, where we relax the private oracle assumption and give lower bounds under only the condition that the optimizer is private. Here, we lower bound the expected number of first order oracle calls by $\tilde{\Omega}\big(\frac{\sqrt{d}}{\alpha} + \min\{\frac{1}{\alpha^2}, n\}\big)$, where $n$ is the size of the dataset. Modifications to existing algorithms show this bound is nearly tight. To our knowledge, ours are the first oracle complexity lower bounds to leverage differential privacy beyond the local privacy model. Compared to non-private lower bounds, our results show that differentially private optimizers pay a dimension dependent runtime penalty. Finally, as a natural extension of our proof technique, we show lower bounds in the non-smooth setting for optimizers interacting with information limited oracles. Specifically, if the proxy oracle transmits at most $\Gamma$-bits of information about the gradients in the minibatch, then $\Omega\big(\min\{\frac{d}{\alpha^2\Gamma}, \frac{d}{\log(1/\alpha)}\}\big)$ oracle calls are needed. This result shows fundamental limitations of gradient quantization techniques in optimization.

\end{abstract}

\section{Introduction}
Many fundamental problems in machine learning and statistics take the form of convex optimization problems. Many such problems can be formulated as empirical risk minimization (ERM), or stochastic convex optimization (SCO)\footnote{Not to be confused with the alternative use of stochastic optimization, where one assumes access to a noisy gradient oracle. 
While related, these settings are distinct. We will refer to this other setting as the stochastic oracle setting.}. For a dataset of $n$ convex losses $\ell_1,\ldots,\ell_n$ and convex constraint set $\cW\subseteq \re^d$ of diameter at most $\rad$, the former asks for an approximate minimizer of the empirical loss: 
$\min_{w\in\cW}\{\cL(w) := \frac{1}{n}\sum_{i=1}^n \ell_i(w)\}$. For an unknown distribution $\cD$, the latter problem is solved by finding an approximate minimizer of population loss $\cL_{\cD}(w) := \E_{\ell\sim\cD}[\ell(w)]$ given independent samples from $\cD$. Such problems have been studied for decades under a variety of regularity conditions on the loss, most commonly $\lip$-Lipschitzness and/or $\beta$-smoothness (i.e. Lipschitz continuous gradients). The runtime efficiency of such algorithms is often measured in the number of first-order oracle calls (i.e. gradient and loss evaluations) made during the optimization procedure. 
Such characterizations date as far back as the work of Nemirovski and Yudin~\citep{nemirovsky85}, which showed that the oracle complexity is $\Theta(\min\bc{d\log(1/\alpha), 1/\alpha^2})$ for minimizing a single (i.e. $n=1$) non-smooth function \footnote{The lower bounds in \cite{nemirovsky85} are loose by log factors for randomized algorithms. This gap has since been closed \cite{BGP17}.}. 

\begin{wrapfigure}{r}{0.5\textwidth}
    \captionsetup{format=plain}
    \begin{overpic}[width=0.7\textwidth]{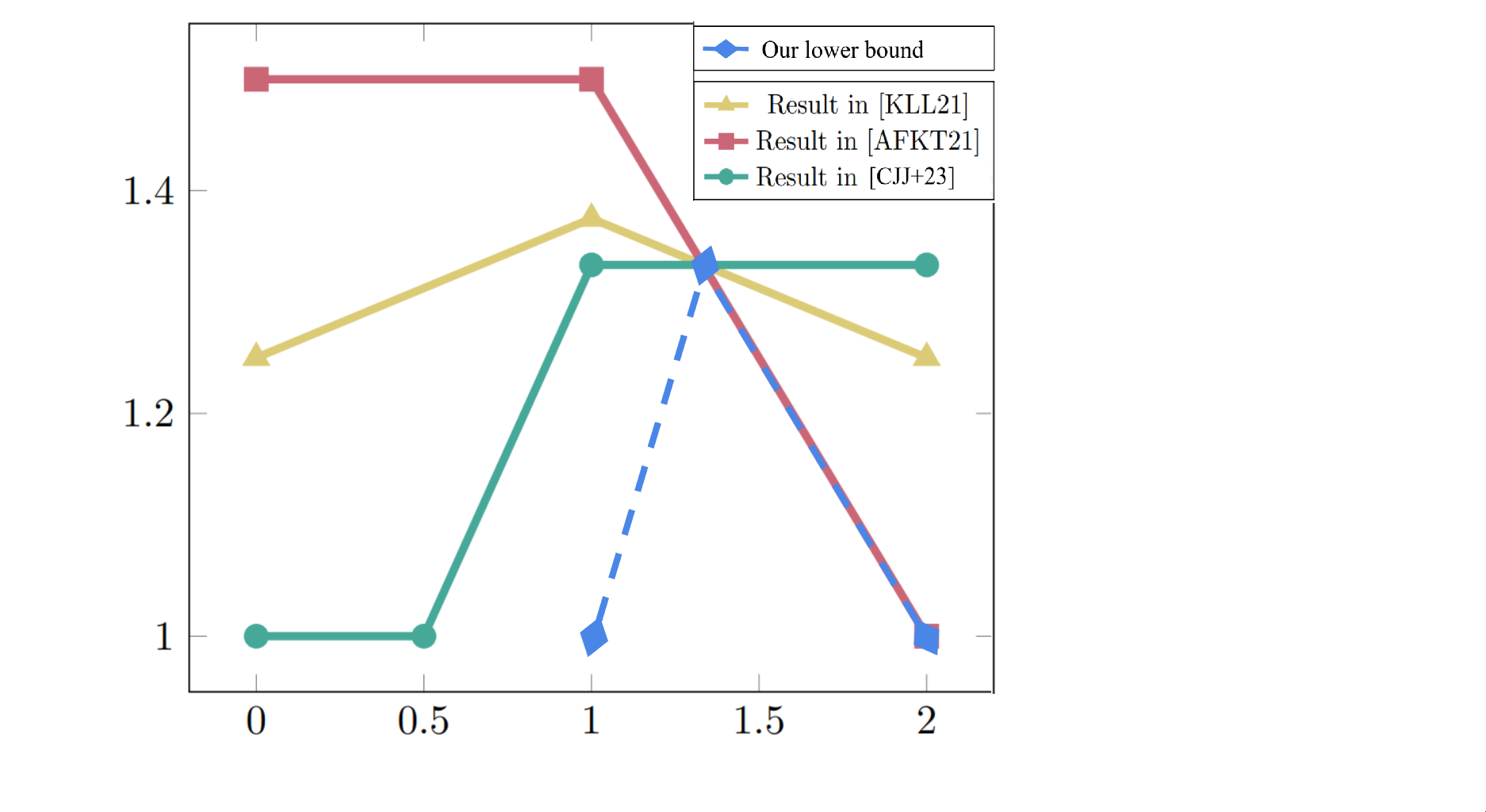}
        \put (27,1) {\small $\kappa: $ dimension $\propto n^\kappa$}
        \put (3,9) {\rotatebox{90}{\small $\beta: $ gradient complexity $\propto n^\beta$} }
    \end{overpic}
    \vspace{-20pt}
    \caption{\small Summary of best known upper bounds and our lower bound for achieving optimal DP-SCO rate $\frac{1}{\sqrt{n}}+\frac{\sqrt{d}}{n}$, ignoring log factors and dependence on privacy parameters. The best upper bounds for private oracle methods is given by the minimum of \cite{AFKT21} and \cite{KLL21}. One work, \cite{resqueing-dp-sco}, improves this rate in low dimensions with an algorithm that does not satisfy oracle privacy.}
    \label{fig:bounds}
\end{wrapfigure}

The power of this framework combined with modern privacy concerns has resulted in a rich literature studying differentially private (DP) analogs of these problems \cite{CMS,BST14,BFTT19,FKT20,KLL21, AFKT21,choquette-choo25a}. For over a decade, it has been known that the best achievable asymptotic rate of the excess empirical risk for ERM under $(\epsilon,\delta)$-DP (DP-ERM) is $\alpha^*_{\epsilon,\delta} := \rad\lip\sqrt{d\log(1/\delta)}/(n\epsilon)$, which was first achieved in $O(n^2)$ running time using DP-SGD,  \citep{BST14}. 
The optimal rate for SCO under $(\epsilon,\delta)$-DP (DP-SCO), which is $\smash{\Theta(\alpha_{\epsilon,\delta}^*+\frac{\rad\lip}{\sqrt{n}}})$, was established subsequently, albeit with even higher runtime overhead \cite{BFTT19}.
Runtimes were eventually improved to $\smash{O\big(\min\{\frac{n^2\epsilon^{3/2}}{\sqrt{d}}, n^{1.5}\big\} \big)}$ in the non-smooth case by \cite{AFKT21} and, in the smooth case, to $O(n)$ by \cite{FKT20}; we omit now factors of $\rad,\lip$ and $\log(1/\delta)$ for simplicity. These remain the fastest known rates for worst case dimension. 
In the non-smooth setting, upper bounds further improve when $\smash{d \leq 
(n\epsilon)^{4/3} \iff d\leq (\alpha_{\epsilon,\delta}^*)^{-4}}$, where \cite{KLL21} achieved time $\smash{O\big(\min\{\frac{n^{3/2}\epsilon}{d^{1/8}},n^{5/4}d^{1/8}\sqrt{\epsilon}\} + \frac{n^2\epsilon^2}{d}\big)}$. Up to $\epsilon$ dependence, \cite{resqueing-dp-sco} improved this rate for DP-SCO to $\smash{\tilde{O}\big(\frac{n^2\epsilon^2}{d}+n^{4/3}\epsilon^{1/3}\big)}$ in the range $d\in [n\epsilon^{2/3}, (n\epsilon)^{4/3}]$,
and rate $\tilde{O}\big(n +  \frac{(nd)^{2/3}}{\epsilon}\big)$ when 
$d<n\epsilon^2$. %
What we are left with is a complex patchwork of runtimes, with little understanding of what is or is not optimal.

Despite a large body of work on improving runtime upper bounds for DP optimization, and the importance of characterizing DP runtime repeatedly cited as an important open problem \citep{bassily2020stability, KLL21,resqueing-dp-sco}, 
oracle complexity lower bounds that leverage DP are virtually non-existent.
Our results complement past work, which has proven fundamental \textit{utility} costs to ensuring privacy, by finally showing that a class of DP optimizers incurs significant running time cost as well. To our knowledge, ours are the first oracle complexity lower bounds to leverage differential privacy beyond the local privacy model.%

\paragraph{Proxy oracles.}
Our main result is a lower bound on the oracle complexity of non-smooth ERM/SCO with access to a private ``proxy'' oracle. %
In our proxy oracle model, at round $t$, an optimizer sends $M_t$ gradient queries ($M_t$ may be adaptively chosen) to the proxy oracle. The proxy oracle then computes a minibatch of $M_t$ gradients using the true gradient oracle, and sends a response to the optimizer. By way of example, the commonly studied stochastic oracle can be considered a type of proxy oracle which responds with noisy estimates of the true gradients. Ideologically, however, stochastic oracles have generally been studied from an adversarial perspective (i.e. worst case noise) whereas we are interested in \textit{cooperative} proxy oracles, i.e. best case proxies satisfying some constraint.
Private proxy oracles are those whose messages are a $\rho$-zCDP (zero concentrated differential privacy) processing of the true oracle responses. 
We remark that even when providing approximate DP guarantees of the overall process, DP optimizers generally use zCDP oracles due to favorable ``moment's accounting'' composition guarantees \cite{abadi2016deep,truncatedCDP}. 
A common case for private oracle methods is for the proxy oracle response to be an estimate of the gradient, often perturbed by Gaussian noise, as in the canonical DP-SGD algorithm.  However, our framework allows arbitrary messages.
To make the most direct comparison to traditional oracle lower bounds, we still measure the overall oracle complexity of the algorithm in the number of calls to the \textit{true} gradient oracle (i.e. the sum of the minibatch sizes). 

Our oracle model is motivated by several factors.
Most obviously, it provides insight into design limitations of private optimizers more generally. Notably, when $d\geq (\alpha_{\epsilon,\delta}^*)^{-4}$, the fastest known method for DP-SCO is a private oracle method \cite{AFKT21}.
Further, despite algorithmic advances in DP optimization, some of which achieve linear (in $n$) time algorithms the smooth or low dimension regimes \cite{FKT20,resqueing-dp-sco}, private oracle methods remain the dominant method in practice, for both convex and non-convex settings \citep{yu21-grad-perturbation-underrated, dp-ml-survey,Cummings2024AdvancingDP}. While often not formalized so explicitly, there is considerable and ongoing effort into studying methods that fall in this class, both in theory and practice \citep{abadi2016deep, KLL21, kairouz2021-practical-dp-deep,choquettechoo2023matrixfactorization, koloskova23, ABGGMU23, menart24, choquette-choo25a}.
One useful aspect of private oracle methods is that they can provide a robust way to preserve privacy even when underlying assumptions about the loss, such as convexity or smoothness, fail to hold. 
Further, when $\rho$ is not too large, strong overall approximate DP guarantees can be obtained via privacy amplification via subsampling or the moments accountant \cite{abadi2016deep, balle2018privacy-amp-via-subsamp}. 
Techniques aside, a myriad of important scenarios naturally lend themselves to private oracle settings. One example is when the optimization procedure is being performed by an untrusted server communicating with nodes holding data, potentially from multiple individuals, as occurs in federated learning \citep{lowy2023privatefederatedlearning}. A common scenario is for the server to make gradient queries to the nodes, and thus ensuring privacy in this setting means the nodes send only privacy preserving messages about the gradients back to server.
Another example is when intermediate models obtained during optimization need to themselves be used or released. Private oracle methods provide a versatile way to guarantee privacy of the entire collection of models generated during training. 
As such, the study of private oracle methods not only serves to provide insight into private optimization more generally, but is a meaningful algorithm class in its own right. 
Further, by studying private oracle methods, we are, as our lower bound shows, able to characterize the effect of \textit{batch sizes} on private training dynamics. Our results formally show the negative impact of small batch sizes on private learning, a phenomenon which has been explored in previous work, but without a minimax characterization \citep{mckenna25a, subsampling-is-not-magic}. This contrasts sharply with the non-private setting, where lower bounds show that $\omega(1)$ batch sizes degrade runtime, at least for $d =\tilde{\Omega}(1/\alpha^4)$ \citep{bubeck19_highlyparallel}.

The private oracle model should be informally viewed as a stronger assumption than assuming a private optimizer, where only the process of producing the final solution must satisfy privacy. Strictly speaking this is not true, since a private oracle does not imply the aggregate procedure is private. For example, the optimizer could become non-private with enough calls. Further, we only assume the private oracle mechanism is DP with respect to the minibatch of \textit{gradients}, which is a weaker assumption than privacy with respect to the minibatch of losses, as multiple gradients in the minibatch could come from a single loss. Regardless, the most interesting comparison comes from methods which design a private optimizer via a private oracle with $\rho \leq 1$, as is often the case in the literature.

\subsection{Results}
\paragraph{Non-smooth loss, private oracle.} In the non-smooth setting when $d>1/\alpha^2$, we show that any optimizer interacting with a private oracle has expected running time $\Omega\big(\min\{\frac{1}{\alpha^2}(\sqrt{\frac{d}{\rho}} + \frac{d}{\bar{m}\rho}), \frac{d}{\log(1/\alpha)}\}\big)$, where $\rho$ is the zCDP privacy parameter of the proxy oracle and $\bar{m}$ is some (possibly infinite) upper bound on the batch sizes. We further show this lower bound is tight for private oracle methods when $d \geq 1/\alpha^4$ by providing a more general analysis of the DP-SGD algorithm. Even in the regime $d\in[1/\alpha^2,1/\alpha^4]$, where our lower bound is $\tilde{\Omega}(d)$, it is still stronger than the non-private lower bound of $\Omega(1/\alpha^2)$.
Of particular note is when $\rho=1$, $\alpha=\alpha^*_{\epsilon,\delta}$, and $\bar{m}=\infty$, where the lower bound is $\Omega(\frac{n^2\epsilon^2}{\sqrt{d}\log(1/\delta)})$, which nearly matches the best known upper bound for general $(\epsilon,\delta)$-DP optimizers in the regime $d\geq (\alpha^*_{\epsilon,\delta})^{-4}$.

The $\tilde{O}(d)$ limit on our lower bound can be removed for algorithms that make at most $d$ \textit{unique} queries to the private oracle. As such, one particularly relevant consequence of our result pertains to the ubiquitous DP-SGD algorithm. Given a minibatch estimate of the gradient, $g_t$, at parameters $w_t$, DP-SGD updates parameters via $w_{t+1}= \Pi_{\cW}[w_t - \eta (g_t + \cN(0,\mathbb{I}_d\sigma^2))]$.
\begin{corollary}(Informal corollary of Theorem \ref{thm:main-lb})
Let $\cA$ be an $\alpha$-accurate (for non-smooth losses) implementation of DP-SGD with batch size $m$. Then its running time is
$\Omega\bigro{\min\bigc{\frac{\sqrt{d} + d/m}{\alpha^2} , \frac{dm}{\log(1/\alpha)}}}$.
\end{corollary}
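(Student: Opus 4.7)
The plan is to recognize DP-SGD as a private-oracle method, apply the main theorem (Theorem \ref{thm:main-lb}) off the shelf, and then use the structural fact that each DP-SGD iteration issues a single proxy query to strengthen the $\tilde O(d)$-cap portion of the bound.

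DP-SGD with batch size $m$ fits the private-oracle framework exactly: at round $t$ the optimizer sends $m$ gradient queries, all evaluated at $w_t$, to the proxy oracle, which replies with $\frac{1}{m}\sum_{i\in B_t}\nabla \ell_i(w_t)+\cN(0,\sigma^2\mathbb{I}_d)$. For $\lip$-Lipschitz losses the sensitivity of the batched average to changing one gradient is $O(\lip/m)$, so the response is $\rho$-zCDP with $\rho = O(\lip^2/(m^2\sigma^2))$; the standard implicit assumption behind the corollary is that $\rho = \Theta(1)$ (a constant per-iteration budget, with further iterations handled by privacy amplification). With these identifications, Theorem \ref{thm:main-lb} applied with $\bar m = m$ and $\rho = \Theta(1)$ yields
\[
Tm \;\geq\; \Omega\left( \min\left\{ \frac{\sqrt{d} + d/m}{\alpha^2},\; \frac{d}{\log(1/\alpha)} \right\} \right),
\]
where $T$ is the number of DP-SGD iterations and $Tm$ the total number of true-gradient calls. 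This already matches the first term of the claim, so the only remaining work is to upgrade the second term from $d/\log(1/\alpha)$ to $dm/\log(1/\alpha)$.

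For that upgrade I would invoke the remark preceding the corollary: the $\tilde O(d)$ cap can be removed whenever the number of \emph{unique} private-oracle queries is at most $d$. Each DP-SGD iteration issues exactly one proxy query (at $w_t$), so the number of unique queries equals $T$. I would then split into two cases. If $T \leq d$, the strengthened form of Theorem \ref{thm:main-lb} drops the cap and gives $Tm \geq \Omega((\sqrt d + d/m)/\alpha^2)$, which is at least the claimed minimum. If instead $T > d$, then trivially $Tm > dm \geq dm/\log(1/\alpha)$, again giving the claimed minimum. Combining the two cases delivers the advertised bound. The only real obstacle is a bookkeeping one: ensuring that the corollary's version of DP-SGD exposes precisely the interface (one proxy query per iteration, all $m$ gradients at the same point, Gaussian noise calibrated to constant $\rho$) that the strengthened form of Theorem \ref{thm:main-lb} is stated for, so that ``unique proxy queries'' and ``DP-SGD iterations'' coincide as a count.
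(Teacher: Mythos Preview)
Your approach is correct and matches the paper's intended derivation: cast DP-SGD as a private-oracle method with constant $\rho$ and batch size $\bar m = m$, observe that $\bar T$ (unique query points) is at most the iteration count $T$, and do a case split on whether the hypothesis of Theorem \ref{thm:main-lb} on $\bar T$ is satisfied.

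One small correction: your case split should be at $T \le \frac{d}{c\log(1/\alpha)}$ (the actual hypothesis of Theorem \ref{thm:main-lb}), not at $T \le d$. As written, when $T$ lies between $\frac{d}{c\log(1/\alpha)}$ and $d$ you can neither invoke the theorem nor conclude $Tm > dm$. With the correct threshold both cases go through: either the theorem applies and gives $Tm = \Omega\bigl((\sqrt{d}+d/m)/\alpha^2\bigr)$, or $T > \frac{d}{c\log(1/\alpha)}$ and hence $Tm = \Omega\bigl(dm/\log(1/\alpha)\bigr)$. The paper's own remark (``at most $d$ unique queries'') is itself loosely phrased, which likely led you here; the precise condition is $\bar T \le d/(640\log(1/\alpha))$.
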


To our knowledge, outside of the local privacy model, ours are the first results which formally show that a class of private optimizers suffer worse runtime compared to their non-private counterparts. Even for the DP-SGD algorithm specifically, which is the backbone of DP optimization in practice and the subject of intense study, we are unaware of prior work formally showing runtime costs due to privacy.

\paragraph{Non-smooth loss, information limited oracle.} Our proof technique uses information theoretic tools, and thus naturally extends to proxy oracles which transmit at most $\Gamma$ bits of information to the optimizer. 
For such oracles, we show that $\Omega\big(\min\{\frac{d}{\alpha^2\Gamma}, \frac{d}{\log(1/\alpha)}\} \big)$ expected running time is necessary. Compared to classic lower bound constructions in non-smooth optimization with true oracles, which require the optimizer to discover $1/\alpha^2$ vectors/gradients embedded in the loss function, our lower bound shows that, ultimately, the optimizer must indeed use ``the entirety'' of each of the gradients. %
This result establishes fundamental limits for gradient quantization strategies in machine learning, which have received substantial interest due to the prominence of distributed gradient methods \citep{alistarh17-QSGD, stitch18-sparsified-sgd, mayekar20a-RATQ, fartash-adaptive-gradient-quantization, wang23-communication-compression-survey}.

\paragraph{Smooth loss, private optimizer.} In the smooth case, for $(\epsilon,\delta)$-DP ERM algorithms (with access to a true oracle), we show a lower bound on expected running time of $\Omega\bigro{\frac{\sqrt{d}}{\alpha\sqrt{\log(1/\delta)}} + \min\{\frac{1}{\alpha^2}, n\}}$, and provide upper bounds which show this is nearly tight. 
We note previous work has provided tight upper bounds for $\alpha = \alpha^*_{\epsilon,\delta}$ \cite{FKT20}, but by generalizing these methods, we show the lower bound is tight up to log factors for all $\alpha$. 
Our lower bound does not depend on $\epsilon$, which is necessary as our upper bounds satisfy $(\epsilon,\delta)$-DP whenever $\alpha \geq 6\alpha_{\epsilon,\delta}^*$. In other words, stronger privacy guarantees only impact the best achievable accuracy, and come at no running time cost once $\alpha$ is fixed. 
We also note that recent upper bounds show our lower bound cannot be improved via the private oracle model \cite{choquette-choo25a}, at least for optimal error. %

\paragraph{Between DP-ERM and DP-SCO.} 
\ifarxiv
In Appendix \ref{app:erm-to-sco}, we show a privacy preserving procedure which takes a $\alpha$-accurate DP-ERM algorithm, $\cA$, and using $\operatorname{polylog}(n)$ calls to $\cA$, solves DP-SCO with accuracy $\tilde{O}(\alpha + \frac{1}{\sqrt{n}})$, thus incurring only $\operatorname{polylog}(n)$ overhead in runtime, privacy, and excess risk. 
Since $\Omega(\frac{1}{\sqrt{n}})$ error is necessary in SCO, this implies that for any achievable $\alpha$, DP-SCO is no harder than DP-ERM, up to log factors. This result can in fact be obtained from existing results with minimal additional effort.
The reverse reduction was shown in \citep{BFTT19}, and we give a slightly tighter version in Appendix \ref{app:sco-to-erm}. 
For this reason, we focus on DP-ERM in this work.
As an aside, our reduction also means that when $d\leq n\epsilon^2$, accuracy optimal DP-SCO is easier than accuracy optimal DP-ERM simply because of the weaker $1/\sqrt{n}$ accuracy requirement. %
\else
In Appendix \ref{app:erm-to-sco}, we give a reduction showing that for any $\alpha$, DP-SCO is no harder than DP-ERM, up to log factors. This result follows fairly directly from combining existing results.
The reverse reduction was shown in \citep{BFTT19}. 
For this reason, we focus on DP-ERM in this work.
\fi

\subsection{Techniques}

Techniques for lower bounding oracle complexity generally  live in one of two disjoint classes. The first class is the ``vector discovery'' framework, were the loss function is randomly instantiated via a collection of $K>0$ random vectors from $\re^d$. One then tries to argue that, in order to minimize the loss function, the optimizer must observe each vector by having it returned from the oracle. 
As an example, a staple of such techniques is the so-called Nemirovski function, defined as 
$\cN(w)=\max_{j\in[K]}\bc{\ip{w}{X_j} - j\gamma}$, 
for random ``problem vectors'' $X_1,...,X_K\in\re^d$ and offset $\gamma \geq 0$. 
The second class of lower bounds use information theoretic methods. Here, the loss is randomly sampled from a distribution with high entropy. 
One then argues that loss minimization requires obtaining high mutual information with the loss (i.e. identifying the loss instance).

Our technique for non-smooth losses brings together elements from both these approaches.
We use a Nemirovski-like loss function of the form,
$\cL(w) = \max\big\{\max_{k\in[K]}\bc{|\ip{w}{X_{k}}-\alpha|}, \|\Pi_V w\|\big\},$
where $X_1,...,X_K$ are problem vectors in $\re^d$ and $\Pi_V$ is the orthogonal projection onto some subspace $V$, which is orthogonal to $\Span(X_1,\ldots,X_K)$. When $n>1$, our construction simply copies this function $n$ times. In our analysis we show that the optimizer must discover each problem vector. But in contrast to previous analysis in the vector discovery framework, ``discovery'' requires obtaining high mutual information with each $X_k$. This is in part possible  due to the presence of the regularization term, $\|\Pi_V w\|$, which will become more apparent in our proof. A crucial step in our proof is showing that the optimizer is unable to adequately learn $V$ unless it makes $\Omega(d)$ oracle queries.

On the other hand, we also diverge from existing information theoretic techniques in how we track mutual information. Previous techniques essentially upper bound $I(W; \cL)$, where $W$ is the candidate solution. Clearly, we cannot hope to show more than $I(W; \cL) \geq d\log(d/\alpha)$ is necessary, since $w^*=\min_{w\in\cW}\bc{\cL(w)}$ can be communicated in $\tilde{O}(d)$ bits. 
Our solution to this bottleneck is to instead track the sum $\sum_{k=1}^K I(X_k; W | X_{\neq k})$. Under the correct distribution, estimating $X_k$ is still a difficult/high-dimensional problem, even under knowledge of $X_{\neq k}$, allowing us to show that this sum must be $\Omega(dK)$. Crucial to this analysis is bounding the information leaked about $X_k$ when the oracle \textit{does not} return $X_k$, as each oracle response depends on all of $\cL$. 
Previous information theoretic oracle lower bounds often circumvent this issue by considering linear/simple losses, where the query point is largely irrelevant to the gradient. In our case, because we are tracking the conditional information $I(X_k;W|X_{\neq k})$, when $\cO(w)$ does not return $X_k$, we can show the oracle response is just the answer to a $\log(K)$ bit question about $X_k$, and thus leaks little information. This information is small enough that we can then argue the only efficient way to obtain information about $X_k$ is by making oracle queries that return $X_k$. 

For proxy oracles with bounded information capacity, even oracle responses which return $X_k$ may not reveal sufficient information to estimate it. We can then use the above ideas to lower bound the runtime by lower bounding the number of times the optimizer must get the oracle to return $X_k$. In the private case, controlling the amount of information leaked is more technical. While it is true that $\rho$-zCDP mechanisms run on a dataset of (random) size $M$ leak at most $\rho \E[M^2]$ bits, such a bound is too weak to achieve our lower bounds, and a more careful accounting of the information gain must be used.%

\subsection{Related work}
To our knowledge, all existing work which leverages privacy for oracle complexity lower bounds considers the \textit{local} model of privacy, of which the most relevant is \cite{acharya-info-constrained-opt}.  
In the language of our framework, they study the case where the private oracle always has batch size $1$, satisfies $\epsilon$-pure differential privacy, and itself only has access to a stochastic oracle, rather than a true oracle. In this setting, they show $\Omega(\frac{d}{\alpha^2\epsilon^2})$ queries are necessary. In the same setting but with an $\Gamma$-information limited oracle they show a lower bound of $\Omega(\frac{d}{\alpha^2\Gamma})$. 
Their assumption that the private/information limited oracle only has access to a stochastic oracle is significant, and without it their lower bound would lose its polynomial dependence on $\alpha$, as the loss they consider is linear, and so only $\tilde{O}(d)$ bits need to be transmitted before the optimizer can obtain the solution. %

Outside of privacy, there is a substantial literature on oracle complexity bounds. %
For finite sums (i.e. $n > 1$), and sufficiently large dimension, \cite{WS16} proved a complexity lower bound in the non-smooth case of $\tilde{\Omega}\bigro{\min\{\frac{1}{\alpha^2}, n + \frac{\sqrt{n}}{\alpha} \}}$ and, in the smooth case, $\tilde{\Omega}(\min\{\frac{1}{\alpha^2},n + \sqrt{\frac{n}{\alpha}}\})$, which are nearly tight \cite{SVRG}. Their loss construction is very distinct from ours. 
Closer constructions to ours can be found in \cite{bubeck19_highlyparallel} and \cite{MSSV22}, which study the oracle complexity of highly parallelized and memory limited optimization respectively.
Their loss constructions resembles ours in the sense that they use a Nermivoski-like function combined with a ``regularizer'', but in both cases the form and analysis of their regularizer differs substantially from ours.
Examples of works using information theoretic techniques include \citep{Agarwal:2012,BGP17, gopi2022private}.
However, the source or hardness in all these methods come from the difficulty of estimating (at most) $\tilde{O}(d)$ bits, making them fundamentally distinct from our method. We note also \cite{BGP17} considered algorithms with randomized running time, as we do. 

The technique for our lower bound in the smooth case is more related to work on lower bounds for DP mean estimation \citep{BUV14, DSSUV15}. A reduction between (smooth) optimization and mean estimation was shown in \cite{BST14}.

Finally, outside of \citep{acharya-info-constrained-opt}, other works have studied the complexity implications of communication limitations in optimization 
\cite{mayekar20-gradient-compression,mayekar20a-RATQ, huang22, salgia2025characterizingaccuracycommunicationprivacytradeoffdistributed}, but these works still assume the proxy oracle only has access to a stochastic oracle and quantizes only a single gradient. The works \citep{AS15, woodworth21a, scaman19} studied distributed optimization lower bound under different communication constraints, which are not directly comparable to information capacity limitations.

\section{Preliminaries}
\paragraph{Notation.}
We let $\cB(r)$ denotes the $d$-dimensional Euclidean ball
with radius $r$ centered at zero.
For a set of vectors $S$, $\Pi_S$ denotes the orthogonal projection onto $\Span(S)$ and $\Pi_S^\perp$ denotes the projection onto the orthogonal complement; $\Pi_{S,S'}$ is the projection onto $\Span(S)\cup \Span(S')$. When $\cW$ is a compact set, we use $\Pi_\cW$ as the projection onto $\cW$ itself. We let $\rho_X$ denote the law of $X$ and use $\rho(x)$ when disambiguation is obvious by context.
For a collection of random variables, $A_1,...,A_T$, we denote $A=[A_1,...,A_T]$, $A_{\leq t} = [A_1,\ldots, A_t]$ and similarly for $A_{< t}$ and $A_{\neq t}$. 
We define $\alpha_{\epsilon,\delta}^* := \frac{\rad\lip\sqrt{d\log(1/\delta)}}{n\epsilon}$.

\paragraph{Information theory.}

For a discrete random variables $X$ and $Y$, the entropy and mutual information is defined as $H(X)=\sum_x x \log(1/\rho(x))$ and $I(X;Y)=\sum_x \sum_y \rho(x,y)\log\big(\frac{\rho(x,y)}{\rho(x)\rho(y)}\big)$ respectively. %
We take log to be the natural logarithm, such that entropy/information is measure in nats.
For arbitrary random variables, the more general definition can be used;
$I(X;Y) = \sup_{\cP,\cQ}\bc{I([X]_\cP; [Y]_{\cQ})}$,
where the supremum is over all finite partitions of the support, and $[X]_{\cP}$ denotes the quantization of $X$ via the partition of its support, $\cP$  \citep{cover-thomas-elements}.
We define the information capacity of a function as follows.
\begin{definition} [Information Capacity \cite{cover-thomas-elements}]
The information capacity, $\Gamma$, of a randomized function, $f:\cX\mapsto \cY$, is   $\Gamma=\max\limits_{\rho_X}\bc{I(f(X);X)}$, where the maximum is over all distributions supported on $\cX$.
\end{definition}
The most basic example of a function with information capacity at most $\Gamma$ is one whose range is $\{0,1\}^\Gamma$. 
We will also frequently use the fact that the $\alpha$-packing number of $\cB(r)$, i.e. the size of the largest set of vectors $\cV\subseteq \cB(r)$ such that $\forall v,v'\in\cV: \|v-v'\|\geq \alpha$, lies in $[(r/\alpha)^d, (3r/\alpha)^d ]$ \citep{vershynin_hdp}.

\paragraph{Differential privacy.}
An algorithm $\cA$ is \textit{$(\epsilon,\delta)$-differentially private} if for all datasets $S$ and $S'$ differing in one data point and all events $\cE$ in the range of the $\cA$, we have, $\mathbb{P}\bs{\cA(S)\in \cE} \leq     e^\epsilon \mathbb{P}\bs{\cA(S')\in \cE}  +\delta$ \citep{dwork2006calibrating}.
An algorithm $\cA$ is \textit{$\rho$-zero concentrated differentailly private (zCDP)} if for all datasets $S$ and $S'$ differing in one data point and all $\alpha \in (1,\infty)$, it holds that $D_{\alpha}(\cA(S)||\cA(S')) \leq \rho\alpha$, where 
$D_{\alpha}(X||Y) = \frac{1}{\alpha-1}\int \rho_X(x)^\alpha \rho_Y(x)^{1-\alpha} dx$ denotes the $\alpha$-R\'enyi divergence \citep{Bun-zCDP}.

\paragraph{First order optimization.} We consider the problem of minimizing finite sum losses. For a set of $n>0$ losses $\cL=\{\ell_1,...,\ell_n\}$, with some abuse of notation we let  $\cL(w) = \frac{1}{n}\sum_{\ell\in\cL} \ell(w)$. We consider the case where each $\ell_i$ is $\lip$-Lipschitz and possibly also smooth, over a compact convex set $\cW\subset \re^d$ of diameter at most $\rad$. Denoting $w^* = \argmin_{w\in\cW}\bc{\cL(w)}$, we define the suboptimality/excess empirical risk of $w$ as $\cL(w)-\cL(w^*)$. 
In the DP-SCO problem, we assume $\bc{\ell_1,...,\ell_n}\sim\cD^n$ for some distribution $\cD$, and call 
$\E_{\ell\sim\cD}[\ell(w)] - \argmin_{w'\in\cW}\{\E_{\ell\sim\cD}[\ell(w')]\}$ 
the excess population risk. 
First order oracles are a common way to model the interaction between the optimizer and loss function.
\begin{definition}[First Order Oracle]
For losses $\cL=\{\ell_1,...,\ell_n\}$, a first order oracle $\cO=\cO_\cL$
is a function satisfying $\cO(r,i)\in \bc{(\ell_i(r),g): g\in \nabla \ell_i(r)}$; here $\nabla$ denotes the subgradient.
\end{definition}
In our work, we also consider optimizers of the form of Algorithm \ref{alg:private-oracle-optimizer} interacting with a proxy oracle.
\begin{definition}[Proxy Oracle]
Given an oracle $\cO$, a (first order) proxy oracle $\tilde{\cO}$ is an algorithm of the form given by Algorithm \ref{alg:proxy-oracle}. For some range $\cY$ and any side information $\bot$, it is uniquely defined by the set of possibly randomized response functions $\tilde{\cO}_{\bot}: (\re\times \re^d)^* \mapsto \cY$.
\end{definition}
We emphasize that the response of the proxy oracle need not be an estimate of the gradient, or even a vector in $\re^d$. As examples, the oracle could return an estimate of gradient variation (see e.g. \cite{ABGGMU23}), a sketch of the entire gradient minibatch, or even updated model parameters.
When every $\tilde{\cO}_\bot$ has information capacity at most $\Gamma$, we call the oracle $\Gamma$-information limited. When every $\tilde{\cO}_\bot$ is a $\rho$-zCDP mechanism (with respect to its dataset of gradients), we call $\tilde{\cO}$ a $\rho$-\textit{private oracle}. 

By way of example, methods which ensure the optimizer itself is DP by either 1) sampling disjoint minibatches and applying parallel composition or 2) sampling random minibatches and applying advanced composition both fall neatly into the private oracle model. Observe however, that the private oracle model places no restrictions on data reuse or how minibatches are sampled. In this way, it is possible to construct private oracle methods which do not satisfy central DP. It is perhaps surprising that our lower bound still holds for such methods, and arises merely from leveraging the fact that information about the loss passes through a differentially private channel.

\todonote{Expand on data domain condition for private oracle mechanism?}

In the protocol defined by Algorithms \ref{alg:private-oracle-optimizer} and \ref{alg:proxy-oracle} we refer to a \textit{batch} as the set of non-adaptive queries made at some iteration $t\in[T]$. The non-adaptivity assumption is necessary both for this framework to be meaningful and for our lower bounds to hold; otherwise, one could push the entirety of any optimization algorithm into one call of the proxy oracle. In this case, $\cA$ only needs to be a differentially private aggregator of some dataset of $T$ gradients, which is much weaker than even assuming $\cA$ is a private optimizer. Consequently, the Phased SGD method of \cite{FKT20} would yield a faster algorithm; see Appendix \ref{app:smooth-ub}.
\begin{algorithm}[h]
\caption{\textit{Interaction protocol for optimizer, $\cA$, and proxy oracle, $\tilde{\cO}$.}}
\label{alg:private-oracle-optimizer}
\begin{algorithmic}[1]
\REQUIRE Lipschitz parameter $\lip$, Constraint set $\cW$ of diameter at most $\rad$

\STATE Set $t=1$ %

\WHILE{$\cA$ chooses to continue}

\STATE For $M_t\geq 0$, choose $M_t$ queries, $(R_{t,1},I_{t,1}),...,(R_{t,M_t},I_{t,M_t}) \in \re^d\times [n]$ to send to $\tilde{\cO}$

\STATE Receive $Y_t$ from $\tilde{\cO}$

\STATE $t = t+1$
\ENDWHILE

\STATE $T= t-1$, and let $\bar{T}$ be the number of unique vectors in $\bc{R_{t,l}}_{t\in[T],l\in M_t}$ \hfill\textit{(for analysis only)}

\textbf{Output: } $\cA$ releases solution: $\out \in \cW$

\end{algorithmic}
\end{algorithm}

\begin{algorithm}[h]
\caption{\textit{Proxy oracle, $\tilde{\cO}$.}}
\label{alg:proxy-oracle}
\begin{algorithmic}[1]
\REQUIRE Batch size $m>0$, Queries $\bc{(R_{t,l},I_{t,l})}_{l=1}^{m}$%
, Iteration $t>0$, Side information $\bot$

\STATE Compute first order information $G_t=\bc{G_{t,1},...,G_{t,m}}$ where $G_{t,l} = \cO(R_{t,l},I_{t,l})$

\textbf{Output: } $Y_t = \tilde{\cO}_\bot(G_{t,1},...,G_{t,m
})$
\end{algorithmic}
\end{algorithm}

\paragraph{Runtime characterization.}
For any $\rad,\lip,\beta \in \re^+\cup\bc{\infty}$, let $\cF_{\lip,\beta}$ denote the set of all $\lip$-Lipschitz $\beta$-smooth loss functions over $\re^d$ and $\cK_\rad$ denote the collection of all convex sets inside $\cB(\rad)$ \footnote{We consider constrained optimization, but note that analogous result for the unconstrained case are generally obtainable via standard techniques. See \cite[Appendix D.3]{MSSV22} for example.}.
Let
$\instanceruntime(\cA,\tilde{\cO}, \cO_\cL, \cW,\alpha)$ 
denote the expected running time of $\cA$ (measured in the number of evaluations of the true oracle $\cO_{\cL}$) needed to achieve expected suboptimality $\alpha$ on $\cL$, when run with (proxy) oracle $\tilde{\cO}$ on constraint set $\cW$. We then define,
\begin{align*}
\worstruntime(\cA,\tilde{\cO},\alpha,\rad,\lip,\beta) = \sup_{\cW\in\cK_\rad}\sup\limits_{\cL \in \cF^n_{\lip,\beta}}\sup_{\cO_\cL}\bc{\instanceruntime(\cA,\tilde{\cO}, \cO_\cL,\cW,\alpha)},
\end{align*}
where the supremum over $\cO_\cL$ is taken over valid true oracles for $\cL$; note the only flexibility is in how the oracle resolves the subgradient. 
We will consider different classes of algorithms throughout, and so it is useful for notation to omit for now any  quantifiers over $\cA,\tilde{\cO}$ that would specify minimax complexity. 
We say an algorithm is $\alpha$-accurate for $(\cF,\cK)$ if for any $\cL\in\cF$ and $\cW\in\cK$ it yields a solution with expected suboptimality at most $\alpha$.

\section{Non-smooth optimization with private oracles} \label{sec:nonsmooth}
Our main result is a lower bound on the oracle complexity of optimization via private oracles in the large scale regime (i.e. $d \geq 1/\alpha^2$). Compared to optimization with access to a true oracle, whose complexity is $\Theta(1/\alpha^2)$ in this regime, our lower bound shows that optimization via a best-case private oracle incurs a dimension dependent runtime penalty. 
 \begin{theorem} \label{thm:main-lb}
Let $C_2$ be a universal constant. Let $\cA$ be any optimizer satisfying the form of \mbox{Algorithm \ref{alg:private-oracle-optimizer}} and $\ex{}{\bar{T}} \leq \frac{d}{640\log(\rad\lip/\alpha)}$. Let $\tilde{\cO}$ be any proxy oracle such that each $\tilde{\cO}_{\bot}$ is $\rho$-zCDP. 
If
$d \geq \frac{C_2\rad^2\lip^2}{\alpha^2}$ then, 
\ifarxiv
\begin{align*}
    \worstruntime(\cA,\tilde{\cO},\alpha,\rad,\lip,\infty) = \Omega\br{\frac{\lip^2\rad^2\sqrt{d}}{\alpha^2\sqrt{\rho}}}.
\end{align*}
\else
$\worstruntime(\cA,\tilde{\cO},\alpha,\rad,\lip,\infty) = \Omega\br{\frac{\lip^2\rad^2\sqrt{d}}{\alpha^2\sqrt{\rho}}}.$
\fi
    If for some $\bar{m}>0$, $\|M\|_\infty \leq \bar{m}$ w.p. $1$, then additionally, 
\ifarxiv
\begin{align*} 
    \worstruntime(\cA,\tilde{\cO},\alpha,\rad,\lip,\infty) = \Omega\br{\frac{\lip^2\rad^2 d}{\alpha^2 \bar{m}\rho}}.
\end{align*}
\else
    $\worstruntime(\cA,\tilde{\cO},\alpha,\rad,\lip,\infty) = \Omega\br{\frac{\lip^2\rad^2 d}{\alpha^2 \bar{m}\rho}}.$
\fi
 \end{theorem}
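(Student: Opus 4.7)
The plan is to construct a randomized Nemirovski-plus-regularizer hard instance and then track a conditional mutual-information potential across rounds, rather than the usual $I(\out;\cL)$. I would take $K = \Theta(\rad^2\lip^2/\alpha^2)$, draw $X_1,\dots,X_K$ i.i.d.\ uniformly on $\mathbb{S}^{d-1}$, let $V = \Span(X_1,\dots,X_K)^\perp$, and define
\[
\cL(w) = \lip \cdot \max\Bigl\{\max_{k\in[K]}\bigl|\ip{w}{X_k} - \alpha/\lip\bigr|,\; \|\Pi_V w\|\Bigr\}
\]
on $\cB(\rad)$. The hypothesis $d \geq C_2\rad^2\lip^2/\alpha^2$ makes $d \geq K$ and ensures feasibility of $w^\star = (\alpha/\lip)\sum_k X_k$, which attains $\cL(w^\star) = 0$. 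An $\alpha$-suboptimal output $\out$ must then satisfy $|\ip{\out}{X_k} - \alpha/\lip| \leq \alpha/\lip$ for each $k$ and $\|\Pi_V\out\| \leq \alpha/\lip$. The finite-sum case is handled by replicating this loss $n$ times independently.

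Next I would lower bound $\Phi := \sum_{k=1}^K I(X_k ; \out \mid X_{\neq k})$ by $\Omega(dK)$. The regularizer forces $\out$ to lie essentially inside $V^\perp = \Span(X_1,\dots,X_K)$, while the accuracy condition $\ip{\out}{X_k} \approx \alpha/\lip > 0$ then forces the component of $\out$ inside $\Span(X_{\neq k})^\perp$ to be nontrivially aligned with $\Pi_{X_{\neq k}}^\perp X_k$. Conditioned on $X_{\neq k}$, this direction is uniform on a sphere of dimension $\Theta(d)$, so pinning it down to constant angular precision costs $\Omega(d)$ nats via a packing/Fano argument, giving $I(X_k;\out\mid X_{\neq k}) = \Omega(d)$ per $k$. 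The $\ex{}{\bar T} \leq d/(640\log(\rad\lip/\alpha))$ hypothesis is what rules out the alternate strategy of satisfying $\|\Pi_V\out\|\leq\alpha/\lip$ by directly learning $V$: any such identification of $V$ would require the span of unique queries to cover $\Omega(d)$ independent directions, contradicting the bound on $\bar T$.

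The core technical step is a matching upper bound on $\Phi$ in terms of $M=\sum_t M_t$ and $\rho$. In round $t$, $Y_t$ is a $\rho$-zCDP function of the batch $G_t=(G_{t,1},\dots,G_{t,M_t})$, and each $G_{t,l}$ either ``witnesses'' some $X_k$ (returns $\pm\lip X_k$) or lies in $V$. Fix $k$ and let $N_{t,k}$ count witnesses of $X_k$ in batch $t$. Rounds with $N_{t,k} = 0$ leak only $O(\log K)$ nats about $X_k$ given $X_{\neq k}$ (only the identity of the arg-max branch is new information), and summing across rounds is absorbed into the $\bar T$ budget. For rounds with $N_{t,k}\geq 1$, I would apply a $\rho$-zCDP group-privacy argument combined with a R\'enyi-to-Shannon conversion that exploits the independence of $X_k$ from $X_{\neq k}$, aiming for a bound strictly sharper than the naive $\rho N_{t,k}^2$ group bound.

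Combining, two complementary accountings produce the two claimed runtimes. Under bounded batches $M_t\leq\bar m$, the per-round quadratic group bound $\rho N_{t,k}^2\leq \rho\bar m N_{t,k}$ summed via $\sum_{t,k}N_{t,k}\leq M$ gives $\Phi \lesssim \rho\bar m M$, which matched against $\Phi=\Omega(dK)$ yields $M = \Omega(d\rad^2\lip^2/(\alpha^2\bar m\rho))$. Without a batch bound I would instead use a $\sqrt{\rho}$-in-information form of zCDP (a root-KL bound on the mechanism) and apply Cauchy--Schwarz across both rounds and indices $k$, ultimately giving $\Phi \lesssim M\sqrt{\rho d}$, hence $M = \Omega(\sqrt d\rad^2\lip^2/(\alpha^2\sqrt\rho))$. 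The hardest step, and the expected main obstacle, is the per-round zCDP information accounting: getting a bound strictly better than the off-the-shelf $\rho N_{t,k}^2$ group-privacy penalty, while at the same time handling the fact that the regularizer branch of the loss depends on $V$ and hence subtly on $X_k$, and arranging the two Cauchy--Schwarz applications to deliver the exact pair of exponents in the theorem without polynomial slack.
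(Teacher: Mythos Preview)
Your high-level plan --- a Nemirovski-plus-regularizer instance, tracking $\sum_k I(X_k;\out\mid X_{\neq k})$, and separately upper- and lower-bounding this potential --- is the paper's architecture. Two of your steps, however, would not go through as written.

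\textbf{The regularizer cannot depend on $X_k$.} You set $V=\Span(X_1,\dots,X_K)^\perp$, so $V$ is a deterministic function of all the $X_k$'s. This breaks your claim that rounds with $N_{t,k}=0$ leak only $O(\log K)$ nats about $X_k$ given $X_{\neq k}$: when the oracle returns the regularizer branch, the gradient $\Pi_V w/\|\Pi_V w\|$ lies in $V$, and $V$ depends on $X_k$ even after conditioning on $X_{\neq k}$. More seriously, the zCDP group-privacy count $N_{t,k}$ no longer controls the sensitivity of the batch to $X_k$, because perturbing $X_k$ perturbs $V$ and hence \emph{every} regularizer gradient in the batch, not just the $N_{t,k}$ witnesses. (And if instead you condition on $V$ as well, then given $X_{\neq k}$ and $V$ the vector $X_k$ is pinned to a one-dimensional subspace, so the Fano lower bound collapses.) The paper avoids this by sampling $V$ as a uniformly random $d/2$-dimensional subspace \emph{independent} of $X$, drawing $X_1,\dots,X_K$ orthonormal inside $V^\perp$, and conditioning on both $V$ and $X_{\neq k}$ in the information accounting. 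The price is that $V^\perp$ now strictly contains $\Span(X)$, leaving an ``unpenalized'' subspace where $\out$ is not directly controlled by the loss; the paper handles this with a separate Johnson--Lindenstrauss argument (Lemma~\ref{lem:regularization-component-main}) showing that, because the posterior of $V$ given the transcript is still a uniform subspace of dimension $\Omega(d)$, the unpenalized component of $\out$ cannot much exceed the penalized $\|\Pi_V\out\|$. Your intuition that the $\bar T$ bound ``rules out learning $V$'' is correct in spirit, but this JL step is exactly where that intuition must be made precise.

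\textbf{The unbounded-batch case needs an entropy cap, not Cauchy--Schwarz.} Your proposed ``root-KL plus Cauchy--Schwarz'' route does not produce $\Phi\lesssim M\sqrt{\rho d}$: Cauchy--Schwarz between $\sum_t N_{t,k}$ and $\sum_t N_{t,k}^2$ goes the wrong direction, and there is no $\sqrt{\rho}$-type per-gradient information bound that rescues it. The paper's device is to work with a \emph{discretized} target $\hat X_k$ (the nearest point in a fixed $\alpha$-net), so that $H(\hat X_k\mid X_{\neq k},V)=O(d)$ supplies a per-round cap $I(Y_t;\hat X_k\mid\cdots)\leq O(d)$ in addition to the group-privacy bound $\cnt_k(Q_t)^2\rho$. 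Taking the minimum replaces $\cnt_k(Q_t)$ by the truncated count $\capcnt_k(Q_t)=\min\{\cnt_k(Q_t),\sqrt{O(d)/\rho}\}$, after which $\sum_t\capcnt_k(Q_t)^2\rho\geq\Omega(d)$ together with $\capcnt_k\leq\sqrt{O(d)/\rho}$ immediately gives $\sum_t\capcnt_k(Q_t)\geq\Omega(\sqrt{d/\rho})$ --- no Cauchy--Schwarz needed. Summing over $k$ then yields $\|M\|_1\geq\Omega(K\sqrt{d/\rho})$. This truncation-via-entropy is the missing idea in your unbounded-batch accounting.
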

 
Before starting the proof, we provide some discussion.
The unique query assumption expands the applicability of the lower bound when considering algorithms such as DP-SGD, which query the oracle many times at a single point each iteration. Observe as a simple corollary however, that we can drop the assumed bound on $\ex{}{\bar{T}}$ and have the lower bound  
$\Omega\big(\min\{\frac{\sqrt{d}}{\alpha^2\sqrt{\rho}}, \frac{d}{\log(1/\alpha)} \}\big)$, and similarly in the case where $\bar{m}$ is bounded \footnote{We will omit factors of $\rad$ and $\lip$ in our discussions for simplicity. They can be obtained by replacing $\alpha$ with $\alpha/(\rad\lip)$}. Also, the same lower bound holds for DP-SCO via a reduction; see Appendix \ref{app:sco-to-erm}.
While it is not clear whether the $\Omega(d/\log(1/\alpha))$ term in this bound is tight, upper bounds from \cite{KLL21} show that for some regime of $\alpha$ and $\rho$, the bound must be weaker than $\frac{\sqrt{d}}{\alpha^2\sqrt{\rho}}$ when $d$ is roughly less than $1/\alpha^4$. Specifically, \cite[Theorem 4.11]{KLL21} provides an algorithm which, for any $\epsilon,\delta\in[0,0.5]$, is $\alpha_{\epsilon,\delta}^*$-accurate, uses a $\rho'$-zCDP oracle with $\rho'=(\frac{\epsilon}{\log(1/\delta)})$, and runs in time $O\bigro{\frac{n^{3/2}\epsilon}{d^{1/8}\log^{1/4}(1/\delta)} + \frac{n^2\epsilon^2}{d\log(1/\delta)}}$.
This is faster than $\frac{\sqrt{d}}{(\alpha_{\epsilon,\delta}^*)^2\rho'}$ %
when $d < \log(1/\delta)/{(\alpha_{\epsilon,\delta}^*)^4}$.

The following upper bound shows the lower bound is tight when $d \geq \frac{\log^2(1/\alpha)}{\alpha^4\rho}$ and $\bar{m} \geq 1/(\alpha^2\rho)$.

\begin{theorem}\label{thm:main-ub}
Let $\alpha,\bar{m},\rho>0$. 
There exists an algorithm of the form given by Algorithm \ref{alg:private-oracle-optimizer} which, using a $\rho$-zCDP proxy oracle, is $O(\alpha)$-accurate for $(\cF_{\lip,\infty}^n,\cK_\rad)$, and runs in at most $O\bigro{\frac{B^2L^2}{\alpha^2}\big(\frac{\sqrt{d}}{\sqrt{\rho}} + \frac{d}{\bar{m}\rho}\big)}$ gradient computations. %
Further, for $\epsilon,\delta \in [0,1]$,
the algorithm is $(\epsilon,\delta)$-DP 
when run with parameters 
$\alpha \geq 26\alpha^*_{\epsilon,\delta}$ 
and $\rho = \frac{1}{\log(1/\delta)}$.
\end{theorem}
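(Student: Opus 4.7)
The plan is to instantiate a version of noisy projected SGD with Gaussian mechanism inside Algorithm \ref{alg:private-oracle-optimizer}, with the batch size $m$ tuned to minimize total oracle calls, then verify both convergence and privacy. At iteration $t$, the optimizer sends $m$ queries of the form $\{(w_t,i_t^l)\}_{l=1}^m$, where the indices are obtained by Poisson subsampling at rate $q=m/n$ (or uniform random sampling from $[n]$). The proxy oracle computes $g_t = \tfrac{1}{m}\sum_l \nabla \ell_{i_t^l}(w_t)$ and returns $\tilde g_t = g_t + \xi_t$ with $\xi_t \sim \cN(0, \sigma^2 I_d)$ and $\sigma^2 = 2\lip^2/(m^2\rho)$. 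Since the sum of gradients has $\ell_2$-sensitivity at most $2\lip$, this calibration makes each invocation of $\tilde{\cO}_\bot$ exactly $\rho$-zCDP. The optimizer then updates $w_{t+1}=\Pi_\cW(w_t - \eta \tilde g_t)$, runs for $T$ steps, and outputs $\bar w = \frac{1}{T}\sum_{t=1}^T w_t$, with $m,\eta,T$ chosen below.

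For accuracy, I will invoke the standard convergence bound for projected SGD on Lipschitz convex losses given an unbiased oracle with bounded second moment. Since $\|g_t\|\leq \lip$ (mean of vectors of norm $\le \lip$), independence of the Gaussian noise gives $\E[\|\tilde g_t\|^2 \mid w_t]\le \lip^2 + d\sigma^2 = \lip^2\bigl(1 + \tfrac{2d}{m^2\rho}\bigr)$, and so the standard choice $\eta=\rad/\sqrt{T\,\E[\|\tilde g_t\|^2]}$ yields
\begin{equation*}
\E[\cL(\bar w) - \cL(w^*)] \;\le\; \frac{\rad\lip}{\sqrt{T}}\sqrt{1+\frac{2d}{m^2\rho}}.
\end{equation*}
Forcing the RHS below $\alpha$ requires $T = O\!\bigl((\rad\lip/\alpha)^2(1+d/(m^2\rho))\bigr)$, hence total true-oracle calls $mT = O\!\bigl((\rad\lip/\alpha)^2(m + d/(m\rho))\bigr)$. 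Minimizing over $m\in[1,\bar m]$ picks $m=\min\{\bar m,\sqrt{2d/\rho}\}$; a direct case split yields $mT = O\!\bigl((\rad\lip/\alpha)^2(\sqrt{d/\rho}+d/(\bar m\rho))\bigr)$ in both regimes, matching the claimed complexity.

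For privacy, I will use Rényi-DP composition with amplification by subsampling, applied to the Gaussian mechanism with noise standard deviation $\sigma_0 = \lip\sqrt{2/\rho}$ on a $2\lip$-sensitive query. The standard bound for the subsampled Gaussian mechanism (e.g., the moments accountant) gives that $T$ compositions at sampling rate $q=m/n$ satisfy $(\epsilon',\delta)$-DP with $\epsilon' = O\!\bigl(q\sqrt{T\log(1/\delta)}/\sigma_0\bigr) = O\!\bigl((m/n)\sqrt{T\rho\log(1/\delta)}\bigr)$ whenever $q,\sigma_0^{-1}$ are small enough. Substituting the chosen $m$ and $T$ from the accuracy step, together with $\rho=1/\log(1/\delta)$, both the $\bar m\ge \sqrt{d/\rho}$ and $\bar m<\sqrt{d/\rho}$ cases collapse to $\epsilon' = O\!\bigl(\rad\lip\sqrt{d\log(1/\delta)}/(n\alpha)\bigr) = O(\alpha^*_{\epsilon,\delta}\epsilon/\alpha)$; enforcing $\alpha \ge 26\alpha^*_{\epsilon,\delta}$ drives the hidden constant below $1$ and gives $\epsilon'\le\epsilon$.

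The main technical obstacle is the privacy accounting: to hit the explicit constant $26$ in the threshold $\alpha \geq 26\alpha^*_{\epsilon,\delta}$ one must track the subsampled-Gaussian RDP curve directly (via the tight $\log$-MGF bound for mixtures of two Gaussians) and then convert once at the end, rather than converting per-call from zCDP to $(\epsilon,\delta)$-DP and applying advanced composition, which would cost an additional $\sqrt{\log(1/\delta)}$ factor and break the clean $\alpha^*_{\epsilon,\delta}$ scaling. A minor care-point is ensuring that the required small-$q$ and large-$\sigma_0$ regime of the subsampled-Gaussian amplification lemma is in force under our parameter choices, which follows from $\alpha \geq 26\alpha^*_{\epsilon,\delta}$ and $\epsilon,\delta \le 1$; the rest is a direct substitution.
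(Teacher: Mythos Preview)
Your proposal is correct and essentially follows the paper's approach: instantiate DP-SGD (Algorithm~\ref{alg:NMSGD}) with batch size $m=\min\{\bar m,\sqrt{d/\rho}\}$, use the standard noisy-SGD convergence bound for accuracy, and then obtain $(\epsilon,\delta)$-DP via subsampling amplification plus composition. The only notable difference is the privacy-accounting framework: the paper works in truncated CDP (using the subsampling amplification and composition results of \cite{truncatedCDP}, then converting once to $(\epsilon,\delta)$-DP), whereas you propose the R\'enyi-DP/moments-accountant route; both yield the same $\epsilon'=O(\alpha^*_{1,\delta}/\alpha)$ scaling after setting $\rho=1/\log(1/\delta)$, and both require the once-at-the-end conversion you flag to avoid the extra $\sqrt{\log(1/\delta)}$ factor.
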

The algorithm in question is simply DP-SGD with a careful tuning of the hyperparameters; see Appendix \ref{app:NMSGD-Upper-Bound} for a description and proof.
Further, using our reduction in Appendix \ref{app:erm-to-sco}, this result implies essentially the same upper bound for DP-SCO for any $\alpha \geq 1/\sqrt{n}$.
Note the running time does not depend on the final desired choice of $\epsilon$. Rather, the running time only indirectly depends on $\epsilon$ in that $\epsilon$ affects the minimum achievable error. 
Furthermore, at $\alpha=\Theta(\alpha^*_{\epsilon,\delta})$, the running time is $O\bigro{\frac{n^2\epsilon^2}{\sqrt{d\log(1/\delta)}}}$, which we note improves upon the previous best ERM rates (implicit in \cite{BFTT19,AFKT21}) by a $\sqrt{\epsilon}$ factor.

\begin{remark}
The discrepancy between the privacy notion used for the proxy oracle, zCDP, and the notion used for the final guarantee of DP-SGD, approximate DP, stems from zCDP's inability to be amplified via subsampling and the poor group privacy properties of approximate DP. 
Regardless, most algorithms in the literature use zCDP mechanism even when providing approximate DP guarantees for the overall algorithm. In part, this is because it enables composition guarantees which are tighter than what one would obtain with an approximate DP oracle \cite{abadi2016deep, truncatedCDP}. Regardless, both our upper and lower bounds can be rephrased using the notion of truncated CDP, which is weaker than zCDP and stronger than approximate DP. We provide these details to Appendix \ref{app:tCDP-extension}. Whether the relaxation provided by an approximate DP oracle is meaningful enough to provide stronger upper bounds is a possible direction for further research.
\end{remark}

\subsection{Proof of Theorem \ref{thm:main-lb}} \label{sec:main-proof}
It suffices to consider the case when $\lip=2$, $\rad=1$. This is because, by a standard rescaling reduction, if $\cA$ is $\alpha$-accurate for $(\cF_{\lip,\infty}^n,\cK_\rad)$, 
it can be used to obtain an algorithm which is $\alpha/(\rad\lip)$-accurate for $(\cF_{1,\infty}^n,\cK_1)$. See Fact \ref{fact:rescaling} in Appendix \ref{app:extra-lemmas}.

\paragraph{Hard problem instance.} \label{sec:loss-construction}
To prove our result, we construct a hard distribution over loss functions.
Let $\cW=\cB(1)$. Letting $\ipconst=480$, set $K := \frac{1}{\ipconst^2 \alpha^2}$. We will sample $\bsV$ as a uniformly random $d/2$-dimensional subspace and $X=\bc{X_1,...,X_K}$ as a uniformly random set of orthonormal vectors sampled orthogonal to $\bsV$. Note this is possible since we have assumed $d \geq C_2/\alpha^2$.
We then define
\begin{align*}
    \ell_i(w) = \ell(w) :=\max\big\{\max_{k\in[K]}\bc{f_k(w)}, h(w)\big\}, \quad \forall i \in [n],
\end{align*}
where for all $k\in[K]$,
\begin{align*}
    f_k(w) = |\ip{w}{X_{k}}-\ipconst\alpha|, && \text{ and } && h(w) = 2\|\Pi_{\bsV} w\|.
\end{align*}

As the loss construction of interest is the same for each $i\in[n]$, from here on we will ignore the query indices, $\bc{I_{t,l}}_{t\in[T],l\in[M_t]}$, in the queries made the oracle. At any $w\in\re^d$, we have the true oracle return $(f_k(w),\nabla f_k(w))$ for the smallest valid choice of $k$, and $(h(w),\nabla h(w))$ if no $k$ is valid.

\paragraph{Proof notation. }Before proceeding with the proof, we will need additional notation.
First, we extend the random variables defined in Algorithms \ref{alg:private-oracle-optimizer} and \ref{alg:proxy-oracle} by defining $Y_t = 0$ for $t> T$ and similarly for $R_t$, $G_t$, and $M_t$.
In the following, let $\Y=\bc{\Y_1,Y_2,...}$ and similarly for $M$,$X,G$ and $R$.  
Let $\bc{Q_{t,l}}_{t,l\in\mathbb{Z}^+}$ be the random variables defined as,
{\small
\begin{align*}
    Q_{t,l} = 
    \begin{cases}
    0 & \text{ if } ~t > T \text{ or } l \geq M_t \\
    k & \text{else if } ~ \cO(R_{t,l})=  (f_k(R_{t,l}), \nabla f_k(R_{t,l})) \\
    K+1 & \text{else} \\
    \end{cases}.
\end{align*}}
We denote $Q_t = \bc{Q_{t,l}}_{l\in[M_t]}$.  
For each $k\in[K]$, define, $\cnt_k: \bc{0,\ldots,K+1}^* \mapsto \mathbb{Z}$ as, %
\begin{align*}
    \cnt_k(q) = |\bc{l \in \mathsf{length}(q): q_l = k}| && \text{ and } && \capcnt_k(q) = \min\bc{\cnt_k(q), \sqrt{3\ipconst d/\rho}}.
\end{align*}
In words, $\cnt_k(Q_t)$ is the number of times $\cO$ evaluates via $f_k$ at iteration $t$.
Finally, let $\bar{T}$ be a random variable corresponding to the number of unique vectors in $\bc{R_{t,l}}_{t\in[T],l\in [M_t]}$.

To establish Theorem \ref{thm:main-lb}, we will leverage two main facts. First, the zCDP mechanism $\tilde{\cO}_\bot$ can only leak limited information about the problem vectors contained in the minibatch it act ons. Second, at least $d$ bits of information are needed about each problem vector to successfully solve the optimization problem.
In this regard, it will be helpful to consider the mutual information with respect to a discretization of $X_k$. 
Let $\cC$ be an $\alpha$-packing and $2\alpha$-cover of $\cB(\ipconst \alpha)$.
Note such a set exists, as any maximal $\alpha$-packing is also an $2\alpha$-cover (otherwise, one could find another point to add to the packing, a contradiction). 
We will then characterize the difficulty in estimating 
$\hat{X}_k = \argmin\limits_{c\in\cC}\bc{\|\ipconst\alpha X_k - c\|}$. 

\paragraph{Bounding Information Obtained. }
We will now bound the information obtained about a problem vector in terms of the number of times it is observed by the proxy oracle. 
\begin{lemma} \label{lem:info-ub}
Under the assumptions of Theorem \ref{thm:main-lb}, for any $k\in[K]$ it holds that,
$I(\out;\hat{X}_k|X_{\neq k},\bsV) \leq \E\Big[\sum_{t=1}^\infty \capcnt_k^2(Q_t)\Big]\rho + \ex{}{\bar{T}}\log(K+1),$
where expectation is taken with respect to 
$\cA$, $\tilde{\cO}$, $X$, and $V$.
\end{lemma}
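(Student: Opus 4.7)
The plan is to apply data processing and the chain rule of mutual information to reduce to a per-iteration analysis, then split each per-iteration term into a ``which subgradient branch'' component, bounded via unique-query counting, and a privacy-leakage component, bounded by group privacy for zCDP combined with an entropy bound on $\hat X_k$. Since $\out$ is a post-processing of the transcript $Y=(Y_1,Y_2,\ldots)$, data processing gives $I(\out;\hat X_k\mid X_{\neq k},\bsV) \leq I(Y;\hat X_k\mid X_{\neq k},\bsV)$, and the chain rule expands this as $\sum_t I(Y_t;\hat X_k\mid Y_{<t},X_{\neq k},\bsV)$. Augmenting each summand by $Q_t$ yields the split
\[
  I(Y_t;\hat X_k\mid Y_{<t},\cdot)\;\leq\;I(Q_t;\hat X_k\mid Y_{<t},\cdot)+I(Y_t;\hat X_k\mid Q_t,Y_{<t},\cdot).
\]

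For the $Q_t$ piece, I would use $I(Q_t;\hat X_k\mid Y_{<t},\cdot)\leq H(Q_t\mid Y_{<t},\cdot)$. The crucial point is that whenever $R_{t,l}$ duplicates a previously issued query point, $Q_{t,l}$ is a deterministic function of the past (the true oracle being deterministic given the loss), so conditional entropy accrues only at the first occurrence of each query point, and each such first occurrence contributes at most $\log(K+1)$ since active $Q_{t,l}\in\{1,\ldots,K+1\}$. Summing across all iterations then gives at most $\E[\bar T]\log(K+1)$, matching the lemma's second term. A technically clean execution is to further condition on the algorithm's random seed and decompose $H(Q)$ along the order of first-occurrence of queries.

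For the $Y_t\mid Q_t$ piece, I would fix a realization of the conditioning and additionally of the queries $R_{t,:}$, which is legitimate after also conditioning on $\cA$'s internal randomness (independent of $X,\bsV$). Under this refined conditioning, every entry of the oracle input $G_{t,:}$ is determined \emph{except} for the $m:=\cnt_k(Q_t)$ entries with $Q_{t,l}=k$, each a deterministic function of $X_k$ alone. Group privacy for $\rho$-zCDP (taking R\'enyi order to $1$) then yields $D_{KL}\bigl(P(Y_t\mid X_k,\mathrm{cond})\,\|\,P(Y_t\mid X_k',\mathrm{cond})\bigr)\leq\rho m^2$, and by convexity of KL, $I(Y_t;X_k\mid\mathrm{cond})\leq\rho\cnt_k(Q_t)^2$; since $\hat X_k$ is a function of $X_k$, data processing transfers this to $I(Y_t;\hat X_k\mid\mathrm{cond})$. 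Simultaneously, the packing bound $|\cC|\leq(3\ipconst)^d$ yields the crude entropy bound $I(Y_t;\hat X_k\mid\mathrm{cond})\leq H(\hat X_k\mid X_{\neq k},\bsV)\leq d\log(3\ipconst)\leq 3\ipconst d$. Since both inequalities hold for every realization of the conditioning, taking their pointwise minimum gives $\min(\rho\cnt_k(Q_t)^2,\,3\ipconst d)=\rho\,\capcnt_k(Q_t)^2$, and summing in expectation produces the lemma's first term.

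The principal obstacle is the group privacy step: one must justify carefully that, after conditioning on $(Y_{<t},X_{\neq k},\bsV,Q_t)$ and the algorithm's random seed, the \emph{only} $X_k$-dependence in the input to $\tilde\cO_\bot$ flows through exactly $\cnt_k(Q_t)$ gradient coordinates, so that zCDP group privacy applies with precisely this group size and the KL bound $\rho\cnt_k^2$ is sharp. A secondary but essential observation is that the privacy bound $\rho\cnt_k^2$ and the entropy bound $3\ipconst d$ each hold realization-by-realization, which is what legitimizes the pointwise minimum inside the expectation and thereby produces the cap $\capcnt_k$.
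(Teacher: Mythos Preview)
Your strategy matches the paper's, and the privacy/entropy argument for the $Y_t$ piece is right. There is, however, a real gap in the $Q_t$ accounting. You augment the $t$-th summand by $Q_t$ alone, so the conditioning you carry into $H(Q_{t,l}\mid\cdots)$ contains $Y_{<t}$ (and perhaps the seed) but not $Q_{<t}$. If $R_{t,l}$ repeats an earlier-round query $R_{s,j}$ with $s<t$, then indeed $Q_{t,l}=Q_{s,j}$, yet $Q_{s,j}$ is a function of $(R_{s,j},X_k,X_{\neq k},\bsV)$ and is \emph{not} determined by $Y_{<t}$, the seed, and $(X_{\neq k},\bsV)$: it still depends on $X_k$, and the private response $Y_s$ need not reveal it. So the conditional entropy of such a cross-round duplicate does not vanish, and summing over $t$ can count the same query once per round it recurs rather than once total; you do not recover the $\E[\bar T]\log(K+1)$ bound this way. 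The paper avoids this by placing both the queries $\hat R$ and the branch labels $Q$ into the transcript from the outset and applying the chain rule to $(Y,\hat R,Q)$, so that $Q_{<t}$ and $\hat R_{\leq t}$ sit in the conditioning at every step; the extra $\hat R_t$ term then vanishes since $\hat X_k\to Y_{<t}\to \hat R_t$ is Markov.

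A related caution about the $Y_t$ piece: your seed-conditioning is valid only if introduced \emph{before} the chain rule, where marginal independence of the seed from $(X,\bsV)$ legitimately yields $I(\out;\hat X_k\mid X_{\neq k},\bsV)\le I(\out;\hat X_k\mid X_{\neq k},\bsV,\text{seed})$. Once $Y_{<t}$ is already in the conditioning, the seed and $X_k$ can be correlated (both influence $Y_{<t}$), and appending the seed at that stage may decrease mutual information rather than upper bound it. The paper sidesteps this by carrying $\hat R_{\leq t}$ through the conditioning instead of invoking the seed.
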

\begin{proof}
In the following condition on $\bsV=\bsv$ and $X_{\neq k}=x_{\neq k}$ for some $\bsv$, $x_{\neq k}$ in their support %
until otherwise stated.
Let $\hat{R}_t$ denote the content sent to $\tilde{\cO}$ at round $t$ by $\cA$. Since 
$I(W;\hat{X}_k) \leq I(Y,\hat{R};\hat{X}_k) \leq I(\Y,\hat{R},Q;\hat{X}_{k})$, 
we start by decomposing the information in $Y$, $\hat{R}$ and $Q$ via the chain rule,
\ifarxiv
{\small
\begin{align} \label{eq:mi-w-xk}
    I(\out;\hat{X}_{k}) %
    &= \sum_{t=1}^\infty I(\Y_t, \hat{R}_t, Q_t;\hat{X}_k | \Y_{<t}, \hat{R}_{<t}, Q_{<t}) \nonumber \\
    &= \sum_{t=1}^\infty I(\hat{R}_t;\hat{X}_k| \Y_{<t}, \hat{R}_{< t}, Q_{<t}) + I(Q_t;\hat{X}_k | \Y_{<t}, \hat{R}_{\leq t}, Q_{<t}) + I(Y_t;\hat{X}_k|\Y_{<t}, \hat{R}_{\leq t}, Q_{\leq t}) \nonumber \\
    &\overset{(i)}{=} \sum_{t=1}^\infty I(Q_t;\hat{X}_k | \Y_{<t}, \hat{R}_{\leq t}, Q_{<t}) + I(Y_t;\hat{X}_k|\Y_{<t}, \hat{R}_{\leq t}, Q_{\leq t}) \nonumber \\ 
    &\leq \sum_{t=1}^\infty \br{\sum_{l=1}^{M_t} H(Q_{t,l};\hat{X}_k|\Y_{<t}, \hat{R}_{\leq t},Q_{< t}, Q_{t,<l})} + I(Y_t;\hat{X}_k|\Y_{<t}, \hat{R}_{\leq t}, Q_{\leq t}). \nonumber \\ 
    &\overset{(ii)}{\leq} \ex{}{\sum_{t=1}^\infty \sum_{l=1}^{M_t}\log(K+1)\cdot \ind{R_{t,l} \notin R_{\leq t,< l}}} + \sum_{t=1}^\infty I(Y_t;\hat{X}_k|\Y_{<t}, \hat{R}_{\leq t}, Q_{\leq t}) \nonumber \\ 
    &\overset{(iii)}{\leq} \ex{}{\bar{T}}\log(K+1) +\sum_{t=1}^\infty I(Y_t;\hat{X}_k|\Y_{<t}, \hat{R}_{\leq t}, Q_{\leq t}). 
\end{align}}
\else
{\small
\begin{align} \label{eq:mi-w-xk}
    I(\out;\hat{X}_{k}) %
    &= \sum_{t=1}^\infty I(\hat{R}_t;\hat{X}_k| \Y_{<t}, \hat{R}_{< t}, Q_{<t}) + I(Q_t;\hat{X}_k | \Y_{<t}, \hat{R}_{\leq t}, Q_{<t}) + I(Y_t;\hat{X}_k|\Y_{<t}, \hat{R}_{\leq t}, Q_{\leq t}) \nonumber \\
    &\overset{(i)}{=} \sum_{t=1}^\infty I(Q_t;\hat{X}_k | \Y_{<t}, \hat{R}_{\leq t}, Q_{<t}) + I(Y_t;\hat{X}_k|\Y_{<t}, \hat{R}_{\leq t}, Q_{\leq t}) \nonumber \\ 
    &\leq \sum_{t=1}^\infty \br{\sum_{l=1}^{M_t} H(Q_{t,l};\hat{X}_k|\Y_{<t}, \hat{R}_{\leq t},Q_{< t}, Q_{t,<l})} + I(Y_t;\hat{X}_k|\Y_{<t}, \hat{R}_{\leq t}, Q_{\leq t}). \nonumber \\ 
    &\overset{(ii)}{\leq} \ex{}{\sum_{t=1}^\infty \sum_{l=1}^{M_t}\log(K+1)\cdot \ind{R_{t,l} \notin R_{\leq t,< l}}} + \sum_{t=1}^\infty I(Y_t;\hat{X}_k|\Y_{<t}, \hat{R}_{\leq t}, Q_{\leq t}) \nonumber \\ 
    &\overset{(iii)}{\leq} \ex{}{\bar{T}}\log(K+1) +\sum_{t=1}^\infty I(Y_t;\hat{X}_k|\Y_{<t}, \hat{R}_{\leq t}, Q_{\leq t}). 
\end{align}}
\fi
Step $(i)$ uses the fact that, by data processing, $\hat{R}_t$ contains no information about $\hat{X}_k$ when conditioned on $Y_{< t}$. Step $(ii)$ uses the fact that if a query $R_{t,l}$ is the same as a past query, its (conditional) entropy is zero, and otherwise the entropy is bounded by $\log(K+1)$. The final step $(iii)$ uses the fact that the number of unique queries is $\bar{T}$.

What remains is to bound $\sum_{t=1}^\infty I(Y_t;\hat{X}_k|\Y_{<t}, \hat{R}_{\leq t}, Q_{\leq t})$. To ease notation, define $P_t = (\Y_{<t}, \hat{R}_{\leq t})$. Fix some $t\in[T]$ and note, 
$I(\Y_t;\hat{X}_k|Q_t, P_t) = \E_{q_t\leftarrow Q_t}[I(\Y_t;\hat{X}_k|Q_t=q_t, P_t)].$
Recall $G_t$ is the first order information returned by $\tilde{\cO}$ during round $t$. 
Let us now condition on $Q_t=q_t$ and recall we have already conditioned on $\hat{X}_{\neq k}=\hat{x}_{\neq k}$. Consequently, the only randomness left in $G_t$ is in $X_k$; let $G_t(x_k)$ denote the induced realization of $G_t$ when $X_k=x_k$.
Conditioning on $Q_t=q_t$ and $P_t=p_t$ throughout, and letting $\bot$ be the induced side information at round $t$, we have, 
\begin{align}\label{eq:info-cnt-ub}
    I(\Y_t;\hat{X}_k|Q_t=q_t, P_t=p_t) 
    &\overset{(i)}{\leq} I(\Y_t;X_k|Q_t=q_t, P_t=p_t) \nonumber \\
    &\overset{(ii)}{=}  \ex{x_k\leftarrow X_k}{\KL\Big(\tilde{\cO}_{\bot}(G_{t}(x_k)) ~\big|\big|~ \tilde{\cO}_{\bot}(G_t)\Big) } \nonumber \\
    &\overset{(iii)}{\leq} \ex{x_k, x_k' \leftarrow X_k}{ \KL\Big(\tilde{\cO}_{\bot}(G_{t}(x_k)) ~\big|\big|~ \tilde{\cO}_{\bot}(G_{t}(x_k')) \Big)} \nonumber \\
    &\overset{(iv)}{\leq}  \cnt_k^2(q_t) \rho.
\end{align}

The KL divergence is between the induced conditional distributions given $Q_t=q_t$ and $P_t=p_t$ (as well as $X_{\neq k}=x_{\neq k}$, and $V=v$). %
Above, $(i)$ uses the fact that the mutual information for any quantization of two random variables is upper bounded by the mutual information between the original random variables.
Step $(ii)$ uses the fact that for any random variables $A,B$, $I(A;B) = \E_{B}[\KL(A|B ~||~ A)]$.
Step $(iii)$ uses the fact that the probability distribution of $\cM_t(G_t)$ can be written as the expectation of the conditional distribution given $X_k$ and the convexity of KL divergence. 
Step $(iv)$ uses the definition of zCDP and it's group privacy properties; $\rho$-zCDP implies $s^2\rho$ group zCDP for groups of size $s$. %

We have an additional upper bound on the mutual information via entropy, %
\begin{equation}\label{eq:info-ent-ub}
  I(\Y_t;\hat{X}_k|Q_t=q_t, P_t=p_t) \leq H(\hat{X}_k|Q_t=q_t, P_t) \leq \log(|\cC|) \leq 3\ipconst d.  
\end{equation}
The last inequality comes from upper bounds on packing numbers. 
Recall we have defined $\capcnt_k(q) = \min\{\cnt_k(q), \sqrt{3\ipconst d/\rho}\}$. After consolidating Eqns. \eqref{eq:info-cnt-ub} and \eqref{eq:info-ent-ub} we further have,
\begin{equation*}
    \forall q_t,p_t: ~~ I(\Y_t;\hat{X}_k|Q_t=q_t, P_t=p_t) \leq \capcnt_k^2(q_t)\rho \implies I(\Y_t;\hat{X}_k|Q_t, P_t) %
    \leq \ex{Q}{\capcnt_k^2(Q_t)\rho}.
\end{equation*}
Plugging this into Eqn. \eqref{eq:mi-w-xk}, 
$I(\Y, \hat{R};\hat{X}_k ) \leq \ex{}{\sum_{t=1}^\infty \capcnt_k^2(Q_t)\rho} + \ex{}{\bar{T}}\log(K+1).$
Recall we have conditioned on $\bsV=\bsv$ and $X_{\neq k}=x_{\neq k}$ throughout. Since the above holds for arbitrary instantiations in their support, we have the same upper bound on $I(\out ;\hat{X}_k|X_{\neq k},\bsV)$.    
\end{proof}

\paragraph{Bounding Information Needed. }
We now bound the information needed in the following lemma.
\begin{lemma}\label{lem:info-lb}
Let $d \geq \frac{C_2}{\alpha^2}$.
Under the problem distribution given at the start of Section \ref{sec:main-proof},
if $\cA$ is of the form given by Algorithm \ref{alg:private-oracle-optimizer} with 
~$\ex{}{\bar{T}} \leq \frac{d}{160\log(1/\alpha)}$,
then
~$\min_{k}\{I(\out;\hat{X}_k|X_{\neq k},\bsV)\ \geq d/160$.
\end{lemma}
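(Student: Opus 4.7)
I would translate the $\alpha$-accuracy of $\cA$ into a geometric constraint on $W$ and combine with an entropy/packing argument. Markov applied to $\E[\cL(W)]\leq\alpha$ gives $\Pr[\cE]\geq 9/10$ for $\cE := \{\cL(W)\leq 10\alpha\}$. On $\cE$, decompose $W = \sum_{j=1}^K a_j X_j + \Pi_\bsV W + p_\perp$, with $a_j = \ip{W}{X_j}$ and $p_\perp$ lying in the orthogonal complement of $\mathrm{span}(X)\cup\bsV$. The bounds $|a_j - \ipconst\alpha|\leq 10\alpha$, $\|\Pi_\bsV W\|\leq 5\alpha$, and $\|W\|\leq 1$, combined with $\sum_j a_j^2 \geq K\left((\ipconst-10)\alpha\right)^2 = (1-10/\ipconst)^2$, force $\|p_\perp\|^2\leq 20/\ipconst$. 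By construction $p_\perp\perp X_k$.

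Fix $k$ and condition on $(X_{\neq k},\bsV)$, so $X_k$ is uniform on the unit sphere of $U_k := (\mathrm{span}(X_{\neq k})\cup\bsV)^\perp$ (dimension $d_U = d/2 - K + 1 = \Omega(d)$) and $\Pi_{U_k}W = a_k X_k + p_\perp$. Fix a maximal $(5/\ipconst)$-angular packing $\cP$ of this unit sphere; standard bounds give $\log|\cP|\geq (d_U-1)\log(\ipconst/c_0)$ for an absolute constant $c_0$, and distinct elements of $\ipconst\alpha\cdot\cP$ induce distinct values of $\hat X_k$ (since $\cC$ is an $\alpha$-packing and $2\alpha$-cover). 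The vector $\Pi_{U_k}W$ then constrains $X_k$ via two inequalities: the linear slab $\ip{\Pi_{U_k}W}{X_k}\in[(\ipconst-10)\alpha,(\ipconst+10)\alpha]$ and, using $p_\perp\perp X_k$, the quadratic ``shell'' $\ip{\Pi_{U_k}W}{X_k}^2\geq \|\Pi_{U_k}W\|^2 - 20/\ipconst$. Let $\cP_W\subset\cP$ be the sub-packing consistent with both; then $H(\hat X_k\mid W, X_{\neq k}, \bsV)\leq \log|\cP_W| + O(1)$, giving $I(W;\hat X_k\mid X_{\neq k},\bsV)\geq \log|\cP| - \log|\cP_W| - O(1)$. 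A careful high-dimensional estimate of $|\cP_W|$ should show this gap is at least $d/160$.

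\paragraph{Main obstacle.}
The principal difficulty is bounding $|\cP_W|$ sharply in both the high-SNR regime $\alpha \geq 1/\ipconst^{3/2}$, where $\|\Pi_{U_k}W\|\approx a_k$ and the shell pins $X_k$ to a narrow spherical cap; and the low-SNR regime $\alpha < 1/\ipconst^{3/2}$, where $\|\Pi_{U_k}W\|\approx\|p_\perp\|\gg a_k$ so the shell is nearly vacuous and only the slab is informative. In the latter regime the hypothesis $\E[\bar T]\leq d/(160\log(1/\alpha))$ likely plays a role: the limited number of distinct queries restricts how the proxy oracle can shape $p_\perp$, preventing $W$ from being consistent with many packing elements simultaneously. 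Carefully balancing the constants $10, 5, 20/\ipconst$, the $9/10$-Markov factor, and the packing resolution to extract exactly the constant $1/160$ is the main bookkeeping.
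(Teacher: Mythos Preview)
Your overall strategy---accuracy via Markov, decomposition $W=\sum_j a_j X_j+\Pi_{\bsV}W+p_\perp$, then a Fano/packing count---is natural, but the argument has a real gap at the point you yourself flag.

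The only control you extract on $p_\perp$ is the norm budget: on $\cE$ you get $\|p_\perp\|^2\le 1-(1-10/\ipconst)^2\approx 20/\ipconst$, i.e.\ $\|p_\perp\|\lesssim 0.2$, a \emph{constant}. With $a_k\approx \ipconst\alpha$ and $\|p_\perp\|$ of constant order, $\Pi_{U_k}W$ can be nearly orthogonal to $X_k$. Your slab constraint $\ip{\Pi_{U_k}W}{X_k}\in[(\ipconst\pm 10)\alpha]$ then only pins $\cos\theta$ to an interval of width $O(\alpha/\|p_\perp\|)$ near $\cos\theta\approx \ipconst\alpha/\|p_\perp\|=O(\alpha)$, and the shell constraint becomes vacuous (since $\|p_\perp\|$ can saturate its bound). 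The resulting consistent set for $X_k$ is essentially a full $(d_U-2)$-sphere; a direct calculation gives $-\log\Pr[\text{consistent}]=O(\alpha^2 d_U+\log(1/\alpha))$, which is $o(d)$ in the regime $d\asymp 1/\alpha^2$. So the packing count cannot deliver $d/160$ bits.

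What is missing is a mechanism that forces $\|p_\perp\|=O(\alpha)$, not $O(1)$. The paper achieves this not by tightening the counting argument but by constructing an explicit estimator $\hat W=\Pi^\perp_{X_{\neq k},\tilde\cS}\Pi^\perp_Z W$ that additionally projects out (i) the directions $Z=\{\nabla h(R_{t,l})\}$ the algorithm has ``seen'' inside $\bsV$, and (ii) the directions in $\Span(O)$ orthogonal to $X_k$. The crucial step is then a Johnson--Lindenstrauss argument on the \emph{posterior} of $\bsV$: conditioned on the full transcript, $\bsV$ is uniform over all $d/2$-subspaces containing $\Span(Z)$ and orthogonal to $\Span(O)\cup\Span(X)$. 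Since $\bar T$ is small, a $\Theta(d)$-dimensional piece of $\bsV$ is still uniformly random, so for the fixed vector $u=\Pi^\perp_{X,\cS}\Pi^\perp_Z W$ one gets $\|\Pi_{\bsV}u\|\ge \tfrac12\|u\|-\alpha$ with high probability. Combined with $\|\Pi_{\bsV}W\|\le O(\alpha)$ from accuracy, this forces $\|u\|=O(\alpha)$. Fano then applies to $\hat W$, and the information cost of the side information $(Z,O,\tilde X_k)$ is at most $\E[\bar T]\log(1/\alpha)\le d/160$, which is where the hypothesis enters. Your proposal does not have this posterior-of-$\bsV$ step, and without it the bound cannot close.
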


Due to the length of the proof, we provide a sketch here and defer the full proof to Appendix \ref{app:info-lb}. 
Fix some $k\in [K]$. 
Our information lower bound starts by leveraging a variant of Fano's method, from which we obtain that any estimator $\hat{W}$ satisfies,
\begin{align} \label{eq:dist-lb-main}
\PP\bs{\|\hat{W} - \hat{X}_k\| \geq 40\alpha}
&\geq  \frac{1}{2} - 8\cdot \frac{I(\hat{W};\hat{X}_k |X_{\neq k},\bsV)) + 1}{d}.
\end{align}
With this in hand, we would like to construct an accurate estimator from $\out$ using a bounded amount of additional information.

In this regard, first observe that for every possible instantiation of $X$ and $\bsV$, there exists a minimizer, $w^*=\ipconst\alpha\sum_{k=1}^K X_k$, with $0$ loss. Thus the accuracy condition of the optimization problem alone guarantees that 
$\E[|\ipnos{\out}{X_k} - \ipconst\alpha|] \leq \alpha$ and $\smash{\exnos{}{\|\Pi_V \out\|}} \leq \alpha$. The problematic piece is the component of $\out$ in the ``unpenalized'' subspace, which is $\smash{\Pi_{X,\bsV}^\perp \out}$. This component may be large even if the loss is small. 
Further, while it would be easy enough to project out the components of $W$ in $\Span(X_{\neq k})$ and $\bsV$ because we are considering the conditional mutual information, projecting out the component in the orthogonal complement of $\Span(X)\cup \bsV$ would add too much information, as it localizes $X_k$ to a $K$-dimensional subspace.
A key step will be to show that unless $\cA$ makes enough (i.e. $\Omega(d)$) oracle queries to learn $\bsV$, it cannot leverage the unpenalized subspace.

With this in mind, we now sketch how to construct the modified estimator, $\hat{W}$. Let $Z = \{Z_{t,l}\}_{t\in[T],l\in[M_t]}$ be defined as $Z_t = \nabla h(R_{t,l})$ and let $O=\bc{O_{t,l}}_{t\in[T],l\in[M_t]}$ be such that $O_{t,l}$ is the unit vector orthogonal to $Z_{t,l}$ in the plane spanned by $Z_{t,l}$ and $R_{t,l}$ (regardless of whether or not the oracle response at query $R_{t,l}$ is the gradient of $h$).
The important point for this sketch is that each $Z_{t,l} \in V$ and each $O_{t,l}$ is in the orthogonal complement of $V$. 
We then take $\hat{W}$ roughly equal to $\Pi^{\perp}_{X,\cS}\Pi^{\perp}_{Z} \out$, where $\cS$ is the orthogonal complement of $X_k$ inside $\Span(O)$. The actual construction $\hat{W}$ uses a modified version of $\cS$ to ensure no more than $\frac{d}{160}$ bits of information about $\hat{X}_k$ are added.
Importantly, we can show that so long as the $\cA$ does not take advantage of the unpenalized subspace, $\hat{W}$ accurately estimates $\hat{X}_k$ and thus Eqn. \eqref{eq:dist-lb-main} yields an information lower bound. We finish with the the following lemma, which indeed shows $\cA$ cannot leverage the orthogonal complement of $\Span(X)\cup \bsV$ (the unpenalized subspace).

\begin{lemma} \label{lem:regularization-component-main}
Let $\cS$ be the orthogonal complement of $X_k$ inside $\Span(O)$.
Then under the conditions of Lemma \ref{lem:info-lb},
$\prnos{}{\|\Pi^{\perp}_{X,\cS}\Pi^{\perp}_{Z} \out\| \geq 2\|\Pi_{\bsV} \Pi^{\perp}_{X,\cS}\Pi^{\perp}_{Z} \out\| + 2\alpha} \leq \frac{1}{10}.$
\end{lemma}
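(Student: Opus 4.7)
The plan is to work on a good event determined by Markov's inequality on the loss, and then use the Pythagorean structure of the nested projections. Define $E := \{\ell(\out) \leq 10\alpha\}$. Since $w^* := \ipconst\alpha \sum_{k} X_k \in \cB(1)$ (using orthonormality of the $X_k$ and $K(\ipconst\alpha)^2 = 1$) with $\ell(w^*) = 0$, and $\cA$ has expected suboptimality at most $\alpha$, Markov's inequality gives $\Pr[E^c] \leq 1/10$. On $E$, two consequences drive the remainder: (i) $\|\Pi_{\bsV}\out\| \leq 5\alpha$, since $h(\out) = 2\|\Pi_{\bsV}\out\| \leq \ell(\out) \leq 10\alpha$; and (ii) $|\ip{\out}{X_k} - \ipconst\alpha| \leq 10\alpha$ for every $k$, since $f_k(\out) \leq \ell(\out)$.

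Next, since $\Span(X,\cS) \subseteq \bsV^\perp$ and $\Span(Z) \subseteq \bsV$ are orthogonal, the projections $\Pi^\perp_{X,\cS}$ and $\Pi^\perp_Z$ commute, so $U'' := \Pi^\perp_{X,\cS}\Pi^\perp_Z \out = \out - \Pi_{X,\cS}\out - \Pi_Z\out$. Decomposing into $\bsV$- and $\bsV^\perp$-components gives $\Pi_{\bsV} U'' = \Pi_{\bsV}\out - \Pi_Z\out$ and $\Pi^\perp_{\bsV} U'' = \Pi^\perp_{\bsV}\out - \Pi_{X,\cS}\out$, so by Pythagoras $\|U''\|^2 = \|\Pi_{\bsV}U''\|^2 + \|\Pi^\perp_{\bsV}U''\|^2$. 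Squaring the target inequality, it suffices to show that on $E$, $\|\Pi^\perp_{\bsV}U''\|^2 \leq 3\|\Pi_{\bsV}U''\|^2 + 8\alpha\|\Pi_{\bsV}U''\| + 4\alpha^2$.

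The main estimate is $\|\Pi^\perp_{\bsV}U''\|^2 = \|\Pi^\perp_{\bsV}\out\|^2 - \|\Pi_{X,\cS}\out\|^2$, which holds because $\Span(X,\cS) \subseteq \bsV^\perp$. Writing $\eta_k := \ip{\out}{X_k} - \ipconst\alpha$ and $s := \sum_k \eta_k$, orthonormality of the $X_k$ and $K(\ipconst\alpha)^2 = 1$ give $\|\Pi_X\out\|^2 = 1 + 2\ipconst\alpha s + \sum_k \eta_k^2$, and the orthogonal decomposition $\Span(X,\cS) = \Span(X) \oplus (\cS \cap \Span(X)^\perp)$ yields $\|\Pi_{X,\cS}\out\|^2 = \|\Pi_X\out\|^2 + \|\Pi_{\cS \cap \Span(X)^\perp}\out\|^2$. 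Combining with $\|\out\|^2 \leq 1$ (which also forces $s \leq -\sum_k \eta_k^2/(2\ipconst\alpha) \leq 0$) and the identity $\|\Pi_{\bsV}U''\|^2 = \|\Pi_{\bsV}\out\|^2 - \|\Pi_Z\out\|^2$, the problem reduces to an algebraic inequality in $s$, $\|\Pi_{\bsV}\out\|^2$, and $\|\Pi_Z\out\|^2$ that, using (i) and (ii), implies the desired bound on $E$.

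The main obstacle is that the naive bound $|s| \leq K \cdot 10\alpha = \Theta(1/\alpha)$ gives only $\|\Pi^\perp_{\bsV}U''\|^2 = O(1)$, falling short of the needed $O(\alpha^2 + \|\Pi_{\bsV}U''\|^2)$. The resolution is to observe that the $\cS$ piece of the projection, $\|\Pi_{\cS \cap \Span(X)^\perp}\out\|^2$, accounts for exactly those components of $\out$ lying in the ``known'' part of $\bsV^\perp$ beyond $\Span(X)$, namely the directions in $\Span(O)$ orthogonal to $X_k$. What remains in $\Pi^\perp_{\bsV}U''$ is the component of $\out$ in the truly unknown subspace $\bsV^\perp \cap \Span(X,\cS)^\perp$, and the combined constraint $\|\out\|^2 \leq 1$ together with the near-tight accumulation $\sum_k \ip{\out}{X_k}^2 \approx 1$ on $E$ forces this component to be controlled by a linear expression in $\|\Pi_{\bsV}U''\|$ and $\alpha$, from which the quadratic bound, and hence the lemma, follows.
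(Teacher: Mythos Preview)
Your proposal has a genuine gap: the argument you sketch is \emph{deterministic} on the good event $E=\{\ell(W)\le 10\alpha\}$, but the desired inequality is \emph{not} deterministically true on $E$. Concretely, on $E$ one only gets $\ip{W}{X_k}\in[(\ipconst-10)\alpha,(\ipconst+10)\alpha]$, so $\|\Pi_X W\|^2\ge K(\ipconst-10)^2\alpha^2=(470/480)^2\approx 0.96$, leaving room $\|\Pi_X^\perp W\|^2\lesssim 0.04$, i.e.\ $\|\Pi_X^\perp W\|\approx 0.2$. The loss $\ell$ does not penalize components in the ``unpenalized'' subspace $V^\perp\cap\Span(X)^\perp$, so nothing in $E$ prevents $W$ from carrying mass $\Theta(1)$ there. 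Since $\Span(X,\cS)$ covers only the tiny part $\Span(X)\cup\Span(O)$ of $V^\perp$ (dimension $\le K+\bar T\ll d/2$), the residual $\Pi_{V}^\perp U''=\Pi_{V^\perp\cap\Span(X,\cS)^\perp}W$ can be $\Theta(1)$ on $E$, while $\|\Pi_V U''\|\le\|\Pi_V W\|\le 5\alpha$. Thus the quadratic inequality you target, $\|\Pi_V^\perp U''\|^2\le 3\|\Pi_V U''\|^2+8\alpha\|\Pi_V U''\|+4\alpha^2$, can fail on $E$ by a factor $\Theta(1/\alpha^2)$. Your proposed ``resolution'' via the $\cS$ piece does not fix this: $\cS$ only removes directions the algorithm has actually seen through $O$, not the vast unseen remainder of $V^\perp$.

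The paper's proof is fundamentally probabilistic and uses exactly the randomness you are ignoring. It computes the posterior of $V$ given everything the algorithm observes (namely $X,R,Z$, hence $O$), and shows this posterior is uniform over $d/2$-dimensional subspaces that contain $\Span(Z)$ and are orthogonal to $\Span(X)\cup\Span(O)$. Provided $\bar T\le d/4$ (handled by Markov on $\E[\bar T]$), the ``leftover'' part of $V$ is a uniformly random subspace of dimension $\ge d/4$ inside a space of dimension $\le d$, so by Johnson--Lindenstrauss, for the fixed vector $U=\Pi_{X,\cS}^\perp\Pi_Z^\perp W$ one has $\|\Pi_V U\|\ge \tfrac12\|U\|-\alpha$ with probability $\ge 1-e^{-\Omega(d\alpha^2)}$. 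This is the missing idea: the lemma is true because the algorithm cannot know where $V^\perp$ lies beyond $\Span(O)$, not because the accuracy constraint alone pins down $W$.
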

\ifarxiv
The key proof ideas are the following. First, when conditioning on $X=x$, we see that the chain $V \rightarrow (O,Z, R) \rightarrow W$ is Markovian. Thus it suffices to reason about the conditional distribution of $\bsV$ given $O$, $Z$ and $R$. Put another way, we can characterize what $\cA$ would learn about $V$ even if it received $\nabla h$ at every query. We show that at best $\cA$ learns $\bsV$ is a subspace that contains $\Span(Z)$ and is orthogonal to $\Span(O)$. Thus, provided the number of queries is not too large, we can show there is a ``leftover'' subspace of $\bsV$, $\Span(V)\setminus \Span(Z)$, which is of dimension $\Omega(d)$ with a conditional distribution that is uniform over an $\Omega(d)$ dimensional space. Now the properties of random projection ensure that the component of $\hat{W}$ in the unpenalized subspace cannot be much smaller than its component in the leftover space. As a result, we can establish that $\exnos{}{\|\hat{W}-\hat{X}_k\|}=O(\alpha)$ and obtain the desired information lower bound from Eqn. \eqref{eq:dist-lb-main}.
\fi
\begin{proof}[Proof of Lemma \ref{lem:regularization-component-main}] 
Throughout we will use the random variables $Z$ and $O$ defined above and those defined by Algorithm \ref{alg:private-oracle-optimizer}. Let $\cV$ be the set of all possible $d/2$ dimensional subspaces of $\re^d$.
Conditioned on a set of gradient oracle queries and responses, let $\cV_{good}=\cV_{good}(R,Z,X) \subset \cV$ denote the set of subspaces which result in the same set of oracle responses from the true oracle $\cO$, and $\cV_{bad} = \cV \setminus \cV_{good}$. 

\paragraph{Conditional distribution of $\bsV$.}
We will first find the conditional density of $v$ after conditioning on all randomness generated during the optimization process. We note the density is with respect to the rotation invariant measures on  $d/2$-dimensional subspaces. Since $O$ and $G$ are completely determined given $X=x,R=r$ and $Z=z$, it suffices to find the conditional density given $X,R$ and $Z$. %
We have,
\begin{align} \label{eq:v-bayes}
    \rho(\bsv|x,r,z,w) = \rho(\bsv|x,r,z) = \frac{\rho(r,z|\bsv,x)\rho(\bsv|x)}{\rho(r,z|x)}.
\end{align}
The first equality uses that
$\bsV \rightarrow (X,R,Z) \rightarrow \out$ is a Markov chain. 
Further, 
{\small
\ifarxiv
\begin{align*}
    \rho(r_{\leq T},z_{\leq T} | \bsv,x) 
    &= \rho(z_T | r_{\leq T}, z_{<T}, \bsv, x)\rho(r_T | r_{<T}z_{<T},\bsv, x) \rho(r_{<T}, z_{<T} | \bsv, x) \\
    &~~\vdots \\
    &= \biggro{\prod\limits_{t=1}^T \rho(z_t | r_{\leq t}, z_{< t}, \bsv, x)} \biggro{\prod\limits_{t=1}^T \rho(r_t | r_{<t},z_{<t},\bsv,x) } \\
    &= \biggro{\prod\limits_{t=1}^T \rho(z_t | r_{t}, \bsv)} \biggro{\prod\limits_{t=1}^T \rho(r_t | r_{<t},z_{<t},x) }.
\end{align*}
\else
\begin{align*}
    \rho(r_{\leq T},z_{\leq T} | \bsv,x) 
    \! =\! \biggro{\prod\limits_{t=1}^T \rho(z_t | r_{\leq t}, z_{< t}, \bsv, x)\!} \biggro{\prod\limits_{t=1}^T \rho(r_t | r_{<t},z_{<t},\bsv,x)\! } 
    \!=\! \biggro{\prod\limits_{t=1}^T \rho(z_t | r_{t}, \bsv)\!} \biggro{\prod\limits_{t=1}^T \rho(r_t | r_{<t},z_{<t},x) \!}.
\end{align*}
\fi
}
 Now plugging into Eqn. \eqref{eq:v-bayes} and using $\rho(z_t | r_{t}, \bsv)=\frac{\rho(z_t, r_{t} | \bsv)}{\rho(r_t | \bsv)}$ we have,
\begin{align*}
    \rho(\bsv|x,r,z,w) = \biggro{\rho(\bsv|x)\prod\limits_{t=1}^T \frac{\rho(z_t, r_{t} | \bsv)}{\rho(r_t | \bsv)}} \biggro{\frac{1}{\rho(r,z|x)}\prod\limits_{t=1}^T \rho(r_t | r_{<t},z_{<t},x) } .
\end{align*}
Observe the second factor on the RHS is independent of $\bsv$ and is thus constant.
For the first factor, since the true oracle $\cO$ is deterministic, $z_t$ is completely determined by $r_{t}$ and $\bsv$. Specifically, for any $\bsv \in \cV_{bad}$, $\bigro{\rho(\bsv|x)\prod_{t=1}^T \frac{\rho(z_t, r_{t} | \bsv)}{\rho(r_t | \bsv)}} = 0$ because $\rho(z_t,r_t|\bsv)=0$ for some $t\in[T]$. 
Alternatively, if $\bsv\in\cV_{good}$, then $\frac{\rho(z_t, r_{t} | \bsv)}{\rho(r_t | \bsv)} = 1, \forall t\in[T]$, and $\bigro{\rho(\bsv|x)\prod_{t=1}^T \frac{\rho(z_t, r_{t} | \bsv)}{\rho(r_t | \bsv)}}$ is constant for all $\bsv \in \cV_{good}$ since $\rho(\bsv|x)$ is a uniform density on subspaces orthogonal to $x$. This establishes that $\rho(\bsv|x,r,z)$ is the uniform distribution over $\cV_{good}$. 

\paragraph{Determining $\cV_{good}$.} 
Our aim is now to prove the following fact: $\cV_{good}$ is exactly the set of $d/2$ dimensional linear subspaces which contain $\Span(z)$ and are orthogonal to $\Span(o)$ and $\Span(x)$.
Let $v\in \mathsf{Supp}(V)$ and consider some individual query $\tilde{r}\in\re^d$ and denote 
$\tilde{z}=\Pi_v \tilde{r}$ and let $\tilde{o}\in\Span(\tilde{r},\tilde{z})$ be the unit vector orthogonal to $\tilde{z}$ with $\ip{\tilde{o}}{\tilde{r}}>0$ or the zero vector if $\tilde{r}=\tilde{z}$.
To prove the fact, it suffices to show that $\tilde{z}=\Pi_v \tilde{r}$ if and only if 
$v$ is a subspace which is orthogonal to $\tilde{o}$ and contains $\tilde{z}$.

The claim is straightforward if $\tilde{r}=\tilde{z}$, and so we focus on the case where $\tilde{r}\neq \tilde{z}$.
We first show that any other subspace, $v'$, which is orthogonal to $\tilde{o}$ and contains $\tilde{z}$ still satisfies $\Pi_{v'} \tilde{r}=\tilde{z}$.
Towards this end, we have the following: 
\begin{align*}
    \min_{u: \ip{u}{\tilde{o}}=0}\bc{\|u-\tilde{r}\|} \overset{(i)}{=} \|\tilde{z} -\tilde{r}\| \overset{(ii)}{\geq} \min_{u\in \Span(v')}\bc{\|u-\tilde{r}\|} \overset{(iii)}{\geq} \min_{u: \ip{u}{\tilde{o}}=0}\bc{\|u-\tilde{r}\|}.
\end{align*}
Equality $(i)$ is because the projection of $\tilde{r}$ to the space orthogonal $\tilde{o}$ is obtained by removing the component along $\tilde{o}$, which results in some vector in the $1$-dimensional space $\Span(\tilde{z})$. By the definition of projection, $\tilde{z}$ is closest point to $\tilde{r}$ in this one dimensional space.
Inequality $(ii)$ follows from the assumption that $\tilde{z}\in \Span(v')$, and $(iii)$ follows from the fact that $\Span(v') \subseteq \{u\in\re^d: \ip{u}{\tilde{o}}=0\}$.
Since the LHS and RHS above are equal, $\|\tilde{z}-\tilde{r}\|=\min_{u\in \Span(v')}\bc{\|u-\tilde{r}\|}$ and thus $\tilde{z} = \Pi_{v'} \tilde{r}$ by the uniqueness of orthogonal projection onto a span.

Now we finish by proving the reverse implication, that if $v'$ does not contain $\tilde{z}$ or is not orthogonal to $\tilde{o}$, then $\Pi_{v'} \tilde{r} \neq \tilde{z}$.
If $\tilde{z} \notin \Span(v')$, clearly $\Pi_{v'}\tilde{r} \neq \tilde{z}$. If $\tilde{z} \in \Span(v')$, but $\Span(v')$ is not orthogonal to $\tilde{o}$, consider some $u \neq \tilde{z}$ as any vector in $\Span(v')$ such that $\ip{u}{\tilde{o}} > 0$. We can assume positive inner product since $\Span(v')$ contains both $u$ and $-u$. Now by definition $v'$ contains the plane spanned by $u$ and $\tilde{z}$.
The properties of orthogonal projection ensure $\ip{\tilde{r}-\Pi_{v'}\tilde{r}}{u-\Pi_{v'}\tilde{r}} \leq 0$ 
Assuming by contradiction that $\Pi_{v'} \tilde{r}=\tilde{z}$, and noting $\tilde{r}=\tilde{z}+a\tilde{o}$ for some $a\geq0$ (recall we assume $\ip{\tilde{o}}{\tilde{r}}>0$), we have 
$\ip{\tilde{z}+a\tilde{o} - \tilde{z}}{u-\tilde{z}} = \ip{a\tilde{o}}{u-\tilde{z}} = a\ip{\tilde{o}}{u} \leq 0$. But since $a\geq 0$, this contradicts the assumption that $\ip{\tilde{o}}{u} > 0$, and thus $\Pi_{v'}\tilde{r} \neq \tilde{z}$. 

\paragraph{Component of output in $\Span(\bsV)$.} 
We have now established $\mathcal{V}_{good}$ is the cartesian product of $\{\Span(Z)\}$ and some set of $d'$ dimensional linear subspaces, for some $d' \geq d/2-\bar{T}$, and that the posterior distribution of $\bsV$ given $R=r$, $Z=z$, $X=x$, and $W=w$ is uniform over $\cV_{good}(r,z,x)$. 
Let $\bar{t}=\bar{t}(r)$ be the value of $\bar{T}$ induced by $R=r$.
Define $U=\Pi_{X,\cS}^\perp \Pi_{Z}^\perp W$ and let $E$ denote the event $\frac{1}{2}\|U\| - \|\Pi_{\bsV} U\|  \geq \alpha$.
We have,
{\small
\begin{align}\label{eq:prob-jl-fail-bound}
    \pr{}{E} &= \int\limits_{\substack{r:\bar{t}(r)\leq d/4 \\ w,x,z}}\pr{}{E | r,w,x,z}\rho(r,w,x,z)drdwdxdz + \int\limits_{\substack{r:\bar{t}(r)> d/4 \\ w,x,z}}\pr{}{E | r,w,x,z}\rho(w,x,z|r)\rho(r)drdwdxdz \nonumber \\
    &\leq \max\limits_{\substack{r:\bar{t}(r)\leq d/4 \\ w,x,z}}\bc{\PP[E|r,w,x,z]} + \frac{1}{20}.
\end{align}}
The inequality uses $\ex{}{\bar{T}}\leq \frac{d}{80}$ and Markov's inequality.
Let $r\in \bc{r: \bar{t}(r)\leq d/4}$ and $w,x,z$ be any possible instantiations of $W,X,Z$ given $R=r$. These quantities determine $U$, and so let $u$ be its realization (recall $O$ is determined given $R$ and $Z$). To bound the worst case probability, observe that under conditioning, $d' \geq d/4$. Thus by the previous analysis, $V$ is a uniformly random subspace of dimension at least $d/4$ supported on a linear subspace of dimension at most $d$, and we can apply the Johnson-Lindenstrauss lemma (see, e.g. \cite[Lemma 5.3.2]{vershynin_hdp}). 
Concretely, for some universal constant $C$,
$\pr{V}{\frac{1}{2}\|u\| - \|\Pi_{\bsV} u\|  \geq \alpha ~\Big|~R=r,W=w,X=x,Z=z} \leq \exp\br{-Cd\alpha^2}.$
Thus when $d \geq \frac{\log(20)}{C \alpha^2}$ (which holds by assumption for $C_2$ large enough), we have via Eqn. \eqref{eq:prob-jl-fail-bound},
\begin{align*}
    \pr{}{\|\Pi^{\perp}_{X,\cS}\Pi^{\perp}_{Z} \out\| \geq 2\|\Pi_{\bsV} \Pi^{\perp}_{X,\cS}\Pi^{\perp}_{Z} \out\| + 2\alpha} \leq \exp(-Cd\alpha^2) + \frac{1}{20} &\leq \frac{1}{10}. \qedhere
\end{align*}
\end{proof} 

\paragraph{Completing the proof of Theorem \ref{thm:main-lb}. }
We conclude the proof of Theorem \ref{thm:main-lb} by applying Lemmas \ref{lem:info-ub} and \ref{lem:info-lb}, which yields,
\ifarxiv $$\frac{d}{160} = \min_{k}\bc{I(W;X_k|X_{\neq k},\bsV)} \leq \ex{Q}{\sum_{t=1}^\infty \capcnt_k^2(Q_t)\rho} + \ex{}{\bar{T}}\log(K+1).$$ \else $\frac{d}{160} = \min_{k}\bc{I(W;X_k|X_{\neq k},\bsV)} \leq \ex{Q}{\sum_{t=1}^\infty \capcnt_k^2(Q_t)\rho} + \ex{}{\bar{T}}\log(K+1).$ \fi
When $\ex{}{\bar{T}} \leq \frac{d}{320\log(1/\alpha^2)}$ we obtain 
$\ex{Q}{\sum_{t=1}^\infty \capcnt_k^2(Q_t)\rho} = \Omega(d)$.
We now consider both cases of the theorem statement. For part $1$, using $\capcnt_k(\cdot) \leq \sqrt{3\ipconst d/\rho}$ one can see that,
$\ex{}{\sum_{t=1}^\infty \capcnt_k^2(Q_t)} = \Omega(d/\rho) \implies \ex{}{\sum_{t=1}^\infty \capcnt_k(Q_t)} = \Omega(\sqrt{d/\rho}).$
Now we obtain,
\begin{align*}
    \ex{}{\|M\|_1} = \Big[\sum_{t=1}^\infty \sum_{k=1}^K \cnt_k(Q_t)\Big] 
    \geq \E\Big[\sum_{k=1}^K \sum_{t=1}^\infty \capcnt_k(Q_t)\Big] %
    = \Omega\Big(\frac{\sqrt{d}}{\alpha^2\sqrt{\rho}}\Big).
\end{align*}
The first part of the theorem is obtained since $\max\limits_{v,x}\{\ex{\cA,\tilde{\cO}}{\|M\|_1 | X=x, V=v}\} \geq \ex{\cA,\tilde{\cO},X,V}{\|M\|_1}$.

If the max batch size is bounded, using the bound $\capcnt_k(Q_t) \leq \bar{m}$ for any $k\in[K],t\in \mathbb{Z}^+$, and proceeding similarly to above we obtain
$\max\limits_{v,x}\{\ex{\cA,\tilde{\cO}}{\|M\|_1 | X=x, V=v}\} \geq \frac{d}{\alpha^2\bar{m}\rho}$ as desired.

\section{Smooth optimization with private optimizers} \label{sec:smooth}
We now turn our attention towards the oracle complexity of DP-ERM for smooth functions.
For such functions, we are able to relax the private oracle assumption and only assume that the entire optimization procedure is differentially private. We also show that our lower bound is tight up to log factors.
\begin{theorem} \label{thm:smooth-lb}
Let $\delta\leq \frac{1}{16 n d}$, $\epsilon\leq \log(1/\delta)$, and $d$ be larger than some constant. Assume $\cA$ is $(\epsilon,\delta)$-DP.
Then,  %
$\worstruntime(\cA,\cO,\alpha,\rad,\lip,\alpha/\rad^2) = \Omega\bigro{\frac{\rad\lip\sqrt{d}}{\alpha\sqrt{\log(1/\delta)}} + \min\bigc{\frac{\rad^2\lip^2}{\alpha^2}, n}}.$
\end{theorem}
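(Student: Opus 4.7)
The lower bound decomposes as the sum of a \emph{non-private} term $\Omega(\min\{\rad^2\lip^2/\alpha^2,n\})$ and a \emph{private} term $\Omega(\rad\lip\sqrt{d}/(\alpha\sqrt{\log(1/\delta)}))$; I would prove each separately, since a sum lower bound of two non-negative quantities is equivalent up to constants to each term being a lower bound. The non-private piece is standard: the $\Omega(\rad^2\lip^2/\alpha^2)$ part is the classical Nemirovski--Yudin oracle complexity lower bound for convex $\lip$-Lipschitz optimization over $\cB(\rad)$ applied to the $n=1$ case, which survives the vanishingly small smoothness $\alpha/\rad^2$ (a light mollification of the Nemirovski function only perturbs the landscape by $O(\alpha)$). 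The $\Omega(n)$ piece, needed when $n<\rad^2\lip^2/\alpha^2$, follows from a finite-sum construction in the spirit of \cite{WS16}, where each $\ell_i$ encodes a distinct ``coordinate'' of the optimum so that any algorithm that fails to query some $\ell_i$ has bias $\omega(\alpha)$ on that coordinate.

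For the private term, the plan is a \cite{BST14}-style reduction from smooth DP-ERM to DP mean estimation, combined with the fingerprinting lower bound of \cite{BUV14,DSSUV15}. I would sample hard data points $z_1,\ldots,z_n$ i.i.d.\ from a product distribution on $\{\pm\gamma\}^d$ (the classical fingerprinting distribution) and define $\ell_i(w)=f(w;z_i)$ to be a convex $\lip$-Lipschitz $(\alpha/\rad^2)$-smooth loss whose empirical minimizer on $\cW=\cB(\rad)$ encodes $\bar z=(1/n)\sum_i z_i$; a concrete candidate is a regularized linear model $-\langle z_i,w\rangle + r(w)$ or a clipped quadratic, designed so that any $\alpha$-suboptimal solution $\out$ yields, after deterministic post-processing, an estimate of $\bar z$ with coordinate accuracy $\alpha'=\Theta(\alpha/(\rad\lip))$. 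Since $\cA$ is $(\epsilon,\delta)$-DP, this post-processed estimator is also $(\epsilon,\delta)$-DP.

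The fingerprinting lower bound asserts that any such estimator needs sample size $\tilde\Omega(\sqrt{d}/(\alpha'\epsilon))$ as long as $\delta$ is polynomially small (matching the assumption $\delta \leq 1/(16nd)$); plugging in $\alpha'=\Theta(\alpha/(\rad\lip))$ and the extremal choice $\epsilon=\Theta(\log(1/\delta))$ gives exactly $\Omega(\rad\lip\sqrt{d}/(\alpha\sqrt{\log(1/\delta)}))$. The main obstacle, and the reason this is a \emph{runtime} rather than a \emph{sample complexity} statement, is converting the sample lower bound into a query lower bound. The key observation is that an algorithm making $T$ total oracle queries touches at most $T$ distinct data points; by data processing, its output is a function only of those queried points, so $\cA$ is $(\epsilon,\delta)$-DP viewed as an algorithm on the (adaptively chosen) queried subset of size at most $T$. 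By exchangeability of the i.i.d.\ hard distribution, the fingerprinting attack still extracts a fingerprint bit from the output using only this effective sample of size $T$, yielding $T\geq \tilde\Omega(\sqrt{d}/(\alpha'\epsilon))$. The delicate step is formalizing the attack in the presence of adaptive and repeated queries --- much like the subsampling--composition manipulations used in \cite{DSSUV15} --- and ensuring the chosen hard distribution is compatible with both the \cite{BST14} reduction and the fingerprinting construction simultaneously.
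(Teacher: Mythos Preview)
Your overall strategy---decompose into a non-private and a private term, and prove the latter via a reduction to DP mean estimation with fingerprinting---matches the paper. But both halves have genuine gaps.

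\textbf{Non-private term.} The Nemirovski--Yudin construction is fundamentally non-smooth: mollifying an $\lip$-Lipschitz function to smoothness $\beta$ perturbs its values by $\Theta(\lip^2/\beta)$, which for $\beta=\alpha/\rad^2$ is $\Theta(\rad^2\lip^2/\alpha)\gg\alpha$. More directly, for a \emph{single} $(\alpha/\rad^2)$-smooth convex function on $\cB(\rad)$ the Nesterov lower bound is only $\Omega(\sqrt{\beta\rad^2/\alpha})=\Omega(1)$, so no $n=1$ construction can yield $\Omega(1/\alpha^2)$ at this smoothness level. The paper instead uses a regularized linear loss $\ell_i(w)=\langle w,x_i\rangle+\lambda\|w\|^2$ with $\lambda=\Theta(\alpha/\rad^2)$ (which has exactly the required smoothness), whose minimizer encodes the empirical mean of the $x_i$; both the private and non-private terms then follow from mean-estimation lower bounds, the latter via the classical $\Omega(\lip/\sqrt{T})$ rate.

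\textbf{Private term.} The claim that ``$\cA$ is $(\epsilon,\delta)$-DP viewed as an algorithm on the queried subset'' is exactly the step the paper flags as \emph{false}: by privacy amplification via subsampling, an algorithm that is $(\epsilon,\delta)$-DP on the full dataset but queries a random fraction of it need not be $(\epsilon,\delta)$-DP on the accessed points. Indeed, were your claim true you would prove $T=\Omega(\rad\lip\sqrt{d\log(1/\delta)}/(\alpha\epsilon))$, which for $\epsilon\ll\log(1/\delta)$ contradicts the paper's matching upper bound (Theorem~\ref{thm:smooth-ub}) whose runtime is independent of $\epsilon$. The paper's fix is to open up the fingerprinting argument rather than invoke it as a black box: with $I$ the accessed indices and $Z_i=\langle\cA(S),X_i-\theta\rangle$, one has $\E[Z_i\mid i\notin I]=0$ trivially, while for $i\in I$ the DP guarantee \emph{on the full dataset} combined with Hoeffding gives $\E[Z_i\mid i\in I]\lesssim\sqrt{d\log(1/\delta)}$; comparing $\sum_i\E[Z_i]\lesssim\E[|I|]\sqrt{d\log(1/\delta)}$ against the fingerprinting lemma's $\sum_i\E[Z_i]\gtrsim d$ yields $\E[|I|]\gtrsim\sqrt{d/\log(1/\delta)}$ with no $\epsilon$ dependence. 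A further trick is needed to get the $\rad\lip/\alpha$ factor: the paper makes only $N=n\alpha/(\rad\lip)$ of the $n$ losses informative (the rest are zero), randomly permuted, so that $T$ oracle calls touch only $T\alpha/(\rad\lip)$ informative points in expectation.
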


\todonote{Mention that part of the difficulty comes from the fact that $\cA$ must discovery which losses have non-zero gradients} We give the proof in Appendix \ref{app:smooth-lb}. Like past lower bounds for \textit{excess risk} (e.g. \cite{BST14}), we leverage the difficulty of private mean estimation and the fact that optimizers can solve mean estimation problems. Concretely, $\alpha$-accurate $(\epsilon,\delta)$-DP mean estimation requires the dataset contain $n \geq \sqrt{d}/\alpha\epsilon$ samples. Our lower bound stems from strengthening this bound when the estimator only observes $s$ samples from the dataset. Some care must be taken here, as it is not necessarily true that the estimator must be $(\epsilon,\delta)$-DP with respect to the observed samples; consider for example, privacy amplification via subsampling. Nonetheless, we can still show the observed samples cannot be ``traceable'', allowing us to provide similar guarantees. For this reason however, the lower bound does not scale with $\epsilon$. Our upper bound, presented subsequently, shows this is necessary. 
The $\min\{\frac{1}{\alpha^2}, n\}$ term in the lower bound holds even for non-private algorithms. This argument again leverages mean estimation, although the details are more straightforward. We are not aware of an existing proof of this non-private lower bound for our exact setting, but certainly very similar results have been obtained previously, e.g. for algorithms with deterministic runtime by \cite{WS16}.

The lower bound is mostly matched by a modification of the Phased SGD algorithm of \cite{FKT20}.
\begin{theorem} \label{thm:smooth-ub}
Let $\alpha>0$, $\delta \in [0,1]$, and $\beta\leq\lip\sqrt{d\log(1/\delta)}/\rad$. There exists an algorithm which is $O(\alpha)$-accurate for $(\cF^n_{\lip,\beta},\cK_{\rad})$ and 
uses at most $O\bigro{\max\big\{\frac{\rad\lip\sqrt{d\log(1/\delta)}
}{\alpha}, \frac{\rad^2\lip^2}{\alpha^2}\big\}}$ oracle evaluations. Further, for $\epsilon\in[0,1]$, if $\alpha \geq 6\alpha_{\epsilon,\delta}^*$ it satisfies $(\epsilon,\delta)$-DP.
\end{theorem}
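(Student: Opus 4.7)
The plan is to extend the Phased SGD algorithm of \cite{FKT20} to target an arbitrary accuracy $\alpha \geq 6\alpha_{\epsilon,\delta}^*$ rather than only the accuracy-optimal rate. The algorithm proceeds in $K = O(\log(\rad\lip/\alpha))$ phases. In phase $k$, I run $T_k$ iterations of noisy projected gradient descent on the full ERM objective, with iterates constrained to a ball of radius $R_k := \rad / 2^{k-1}$ around the current solution $w_{k-1}$, step size $\eta_k$, and isotropic Gaussian noise of variance $\sigma_k^2 \mathbb{I}_d$ added to each full-batch gradient. The output of phase $k$ is the iterate average $w_k$, and the final output is $w_K$.

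For privacy, each phase is $\rho_k$-zCDP by the Gaussian mechanism on a gradient of sensitivity $\lip/n$; composing across phases and converting to approximate DP yields $(\epsilon,\delta)$-DP provided $\sum_k \rho_k = O(\epsilon^2/\log(1/\delta))$, which is where the slack $\alpha \geq 6\alpha_{\epsilon,\delta}^*$ is used to absorb constants in the conversion. For accuracy, I would argue inductively that $\E\|w_k - w^*\| \leq R_k$, conditional on the optimum lying in the previous phase's constraint set. The classical SGD analysis gives a per-phase excess ERM risk bound of the form $O(\lip R_{k-1}/\sqrt{T_k})$ plus a noise term of order $\sigma_k \sqrt{d}\,R_{k-1}$ (through the averaged iterate). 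Smoothness, as in \cite{FKT20}, is used to convert low excess risk back into small squared distance, which is what drives the geometric shrinkage of $R_k$; the assumption $\beta \leq \lip\sqrt{d\log(1/\delta)}/\rad$ is precisely what is needed for this conversion to not introduce an additional dimension-dependent factor that would spoil the stated runtime. After $K$ phases, $R_K = O(\alpha/\lip)$, and Lipschitzness converts this to excess risk $O(\alpha)$.

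The runtime is the sum $\sum_k T_k$, where $T_k$ is set so that both the optimization and noise contributions to the phase's error fit within $R_k/2$. With a careful allocation of the zCDP budget $\rho_k$ across phases (weighted so that more budget is spent in the later, higher-precision phases), these constraints yield per-phase iteration counts that sum to $O\big(\max\{\rad^2\lip^2/\alpha^2,\ \rad\lip\sqrt{d\log(1/\delta)}/\alpha\}\big)$; the first term dominates when noise is not the bottleneck, and the second when it is. The main technical obstacle is the simultaneous tuning of $T_k$, $\eta_k$, $\sigma_k$, and the allocation of $\rho_k$ across phases so that the Gaussian noise in the early, large-radius phases does not destroy the inductive hypothesis. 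Prior instantiations of Phased SGD target only $\alpha = \alpha_{\epsilon,\delta}^*$, so the interpolating parameter schedule has to be re-derived and then verified to attain the claimed runtime in both regimes of $\alpha$.
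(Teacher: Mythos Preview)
Your proposal diverges from the paper's proof in two ways that create genuine gaps.

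\textbf{Oracle complexity.} You propose noisy \emph{full-batch} gradient descent, noting that the per-step gradient has sensitivity $\lip/n$. But full-batch GD costs $n$ oracle calls per step. Even ignoring noise, $\beta$-smooth convex optimization needs $\Omega(\beta\rad^2/\alpha)$ GD steps, so with $\beta$ at its allowed maximum $\lip\sqrt{d\log(1/\delta)}/\rad$ the oracle count is $\Omega\big(n\cdot \rad\lip\sqrt{d\log(1/\delta)}/\alpha\big)$, an $n$-factor larger than the claimed bound. The paper instead uses \emph{stochastic} gradient steps (one sample per step), so the total step count \emph{equals} the oracle count; and the noise is injected not per step but once per phase, as output perturbation on the averaged iterate $\bar w_r$. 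Privacy then comes from the uniform stability of SGD on smooth losses (the \cite{HRS15} bound, which gives sensitivity $O(\lip\eta_r)$ for $\bar w_r$), combined with privacy amplification by subsampling.

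\textbf{Role of smoothness.} You state that smoothness is used to convert low excess risk into small squared distance $\|w_k-w^*\|^2$, driving geometric shrinkage of $R_k$. This is not how smoothness enters in \cite{FKT20} or here, and in fact no such conversion is available for merely convex smooth losses: small excess risk does not imply small distance to the minimizer. In the paper's proof, the accuracy argument never tracks $\|w_k-w^*\|$. Instead it telescopes
\[
\E[\cL(w_R)-\cL(w^*)] = \E[\cL(w_R)-\cL(\bar w_R)] + \sum_{r=1}^R \E[\cL(\bar w_r)-\cL(\bar w_{r-1})],
\]
and bounds each summand using the standard SGD guarantee against the comparator $\bar w_{r-1}$, noting that the starting point $w_{r-1}$ differs from $\bar w_{r-1}$ only by the Gaussian perturbation $\xi_{r-1}$. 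This argument uses only convexity and Lipschitzness. Smoothness is used \emph{only in the privacy analysis}: it is exactly what makes the SGD output stable enough that output perturbation with noise $\sigma_r = O(\rad/(4^r\sqrt{d}))$ suffices. The assumption $\beta \leq \lip\sqrt{d\log(1/\delta)}/\rad$ is there so that the step size $\eta$ the accuracy analysis wants is compatible with the $1/(2\eta)$-smoothness requirement of the stability lemma, not to enable a risk-to-distance conversion.
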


We provide a proof in Appendix \ref{app:smooth-ub}. This nearly matches the lower bound when $\alpha \geq 1/\sqrt{n}$. 
In the low error regime with $\beta$-smooth losses, an alternative algorithm based on solving a series of regularized ERM problems with accelerated ERM solvers can achieve similar results in roughly $O((n + \beta n/\sqrt{d})\log^2(n/\alpha))$ running time. A similar approach for DP-SCO was given in \cite{ZTOH22}. We provide details for this result in \ref{app:linear-smooth-ub}. In aggregate, these upper bounds imply the lower bound is essentially tight. 
Given that our upper bound does not use a private oracle, one question that arises is whether the lower bound, which holds for general DP algorithms, can still be matched by \textit{private oracle} algorithms. At least in the case of $1$-smooth losses and $\alpha=\alpha^*_{\epsilon,\delta}$, the results of \cite{choquette-choo25a} show the answer is yes. For $\omega(1)$-smooth losses it is unclear.

\section{Non-smooth optimization with information limited oracles} \label{sec:info-capped}

Our proof techniques from Section \ref{sec:nonsmooth} can easily be adapted to provide lower bounds for information limited oracles. In this section, we consider an individual loss $\cL$, i.e. $n=1$, such that the objective is to approximate $\argmin_{w\in\cW}\bc{\cL(w)}$. We are interested in algorithms of the form of Algorithm \ref{alg:private-oracle-optimizer} (ignoring the query indices) interacting with a proxy oracle of bounded information capacity. 

\begin{theorem} \label{thm:info-cap-oracle-lb}
Let $\tilde{\cO}$ be a $\Gamma$-information limited proxy oracle and $\cA$ an algorithm of the form given by Algorithm \ref{alg:private-oracle-optimizer} with $\ex{}{\bar{T}} \leq \frac{d}{640\log(\rad\lip/\alpha)}$.
If $d \geq \frac{C_2\rad^2\lip^2}{\alpha^2}$ then
$\worstruntime(\cA,\tilde{\cO},\alpha,\rad,\lip,\infty) = \Omega\bigro{\frac{\rad^2\lip^2 d}{\alpha^2 \Gamma}}$.
\end{theorem}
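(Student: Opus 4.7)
My plan is to follow the template of the proof of Theorem~\ref{thm:main-lb} in Section~\ref{sec:main-proof} almost verbatim, reusing both the Nemirovski-like hard instance $\cL(w)=\max\{\max_k f_k(w),h(w)\}$ and (crucially) Lemma~\ref{lem:info-lb}. The latter lower bounds the conditional mutual information required by any optimizer to solve the problem and depends only on the hard instance and the protocol of Algorithm~\ref{alg:private-oracle-optimizer}, not on the nature of the proxy oracle. The only component that must be redone is the information upper bound of Lemma~\ref{lem:info-ub}, where I must replace the zCDP/group-privacy per-round accounting with a bound driven by the information capacity $\Gamma$. After the standard rescaling to $\rad=1$, $\lip=2$ via Fact~\ref{fact:rescaling}, all definitions ($Q_t$, $\cnt_k$, $\hat X_k$, $P_t=(Y_{<t},\hat R_{\leq t})$, etc.) import directly.

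\textbf{Per-round information bound.} I would follow the chain-rule decomposition in Eqn.~\eqref{eq:mi-w-xk} unchanged to reduce everything to bounding $I(Y_t;\hat X_k\mid Q_{\leq t},P_t,X_{\neq k},\bsV)$ per round, and then case-split on $\cnt_k(Q_t)$. If $\cnt_k(Q_t)=0$, no gradient computed in round $t$ is taken through $f_k$, so $G_t$ is a deterministic function of $R_t$, $X_{\neq k}$, $\bsV$ (all in the conditioning) and the oracle's side information is a function of $Y_{<t}\subseteq P_t$; hence $Y_t$ is conditionally independent of $X_k$ and the mutual information vanishes. If $\cnt_k(Q_t)\geq 1$, the assumption that $\tilde\cO_\bot$ has information capacity at most $\Gamma$ for every $\bot$, together with data processing across $X_k\to G_t\to Y_t$, yields $I(Y_t;\hat X_k\mid\cdots)\leq I(Y_t;G_t\mid\cdots)\leq \Gamma$. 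Combining the two cases,
\[
  I(Y_t;\hat X_k\mid Q_{\leq t},P_t,X_{\neq k},\bsV) \;\leq\; \Gamma\cdot \PP[\cnt_k(Q_t)\geq 1],
\]
and summing over $t$ produces the info-limited analogue of Lemma~\ref{lem:info-ub}:
\[
  I(\out;\hat X_k\mid X_{\neq k},\bsV) \;\leq\; \Gamma\cdot \E\Bigl[\textstyle\sum_{t=1}^\infty \mathbb{I}[\cnt_k(Q_t)\geq 1]\Bigr] + \E[\bar T]\log(K+1).
\]

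\textbf{Concluding the proof.} Pairing this with Lemma~\ref{lem:info-lb}'s lower bound $I(\out;\hat X_k\mid X_{\neq k},\bsV)\geq d/160$ (which holds for every $k$), and using the hypothesis $\E[\bar T]\leq d/(640\log(\rad\lip/\alpha))$ to absorb the $\log(K+1)$ term, I obtain $\Gamma\cdot\E[\sum_t \mathbb{I}[\cnt_k(Q_t)\geq 1]]=\Omega(d)$ for every $k\in[K]$. Summing over $k$ and using the trivial bound $\sum_k \mathbb{I}[\cnt_k(Q_t)\geq 1]\leq M_t$ (each query is answered by at most one $f_k$) gives $\E[\|M\|_1]=\Omega(dK/\Gamma)=\Omega(d/(\alpha^2\Gamma))$. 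Passing to a worst case over $(X,\bsV)$ turns this expectation bound into a lower bound on $\worstruntime$, and the rescaling via Fact~\ref{fact:rescaling} restores the $\rad^2\lip^2$ prefactor claimed in the theorem.

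\textbf{Main obstacle.} The substantive step is verifying the $\cnt_k(Q_t)=0$ case of the per-round bound: one must be careful that conditioning on $(Q_{\leq t},P_t,X_{\neq k},\bsV)$ genuinely eliminates every channel through which $X_k$ could influence $Y_t$, relying on the fact that the proxy oracle's side information at round $t$ is a deterministic function of the past transcript $Y_{<t}$ alone. Beyond this, the argument is essentially bookkeeping: unlike the zCDP setting, where group privacy forces the quadratic blow-up $\cnt_k^2(Q_t)\rho$ and necessitates the truncation $\capcnt_k$, the information-capacity bound is insensitive to batch size and caps each round's leakage at $\Gamma$ through a single application of data processing. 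Consequently the final complexity scales as $1/\Gamma$ rather than the $1/\sqrt{\rho}$ seen in the private case.
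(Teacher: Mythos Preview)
Your proposal is correct and follows essentially the same route as the paper: you reuse the hard instance and Lemma~\ref{lem:info-lb} verbatim, replace the zCDP accounting with the indicator bound $I(Y_t;\hat X_k\mid Q_t=q_t,P_t)\leq \Gamma\cdot\mathbb{I}[\cnt_k(q_t)\geq 1]$ (this is exactly Lemma~\ref{lem:ic-info-ub}), and conclude via $\sum_k\mathbb{I}[\cnt_k(Q_t)\geq 1]\leq M_t$ after summing over $k$. Your explicit treatment of the $\cnt_k(Q_t)=0$ case and the data-processing chain $X_k\to G_t\to Y_t$ in the $\cnt_k(Q_t)\geq 1$ case is a touch more detailed than the paper's one-line justification, but the argument is otherwise identical.
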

Once again, we can drop the unique query limit and more simply lower bound the runtime as
$\Omega\big(\min\{ \frac{d}{\alpha^2 \Gamma}, \frac{d}{\log(1/\alpha)} \}\big)$. 
Note also that it is possible to have $\Gamma = \omega(d)$, which can be reasonable when the batch sizes are $\omega(1)$.
As a corollary of our result, consider the case where we fix the batch size in Algorithm \ref{alg:private-oracle-optimizer} to be $1$ and the proxy oracle is instantiated to be $\tilde{\cO}(w) = \nabla \cL(w) + \cN(0,\mathbb{I}_d \frac{\sigma^2}{d})$. A standard fact on Gaussian channels implies that for $1$-Lipschitz losses and $\sigma \geq 1$ this oracle has information capacity $\Gamma \leq d/\sigma^2$. Theorem \ref{thm:info-cap-oracle-lb} thus recovers the oracle complexity lower bound for stochastic oracles, $\Omega\big(\frac{\sigma^2}{\alpha^2}\big)$ \cite{nemirovsky85}, at least for certain parameter regimes. 
This $\frac{\sigma^2}{\alpha^2}$ lower bound is achieved by SGD, which incidentally also means our lower bound is tight when $\Gamma \geq 1/\alpha^2$. That said, it 
is perhaps more interesting to consider whether the bound can be matched via a quantization scheme; \cite{mayekar20-gradient-compression, mayekar20a-RATQ} shows this is the case up to log factors.

Despite recovering stochastic and quantized stochastic oracle complexity lower bounds, we emphasize that our lower bound also meaningfully diverges from such results. Such lower bounds have been obtained via a reduction to mean estimation, where each oracle response is a noisy version of this mean, e.g. \cite{nemirovsky85, Agarwal:2012}), and in the stochastic quantized oracle setting, strong data processing techniques are used to get better bounds \cite{mayekar20-gradient-compression, acharya-info-constrained-opt}. 
Clearly, we cannot hope to obtain Theorem \ref{thm:info-cap-oracle-lb} from such constructions, as transmitting the mean vector to $\alpha$ accuracy requires only $O(d\log(1/\alpha))$ bits of information. Put another way, the difficulty in previous lower bound constructions largely stems from the difficulty of mean estimation. Our bound relaxes the stochastic oracle assumption by leveraging structure unique to solving optimization problems. For similar reasons, it is not a-priori obvious that allowing the proxy oracle use a batch size larger than $1$, and thus transmit messages about multiple gradients, would not help improve oracle complexity. Our lower bound shows this is indeed the case.

\paragraph{Proof of Theorem \ref{thm:info-cap-oracle-lb}.}
The proof leverages the same loss construction and distribution as Theorem \ref{thm:main-lb}. In particular, we will apply Lemma \ref{lem:info-lb} verbatim. Upper bounding the information is in fact simpler than in Lemma \ref{lem:info-ub}.
\begin{lemma} \label{lem:ic-info-ub}
Under the conditions of Theorem \ref{thm:info-cap-oracle-lb}, for any $k\in[K]$ it hold that
\begin{align} \label{eq:ic-oracle-info-ub}
    I(\out;X_k|X_{\neq k},\bsV) \leq \E\Big[\sum_{t=1}^\infty \ind{\cnt_j(Q_t)\geq 1}\Big] \Gamma + \ex{}{\bar{T}}\log(K+1).
\end{align}
\end{lemma}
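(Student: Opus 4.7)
My plan is to mirror the proof of Lemma~\ref{lem:info-ub} almost verbatim, replacing the zCDP-based bound on per-round leakage with one derived from the proxy oracle's information capacity. Condition throughout on $\bsV = \bsv$ and $X_{\neq k} = x_{\neq k}$, set $P_t := (\Y_{<t}, \hat{R}_{\leq t})$, and apply the same chain-rule decomposition used to derive Eq.~\eqref{eq:mi-w-xk}:
\[ I(W; X_k) \leq I(\Y, \hat{R}, Q; X_k) \leq \sum_{t=1}^\infty I(Q_t; X_k \mid P_t, Q_{<t}) + \sum_{t=1}^\infty I(Y_t; X_k \mid P_t, Q_{\leq t}), \]
where the $\hat{R}_t$ contributions drop because each $\hat{R}_t$ is a deterministic function of $\Y_{<t}$ and $\cA$'s internal randomness. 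The $Q_t$ sum is handled exactly as in Lemma~\ref{lem:info-ub} via the entropy bound $H(Q_{t,l}\mid\cdot) \leq \log(K+1) \cdot \ind{R_{t,l}\notin R_{\leq t, <l}}$, contributing at most $\ex{}{\bar{T}}\log(K+1)$ after taking outer expectations.

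The only new step is bounding $I(Y_t; X_k \mid Q_t=q_t, P_t=p_t)$ pointwise. If $\cnt_k(q_t) = 0$, then no gradient in $G_t$ comes from $f_k$, so $G_t$ is a deterministic function of $(R_t, X_{\neq k}, \bsv)$ alone and is conditionally independent of $X_k$; since $Y_t = \tilde{\cO}_\bot(G_t)$ with $\bot$ determined by $p_t$, the mutual information vanishes. If $\cnt_k(q_t) \geq 1$, I would combine data processing with the information-capacity assumption:
\[ I(Y_t; X_k \mid Q_t=q_t, P_t=p_t) \leq I(Y_t; G_t \mid Q_t=q_t, P_t=p_t) \leq \Gamma, \]
since conditioning on $P_t=p_t$ fixes the side information $\bot$ and hence the response function $\tilde{\cO}_\bot$, which by assumption has information capacity at most $\Gamma$. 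Combining the two cases gives the pointwise estimate $I(Y_t; X_k \mid Q_t=q_t, P_t=p_t) \leq \ind{\cnt_k(q_t)\geq 1}\,\Gamma$, and summing over $t$ after taking outer expectations over $(Q_t, P_t)$ and averaging over $\bsv, x_{\neq k}$ yields~\eqref{eq:ic-oracle-info-ub}.

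The one (mildly) subtle technical point is justifying that the information-capacity bound survives the conditioning on $(Q_t=q_t, P_t=p_t)$. This follows because $\Gamma$ is defined as a supremum over source distributions on the input to $\tilde{\cO}_\bot$; the conditioning merely selects one particular source, namely the induced conditional law of $G_t$, and the fixed response function still has capacity at most $\Gamma$. Beyond this, the argument is a direct specialization of the proof of Lemma~\ref{lem:info-ub}, and the difference between bounding $I(W; X_k \mid \cdot)$ rather than $I(W; \hat{X}_k \mid \cdot)$ is harmless because information capacity applies directly without needing a quantized target.
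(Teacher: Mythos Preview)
Your proposal is correct and follows essentially the same approach as the paper's proof: both reuse the chain-rule decomposition from Eq.~\eqref{eq:mi-w-xk} to isolate the $\ex{}{\bar{T}}\log(K+1)$ term, then bound each $I(Y_t;X_k\mid Q_t=q_t,P_t)$ pointwise by $\Gamma\cdot\ind{\cnt_k(q_t)\geq 1}$ using the information-capacity assumption. Your version is in fact slightly more explicit than the paper's in justifying the zero-leakage case $\cnt_k(q_t)=0$ and in articulating why the capacity bound survives conditioning on $(q_t,p_t)$, but these are elaborations rather than departures.
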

\begin{proof}
As in the proof of Lemma \ref{lem:info-ub}, we condition on $V=v$ and $X_{\neq k} = x_{\neq k}$ throughout and recall the definitions of $Q$, $\cnt$, and $P_t=(Y_{< t}, \hat{R}_{\leq t})$. Using the same derivation as Eqn. \eqref{eq:mi-w-xk} we obtain,
\begin{align} \label{eq:mi-w-xk-2}
    I(\Y, R;X_{k}) &\leq 
    \ex{}{\bar{T}}\log(K+1) +\sum_{t=1}^T I(Y_t;X_k|Q_{\leq t},P_t). 
\end{align}
Further by the assumption on the oracle, when $Q_t=q_t$,
\begin{align*}
    I(Y_t;X_k|Q_t=q_t, P_t) \leq \Gamma \cdot \ind{\cnt_j(Q_t)\geq 1} \implies I(Y_t;X_k|Q_t, P_t) \leq \Gamma \cdot \ex{}{\ind{\cnt_j(Q_t)\geq 1}}.
\end{align*}
Recalling we have conditioned on $V=v$, we take expectation and plug into Eqn. \eqref{eq:mi-w-xk-2} to obtain the claim.
\end{proof}

\begin{proof}[Theorem \ref{thm:info-cap-oracle-lb}]
Under the assumption that $\ex{}{\bar{T}} \leq \frac{d}{320\log(1/\alpha^2)}$, applying Lemmas \ref{lem:info-lb} and \ref{lem:ic-info-ub} obtains,
\begin{align*}
    \Omega(dK) \leq \sum_{j=1}^K I(Y, R;X_j|X_{\neq j} V) &\leq \ex{}{\sum_{t=1}^\infty \sum_{j=1}^K \ind{\cnt_j(Q_t)\geq 1} }\Gamma + K\ex{}{\bar{T}}\log(K+1) \\
    &\leq \ex{}{\sum_{t=1}^\infty \sum_{j=1}^K M_t }\Gamma + K\ex{}{\bar{T}}\log(K+1) \\
    &= \ex{}{\|M\|_1}\Gamma + K\ex{}{\bar{T}}\log(K+1).
\end{align*}
We can then finish similarly to Theorem \ref{thm:main-lb}.
\end{proof}

\paragraph{Acknowledgements.} 
Michael Menart would like to thank Raef Bassily and Crist\'obal Guzm\'an for the insights gained while working with them on earlier attempts at this problem. This research was supported by an NSERC Discovery Grant (RGPIN-2021-03206), and the Canada Research Chairs program (CRC-2020-00004).

\bibliographystyle{alphaurl}
\bibliography{ref}

\appendix

\section{Supplementary Lemmas} \label{app:extra-lemmas}
The following lemmas and fact will be used several times throughout the appendix.

\begin{fact}\label{fact:rescaling}
Let $\cA$ be an algorithm with expected running time $T$ and which is $\alpha$-accurate for $(\cF_{1,\infty},\cW_1)$ and $\cW_1$. Then one run of $\cA$ can be used in a black box manner to obtain an $\alpha\rad\lip$-accurate algorithm for $(\cF_{\lip,\infty},\cW_\rad)$ with the same oracle complexity.
\end{fact}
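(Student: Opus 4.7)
}
The plan is the standard rescaling reduction. Given any target instance $\ell \in \cF_{\lip,\infty}$ on a constraint set $\cW \subseteq \cB(\rad)$, I would define the rescaled loss $\tilde{\ell}(u) := \frac{1}{\lip \rad}\,\ell(\rad u)$ and the rescaled constraint set $\tilde{\cW} := \{u \in \re^d : \rad u \in \cW\}$. A direct computation shows $\tilde{\cW} \subseteq \cB(1)$ and that $\tilde{\ell}$ is $1$-Lipschitz, since for any $u,u'$,
\[
|\tilde{\ell}(u) - \tilde{\ell}(u')| = \tfrac{1}{\lip\rad}|\ell(\rad u) - \ell(\rad u')| \leq \tfrac{1}{\lip\rad}\cdot \lip \cdot \rad\|u-u'\| = \|u-u'\|.
\]
So $\tilde{\ell} \in \cF_{1,\infty}$ and $\tilde{\cW}\in \cK_1$, i.e. the rescaled instance is exactly the class $\cA$ is designed to handle.

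Next I would simulate $\cA$'s first order oracle for $\tilde{\ell}$ using a single call to the given oracle for $\ell$. Since $\nabla \tilde{\ell}(u) = \tfrac{1}{\lip}\nabla \ell(\rad u)$, whenever $\cA$ queries the oracle at $u$, the reduction queries the true oracle at $\rad u$, receives $(\ell(\rad u), g)$ with $g \in \nabla \ell(\rad u)$, and returns $(\tfrac{1}{\lip\rad}\ell(\rad u),\,\tfrac{1}{\lip}g)$ to $\cA$. This is a valid first order oracle for $\tilde{\ell}$, and uses exactly one call to the original oracle per call of $\cA$, so the oracle complexity is preserved. In the finite sum case $n>1$, the same scaling is applied per-coordinate $\ell_i$ and the index $i$ is passed through unchanged.

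Finally, I would convert the output back: if $\cA$ returns $\tilde{U} \in \tilde{\cW}$, the reduction outputs $\out := \rad \tilde{U} \in \cW$. For the minimizers $\tilde{u}^* = \argmin_{u\in \tilde{\cW}}\tilde{\ell}(u)$ and $w^* = \argmin_{w\in\cW}\ell(w)$, the rescaling yields $w^* = \rad \tilde{u}^*$ and $\ell(\out) - \ell(w^*) = \lip\rad\,(\tilde{\ell}(\tilde{U}) - \tilde{\ell}(\tilde{u}^*))$, so taking expectations gives expected excess risk at most $\alpha \rad \lip$ as required. There is no real obstacle here beyond checking that the Lipschitz constant scales correctly and that the simulated oracle remains a valid subgradient oracle, both of which are immediate from the chain rule for subgradients of the linear reparameterization $u \mapsto \rad u$.
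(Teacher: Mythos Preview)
Your proposal is correct and is exactly the standard rescaling reduction the paper invokes. The paper's own justification is a one-line sketch (run $\cA$ on the rescaled constraint set and loss, then rescale the output by $\rad$); your write-up simply spells out the same construction, including the oracle simulation and the excess-risk identity, so there is nothing substantively different.
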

The above is a standard fact and comes from running $\cA$ on constraint set $\hat{\cW} = \bc{\frac{w}{\rad}: w\in\cW}$ and loss $\hat{\cL}(w) = \cL(w/\lip)$, and then rescaling the output of $\cA$ by $\rad$.

\begin{lemma}\label{lem:proj-commute}
Let $E,F$ be linear subspaces of $\re^d$ which are orthogonal to each other. Then $\Pi_E^\perp \Pi_F^\perp = \Pi_F^\perp \Pi_E^\perp = \Pi_{F,E}^\perp $.
\end{lemma}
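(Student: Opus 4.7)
The plan is to prove this via a short direct computation using the complementary projection identity $\Pi_S^\perp = I - \Pi_S$ together with the orthogonality of $E$ and $F$.

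First I would note that since every vector in $E$ is orthogonal to every vector in $F$, we have $\Pi_E \Pi_F = \Pi_F \Pi_E = 0$: the image of $\Pi_F$ lies in $\Span(F)$, which is contained in the kernel of $\Pi_E$, and symmetrically for the other composition. This is the only nontrivial observation in the argument.

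Next I would expand $\Pi_E^\perp \Pi_F^\perp = (I - \Pi_E)(I - \Pi_F) = I - \Pi_E - \Pi_F + \Pi_E \Pi_F = I - \Pi_E - \Pi_F$, using the previous step. Expanding $\Pi_F^\perp \Pi_E^\perp$ similarly gives the same expression, which establishes $\Pi_E^\perp \Pi_F^\perp = \Pi_F^\perp \Pi_E^\perp$.

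Finally I would identify this common value with $\Pi_{F,E}^\perp$. Since $E \perp F$, the orthogonal projection onto $\Span(E) \cup \Span(F) = \Span(E) \oplus \Span(F)$ decomposes as $\Pi_{E,F} = \Pi_E + \Pi_F$ (each vector's component in the sum is the sum of its components in $E$ and in $F$). Hence $\Pi_{F,E}^\perp = I - \Pi_E - \Pi_F$, matching the computation above. There is no real obstacle here; the only point to be careful about is justifying $\Pi_E\Pi_F = 0$, which follows immediately from $\mathrm{Range}(\Pi_F) \subseteq \Span(F) \subseteq \Span(E)^\perp = \ker(\Pi_E)$.
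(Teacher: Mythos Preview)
Your proof is correct. It differs from the paper's argument in style: the paper picks an orthonormal basis $u_1,\ldots,u_{d_E}$ for $E$, $u_{d_E+1},\ldots,u_{d_E+d_F}$ for $F$, completes to a basis of $\re^d$, and computes $\Pi_E^\perp v$ and $\Pi_F^\perp v$ coordinate by coordinate to see they commute, then invokes the general fact that a commuting product of orthogonal projections equals the projection onto the intersection. Your route is basis-free: you use $\Pi_S^\perp = I - \Pi_S$, the identity $\Pi_E\Pi_F = 0$ from orthogonality, and the decomposition $\Pi_{E,F} = \Pi_E + \Pi_F$ for orthogonal subspaces. Your version is slightly slicker and avoids the extra appeal to the ``commuting projections $\Rightarrow$ projection onto intersection'' fact; the paper's version is more hands-on but equally elementary. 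Either is fine here.
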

\begin{proof}
Let $d_E = \Dim(E)$ and $d_F = \dim(F)$. Let $u_1,...,u_{d_E}$ be an orthonormal basis for $E$, and $u_{d_E+1},...,u_{d_E+d_F}$ be an orthonormal basis for $F$. Let $u_{d_E+d_F+1},...,u_{d}$ be an orthonormal basis for the remaining space. For some vector $v\in\re^d$, let $\gamma_j = \ip{v}{u_j}$. Now clearly
\begin{align*}
    \Pi_E^\perp v = \Pi_E^\perp \sum_{j=1}^d \gamma_j u_j = \sum_{j=d_{E}+1}^d \gamma_j u_j,
\end{align*}
and similarly for $\Pi_{F}^\perp$. It is now easy to see the projections commute. Because they commute, the product of projections is equal to the projection onto the intersection.
\end{proof}

We will use the following result on privacy amplification via subsampling with replacement, which is a minor modification of  \cite[Lemma 4.14]{bun-dp-thresholds}.
\begin{lemma}\label{lem:amp-wor}
Let $\epsilon,\delta\in[0,1]$ and let $\cM$ be an $(\epsilon,\delta)$-DP algorithm for datasets of size $m > 0$. Then if $\epsilon \leq \min\bc{1,\frac{n}{2m}}$, the algorithm $\tilde{\cM}$, which on input dataset $S$ of size $n$, first samples $m$ points with replacement and then runs $\cM$ on the result is  $(\epsilon',\delta')$-DP with
\begin{align*}
    \epsilon' = 6\epsilon \frac{m}{n}  && \text{and} && \delta' = 4e^{(6\epsilon m/n)}\frac{m}{n}\delta.
\end{align*}
\end{lemma}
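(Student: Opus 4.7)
The plan is to adapt the standard privacy-amplification-by-subsampling argument of \cite[Lemma 4.14]{bun-dp-thresholds} to with-replacement sampling: couple the two subsamplings, apply group privacy conditioned on the size of the overlap, and average over the resulting binomial random variable.

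Fix neighboring datasets $S, S'$ differing at index $i$. Couple the subsampling by drawing indices $J_1, \ldots, J_m$ i.i.d. uniform on $[n]$ and using them for both datasets, producing $\hat S = (S_{J_l})_{l=1}^m$ and $\hat S' = (S'_{J_l})_{l=1}^m$. Marginally each is a correct with-replacement sample, and $\hat S, \hat S'$ differ in exactly $K := |\{l : J_l = i\}|$ positions, where $K \sim \mathrm{Bin}(m, 1/n)$ with mean $\gamma := m/n$. A standard induction converts $(\epsilon,\delta)$-DP of $\cM$ into the group-privacy statement
\[\Pr[\cM(\hat S) \in T \mid K = k] \le e^{k\epsilon} \Pr[\cM(\hat S') \in T \mid K = k] + k\,e^{(k-1)\epsilon}\delta\]
for every event $T$ and every $k \geq 1$. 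Since $K = 0$ forces $\hat S = \hat S'$, averaging over $K$ yields
\[\Pr[\tilde\cM(S) \in T] \le \mathbb{E}\bigl[e^{K\epsilon}\Pr[\cM(\hat S') \in T \mid K]\bigr] + \delta\,\mathbb{E}\bigl[K\,e^{(K-1)\epsilon}\,\mathbf{1}_{K \geq 1}\bigr].\]

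To turn this into a clean $(\epsilon', \delta')$-DP bound, I would split on whether $K$ lies below the threshold $k_0 := \lceil \epsilon'/\epsilon \rceil = \lceil 6\gamma \rceil$. When $K \le k_0$, use $e^{K\epsilon} \le e^{\epsilon'}$ and reassemble the conditional probability into $\Pr[\tilde\cM(S') \in T]$. When $K > k_0$, bound the conditional probability by $1$ and absorb $\mathbb{E}[e^{K\epsilon}\mathbf{1}_{K > k_0}]$ into $\delta'$. Both residual terms are controlled via the binomial moment generating function $\mathbb{E}[e^{tK}] = (1 + (e^t-1)/n)^m \leq \exp((e^t-1)\gamma)$ and its derivative, which handles the group-privacy leftover $\delta\,\mathbb{E}[K e^{(K-1)\epsilon}]$. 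The hypothesis $\epsilon \le n/(2m)$ is invoked precisely here: it forces $(e^\epsilon - 1)\gamma \le 2\epsilon\gamma \le 1$, so every exponential moment collapses to an absolute constant.

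The main obstacle is obtaining the stated constants $6$ and $4$. Choosing $k_0 = \lceil 6\gamma \rceil$ is what pins $\epsilon' = 6\epsilon m/n$; the $4$ emerges from pooling two sources of extra $\delta$: the group-privacy leftover, which the MGF derivative bounds by $\gamma\,e^{(e^\epsilon-1)\gamma}\delta$, and the Chernoff-type tail, which contributes an $O(\gamma e^{\epsilon'})\delta$ term because under $\mu := \epsilon\gamma \le 1/2$ the sum $\sum_{k > k_0} \binom{m}{k}(e^\epsilon/n)^k$ is geometric with a ratio bounded away from $1$. Both contributions collapse to the claimed $\delta' = 4 e^{6\epsilon m/n}(m/n)\delta$. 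Tightening either constant would require a sharper tail bound or a different split point; the present slack is precisely what makes the combinatorial bookkeeping clean.
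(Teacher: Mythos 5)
Your coupling setup is right, and the group-privacy bound $\Pr[\cM(\hat S)\in T\mid K=k]\le e^{k\epsilon}\Pr[\cM(\hat S')\in T\mid K=k]+ke^{(k-1)\epsilon}\delta$ is a correct statement. But the subsequent ``split at $k_0$ and absorb the tail into $\delta'$'' step is where the proof breaks, and not just in the constants: the tail term $\mathbb{E}\bigl[e^{K\epsilon}\mathbf{1}_{K>k_0}\bigr]$ does not carry a factor of $\delta$. It is a strictly positive quantity depending only on $m$, $n$, and $\epsilon$, so no amount of Chernoff bookkeeping will make it collapse into a bound of the form $c\,\gamma\,\delta$. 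In particular, if $\cM$ is $(\epsilon,0)$-DP the lemma asserts that $\tilde\cM$ is $(\epsilon',0)$-DP, but your argument would still produce a nonzero additive error. The gap is forced by the decision to bound $q_k'\le 1$ on the tail: once you do that you have discarded the only mechanism by which the argument could stay multiplicative.

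The paper avoids this by never splitting. It compares every conditional probability to the \emph{common} baseline $q_0$ (the case where the differing index is not sampled at all, so the two subsampled datasets coincide), i.e.\ it uses $q_k\le e^{k\epsilon}q_0+\frac{e^{k\epsilon}-1}{e^\epsilon-1}\delta$ and the symmetric lower bound $q_k'\ge e^{-k\epsilon}q_0-\cdots$, and then sums the full binomial series in closed form. This produces exactly
\[
\Pr[\tilde\cM(D)\in S]=q_0\Bigl(1-\tfrac1n+\tfrac{e^\epsilon}{n}\Bigr)^m+\frac{\bigl(1-\tfrac1n+\tfrac{e^\epsilon}{n}\bigr)^m-1}{e^\epsilon-1}\,\delta,
\]
and the analogous lower bound for $D'$, so the privacy-loss factor is the \emph{ratio} of two MGF-type expressions, $\bigl(\tfrac{1-1/n+e^\epsilon/n}{1-1/n+e^{-\epsilon}/n}\bigr)^m\le e^{6\epsilon m/n}$, while the additive slack remains a clean multiple of $\delta$. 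The essential difference from your route is that comparing to $q_0$ lets both the good and the tail mass of $K$ live on the ``multiplicative'' side of the inequality; directly comparing $q_k$ to $q_k'$ and then truncating cannot do this. If you want to salvage the split-point idea, you would have to keep $q_k'$ in the tail rather than bound it by $1$, but then you have no way to reassemble it into $\Pr[\tilde\cM(S')\in T]$ with a uniform multiplicative constant. I'd recommend restructuring around the $q_0$ baseline and the exact binomial sum as the paper does.
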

In contrast to the original statement, this lemma applies when $m > n$. Obviously, in this regime the result does not amplify privacy, and rather controls the impact of the likely event in which a datapoint gets copied into the sampled dataset many times. Nonetheless, this unified phrasing will be convenient. The proof is nearly identical. Bounding $\epsilon$ follows in exactly the same way, and to bound $\delta$ we leverage our additional assumption that $\epsilon \leq \min\bc{1,\frac{n}{2m}}$. We have copied the proof from \cite{bun-dp-thresholds}, with the necessary modification, below.

\begin{proof}[Proof of Lemma \ref{lem:amp-wor}]
Let $D, D^{\prime}$ be adjacent databases of size $n$, and suppose without loss of generality that they differ on the last row.
Let $T$ be a random variable denoting the multiset of indices sampled by $\tilde{\cM}$ and let $\ell(T)$ be the multiplicity of index $n$ in $T$. Fix a subset $S$ of the range of $\tilde{\cM}.$ For each $k=0,1,...,m$ let
\begin{align*}
    p_{k}&=\PP[\ell(T)=k]=\binom{m}{k}n^{-k}(1-1/n)^{m-k}=\binom{m}{k}(n-1)^{-k}(1-1/n)^{m}, \\
    q_{k}&=\PP[\cM(D|_{T})\in S|\ell(T)=k], \\
    q_{k}^{\prime}&=\PP[\cM(D^{\prime}|_{T})\in S|\ell(T)=k].
\end{align*}

Here, $D|_{T}$ denotes the subsample of $D$ consisting of the indices in $T$, and similarly for $D^{\prime}|_{T}$. Note that $q_{0}=q_{0}^{\prime}$, since $D|_{T}=D^{\prime}|_{T}$ if index $n$ is not sampled. Our goal is to show that
$$\PP[\tilde{\cM}(D)\in S]=\sum_{k=0}^{m}p_{k}q_{k}\le e^{\epsilon'}\sum_{k=0}^{m}p_{k}q_{k}^{\prime}+\delta'=e^{\epsilon'}\PP[\tilde{\cM}(D^{\prime})\in S]+\delta'.$$
To do this, observe that by privacy, $q_{k}\le e^{\epsilon}q_{k-1}+\delta$ so
\begin{align*}
    q_k \leq e^{k\epsilon}q_0 + \frac{e^{k\epsilon}-1}{e^\epsilon-1}\delta
\end{align*}

Hence,
\begin{align*}
\PP[\tilde{\cM}(D)\in S]&=\sum_{k=0}^{m}p_{k}q_{k} \\
&\le\sum_{k=0}^{m}\binom{m}{k}(n-1)^{-k}(1-1/n)^{m}\left(e^{k\epsilon}q_{0}+\frac{e^{k\epsilon}-1}{e^{\epsilon}-1}\delta\right) \\
&=q_{0}(1-1/n)^{m}\sum_{k=0}^{m}\binom{m}{k}\left(\frac{e^{\epsilon}}{n-1}\right)^{k}+\frac{\delta}{e^{\epsilon}-1}(1-1/n)^{m}\sum_{k=0}^{m}\binom{m}{k}\left(\frac{e^{\epsilon}}{n-1}\right)^{k}-\frac{\delta}{e^{\epsilon}-1} \\
&=q_{0}(1-1/n)^{m}\left(1+\frac{e^{\epsilon}}{n-1}\right)^{m}+\frac{\delta}{e^{\epsilon}-1}(1-1/n)^{m}\left(1+\frac{e^{\epsilon}}{n-1}\right)^{m}-\frac{\delta}{e^{\epsilon}-1} \\
&=q_{0}\left(1-\frac{1}{n}+\frac{e^{\epsilon}}{n}\right)^{m}+\frac{\left(1-\frac{1}{n}+\frac{e^{\epsilon}}{n}\right)^{m}-1}{e^{\epsilon}-1}\delta. \quad (1)
\end{align*}
Similarly, we also have that
$$\PP[\tilde{\cM}(D^{\prime})\in S]\ge q_{0}\left(1-\frac{1}{n}+\frac{e^{-\epsilon}}{n}\right)^{m}-\frac{\left(1-\frac{1}{n}+\frac{e^{-\epsilon}}{n}\right)^{m}-1}{e^{-\epsilon}-1}\delta,$$
Combining inequalities 1 and 2 we get that
$$\PP[\tilde{\cM}(D)\in S]\le\left(\frac{1-\frac{1}{n}+\frac{e^{\epsilon}}{n}}{1-\frac{1}{n}+\frac{e^{-\epsilon}}{n}}\right)^{m}\cdot\left(\PP[\tilde{\cM}(D^{\prime})\in S]+\frac{1-\left(1-\frac{1}{n}+\frac{e^{-\epsilon}}{n}\right)^{m}}{1-e^{-\epsilon}}\delta\right)+\frac{\left(1-\frac{1}{n}+\frac{e^{\epsilon}}{n}\right)^{m}-1}{e^{\epsilon}-1}\delta,$$
proving that $\tilde{\cM}$ is $(\epsilon',\delta')$-DP for
\begin{align*}
    \epsilon' \leq m \log\br{\frac{1+\frac{e^\epsilon-1}{n}}{1+\frac{e^{-\epsilon}-1}{n}}} \leq \frac{6\epsilon m}{n}.
\end{align*}
Using $m \geq n/2$ and $\epsilon \leq \min\{1,\frac{n}{2m}\}$, we have,
\begin{align*}
   \delta' &\leq e^{6\epsilon \frac{m}{n}} \frac{1-\exp\br{\frac{2m}{n}(e^{-\epsilon}-1)}}{1-e^{-\epsilon}}\delta + \frac{\exp\br{\frac{m}{n}(e^\epsilon-1)}-1}{e^\epsilon - 1}\delta \\
   &\leq e^{6\epsilon \frac{m}{n}} \frac{1-\exp\br{-\frac{2m}{n}\epsilon}}{1-e^{-\epsilon}}\delta + \frac{\exp\br{\frac{2m}{n}\epsilon}-1}{e^\epsilon-1}\delta \\
   &\leq e^{6\epsilon \frac{m}{n}}\br{\frac{2m}{n}}\delta + (\frac{2m}{n})\delta \\
   &\leq e^{6\epsilon \frac{m}{n}}\br{\frac{4m}{n}}\delta. \qedhere
\end{align*}
\end{proof}

\section{Supplement to Section \ref{sec:nonsmooth}}
\subsection{Proof of Lemma \ref{lem:info-lb} (information lower bound)}\label{app:info-lb}
\begin{lemma}\label{lem:info-lb-restatement} (Restatement of Lemma \ref{lem:info-lb})
Let $d \geq \frac{C_2}{\alpha^2}$ for some constant $C_2$.
Under the problem distribution given at the start of Section \ref{sec:main-proof},
if $\cA$ is of the form given by Algorithm \ref{alg:private-oracle-optimizer} with 
~$\ex{}{\bar{T}} \leq \frac{d}{160\log(1/\alpha)}$,
then (for any proxy oracle)
\begin{align*}
   \min_{k}\bc{I(\out;\hat{X}_k|X_{\neq k},\bsV)} \geq \frac{d}{160}.
\end{align*}      
\end{lemma}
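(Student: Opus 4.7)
The starting point is the Fano-type bound \eqref{eq:dist-lb-main}, which I would prove by applying Fano's inequality to the uniform hypothesis class $\hat X_k \in \cC$: since $|\cC| \ge (\ipconst/3)^d$ and distinct atoms are at distance $\ge \alpha$, an estimator that succeeds within $40\alpha$ identifies $\hat X_k$ essentially uniquely (on a constant-fraction subpacking, at the cost of a constant factor). The lemma then reduces to constructing an estimator $\hat W$ of $\hat X_k$ that (a) satisfies $\|\hat W - \hat X_k\| < 40\alpha$ with constant probability, and (b) is built from $W$ in a way whose conditional mutual information with $\hat X_k$ exceeds that of $W$ itself by at most $o(d)$.

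\textbf{Accuracy of $\hat W$.} I would use the estimator $\hat W = \Pi^{\perp}_{X_{\neq k}, \tilde{\cS}} \Pi^{\perp}_Z W$ from the sketch and split the analysis into three ingredients. First, because $w^\star = \ipconst\alpha \sum_j X_j$ has zero loss, the $\alpha$-accuracy of $\cA$ yields $\E[|\langle W, X_k\rangle - \ipconst\alpha|] \le \alpha$ and $\E[\|\Pi_{\bsV} W\|] \le \alpha$. Second, a short linear-algebra computation shows that the two projections (i) kill $\Span(X_{\neq k})$, (ii) remove $\cS$ (the orthogonal complement of $X_k$ inside $\Span(O)$) up to an $O(\alpha)$ leakage along $X_k$ coming from the $\alpha$-cover approximation $\tilde X_k \approx \Pi_O X_k$, and (iii) erase the ``observed'' part $\Span(Z) \subseteq \bsV$; what survives is a near-faithful copy of the $X_k$-component, a residual in $\bsV \setminus \Span(Z)$, and a component in the subspace unpenalized by $\cL$. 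Third, Lemma~\ref{lem:regularization-component-main} bounds this last unpenalized component by $2\|\Pi_{\bsV}\hat W\| + 2\alpha$ with probability $\ge 9/10$. A union bound over the two Markov inequalities and the failure event of Lemma~\ref{lem:regularization-component-main} then gives $\PP[\|\hat W - \hat X_k\| \ge 40\alpha] \le 1/4$, which through \eqref{eq:dist-lb-main} forces $I(\hat W; \hat X_k \mid X_{\neq k}, \bsV) = \Omega(d)$.

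\textbf{From $\hat W$ back to $W$, and the main obstacle.} Since $\hat W$ is a deterministic function of $(W, R, \tilde X_k)$ once we condition on $X_{\neq k}$ and $\bsV$, data processing and the chain rule give
\begin{align*}
I(\hat W; \hat X_k \mid X_{\neq k}, \bsV)
\le I(W, R; \hat X_k \mid X_{\neq k}, \bsV) + H(\tilde X_k \mid R, \bsV).
\end{align*}
Given $R$ and $\bsV$, $\Span(O)$ has dimension at most $\bar T$, so $|\cC_O| \le (3/\alpha)^{\bar T}$ and the entropy term is at most $\E[\bar T]\log(3/\alpha)$. Under the hypothesis $\E[\bar T] \le d/(160\log(1/\alpha))$ this is at most $d/160$, which when absorbed into the Fano bound still leaves the advertised $d/160$ slack. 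A minor point: the statement phrases the conclusion in terms of $I(W;\hat X_k\mid\cdots)$, but Lemma~\ref{lem:info-ub} in fact upper-bounds $I(Y,\hat R,Q;\hat X_k\mid\cdots)$, so augmenting the lower bound with $R$ is harmless when the two lemmas are chained. The genuinely hard step is Lemma~\ref{lem:regularization-component-main}: showing that without $\Omega(d)$ distinct queries the optimizer cannot learn $\bsV$ well enough to concentrate the unpenalized component of $W$. Everything else is Fano, data processing, and bookkeeping of constant-probability failure events so that their union fits within the $1/2$ slack in \eqref{eq:dist-lb-main}.
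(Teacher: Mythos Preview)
Your proposal is correct and follows essentially the same approach as the paper: the same estimator $\hat W = \Pi^\perp_{X_{\neq k},\tilde\cS}\Pi^\perp_Z W$, the same three-part accuracy argument culminating in Lemma~\ref{lem:regularization-component-main}, and the same entropy budget $H(\tilde X_k\mid O)\le \E[\bar T]\log(3/\alpha)$ for the information added by the extra projection. The one place the paper is more careful than your sketch is the Fano step: since $\hat X_k$ is the snap of a sphere-supported vector in a $(d/2-K+1)$-dimensional subspace and is \emph{not} uniform over $\cC$, the paper invokes a distance-based Fano variant (their Lemma~\ref{lem:distance-fano}) together with an entropy-after-snapping bound (their Lemma~\ref{lem:entropy-after-snapping}) rather than the naive uniform-hypothesis-class argument you outline.
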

We note that the lemma does not assume any structure of the proxy oracle, and thus holds even for the case when the optimizer interacts with the true oracle.
Before giving the proof, we will require some additional notation which will also be used in later lemmas.
Let $Z=\bc{Z_{t,l}}_{t\in[T],l\in[M_t]}$ be the random variables such that $Z_{t,l} = \nabla h(R_{t,l})$. Further let $O=\bc{O_{t,l}}_{t\in[T],l\in[M_t]}$ be such that $O_{t,l}$ is the unit vector orthogonal $Z_{t,l}$ in the plane spanned by $Z_{t,l}$ and $R_{t,l}$, taken with $\ip{O_{t,l}}{R_{t,l}} \geq 0$ to break ambiguity. If $Z_{t,l}$ and $R_{t,l}$ are colinear, define $O_{t,l}$ to be the zero vector. 

Finally, we remark that Lemma \ref{lem:proj-commute} (Appendix \ref{app:extra-lemmas}) will be used several times throughout the proof, and the results to reach Eqn. \eqref{eq:dist-lb} are given in the subsequent subsection, Appendix \ref{app:lemmas-for-info-lb}.
We will now proceed with the proof. 
\begin{proof}[Proof of Lemma \ref{lem:info-lb-restatement}]
Fix some $k\in [K]$. 
Our information lower bound starts by leveraging a variant of Fano's method, whose details we defer to Lemma \ref{lem:fano-info-lb} in Appendix \ref{app:lemmas-for-info-lb}. Via this lemma, we have that any estimator $\hat{W}$ satisfies,
\begin{align} \label{eq:dist-lb}
\PP\bs{\|\hat{W} - \hat{X}_k\| \geq 40\alpha}
&\geq  \frac{1}{2} - 8\cdot \frac{I(\hat{W};\hat{X}_k |X_{\neq k},\bsV)) + 1}{d}.
\end{align}
The rest of the proof will be devoted to showing we can construct such an estimator from $\out$ and a bounded amount of additional information.

\paragraph{Constructing estimator, $\hat{W}$.} 
Our first step is to transform the output of $\cA$ into a vector which is close (in Euclidean distance) to $\hat{X}_k$. To do this we, roughly speaking, need to remove several components of $\out$: the component contained in $\Span(X_{\neq k})$, the component in $\Span(O)$, and the component in $\Span(Z)$. 
 
Let $\cC_O$ be a minimal $\alpha$-cover of $\Span(O)$ (chosen deterministically given $O$) and define $\tilde{X}_k = \min\limits_{c\in\cC_O}\bc{\|\Pi_{O}X_k - c\|}$.
Now let $\cS$ be the orthogonal complement of $X_k$ inside $\Span(O)$ and $\tilde{\cS}$ be the orthogonal complement of $\tilde{X}_k$ inside $\Span(O)$.
Finally, let $\hat{W} = \Pi^\perp_{X_{\neq k},\tilde{\cS}} \Pi_{Z}^\perp \out$. 
Intuitively, $\Pi_{X_{\neq k},\tilde{\cS}}^\perp$ approximately removes the component of $\out$ in the subspace spanned by $O$ that does not overlap with $X_k$. The projection $\Pi_Z^\perp$ can be interpreted as removing the component of $\out$ in the ``known'' span of $V$.
It is worth noting that $\hat{W}$ is still an accurate solution in some sense, a fact we will now show in more detail.

\paragraph{Showing $\hat{W}$ is accurate. }
Analyzing now the distance term inside the probability on the LHS of Eqn. \eqref{eq:dist-lb} above, 
we have,
\begin{align} \label{eq:dist-ub}
    \|\hat{W} - \hat{X}_k\| \leq \|\hat{W}-\ipconst\alpha X_k\| + 2\alpha
    \leq 2\max\bc{\|\Pi^\perp_{X_k} \hat{W} \|, |\ipnos{\hat{W}}{X_k} - \ipconst\alpha|} + 2\alpha.
\end{align}
The first inequality uses the fact that $\cC$ is a $2\alpha$-cover.
We will now show that both terms in the maximum are bounded with constant probability using the accuracy condition.
Towards this end, observe that for every possible instantiation of $X$ and $\bsV$, there exists a minimizer, $w^*=\ipconst\alpha\sum_{k=1}^K X_k$, with $0$ loss. Recall $K=\frac{1}{\ipconst^2 \alpha^2}$, so indeed $w^* \in\cB(1)$.

We now analyze the inner product term in the RHS of Eqn. \eqref{eq:dist-ub}.
Since the minimizer has $0$ loss, by the accuracy condition on $\cA$, it must be that
$\ex{}{|\ipnos{\out}{X_k} - \ipconst\alpha|} \leq \ex{}{F(W)-F(w^*)} \leq \alpha$, 
and so by Markov's inequality,
$\pr{}{|\ipnos{\out}{X_k} - \ipconst\alpha| \geq 10\alpha} \leq \frac{1}{10}$.
Thus it suffices to show $\ipnos{\hat{W}}{X_k} \approx \ipnos{W}{X_k}$.
We first have,
\begin{align*}
    |\ipnos{W}{X_k} - \ipnos{\hat{W}}{X_k}| = |\ipnos{W}{X_k} - \ipnos{W}{\Pi_{\cS}^\perp X_k}| = |\ipnos{W}{\Pi_\cS X_k}|.   
\end{align*}
The first equality follows from the fact that $\hat{W} = \Pi_{Z}^\perp \Pi^{\perp}_{X_{\neq k}} \Pi^\perp_{\cS} \out$ and $Z$ and $X_{\neq k}$ are orthogonal to $X_k$. The second equality uses $X_k = \Pi_\cS X_k + \Pi_\cS^\perp X_k$. 
Continuing,
\begin{align*}
    \|\Pi_\cS X_k\| \overset{(i)}{=} \|\Pi_\cS \Pi_O \tilde{X}_k + \Pi_\cS  \Pi_O (X_k-\tilde{X}_k)\| \overset{(ii)}{=} \|\Pi_\cS \Pi_O (X_k-\tilde{X}_k)\| \leq \|\Pi_O (X_k-\tilde{X}_k)\| \overset{(iii)}{\leq} \alpha.
\end{align*}
Here, $(i)$ uses the fact that $\cS \subseteq \Span(O)$, $(ii)$ uses the fact that $\Pi_\cS \tilde{X}_k = 0$, 
and $(iii)$ uses the fact that $\tilde{X}_k \in \Span(O)$ and 
$\|\Pi_O X_k - \tilde{X}_k\|\leq \alpha$.
We now obtain 
$|\ipnos{\hat{W}-\out}{X_k}| \leq 2\alpha$ from Cauchy Schwartz and the fact that $\|\hat{W}-W\|\leq 1$. 

This inner product difference with the previously derived fact that 
$\pr{}{|\ipnos{\out}{X_k} - \ipconst\alpha| \geq 10\alpha} \leq \frac{1}{10}$ finally yields,
\begin{align} \label{eq:ip-diff-via-acc}
    \pr{}{|\ipnos{\hat{W}}{X_k} - \ipconst\alpha| \geq 12\alpha} \leq \frac{1}{10}.
\end{align}

We now address the norm term in Eqn. \eqref{eq:dist-ub}.
We have,%
\begin{align*}
    \ex{}{\|\Pi_V\Pi_{X,\cS}^\perp\Pi_{Z}^\perp W\|} &= \ex{}{\|\Pi_V\Pi_{X,\cS}^\perp(\Pi_Z^\perp W - W + W)\|} \\
    &\leq \ex{}{\|\Pi_Z^\perp W - W\| + \|\Pi_V\Pi_{X,\cS}^\perp W\|} \\
    &\leq \ex{}{\|\Pi_Z W\|} + \ex{}{\|\Pi_V W\|} \\
    &\leq \alpha. 
\end{align*}
Above, we have used the fact that $V$ is orthogonal to $\Span(X) \cup \cS$ and Lemma \ref{lem:proj-commute}. The last inequality uses the accuracy condition, since $\cL(w)-\cL(w^*) \geq 2\|\Pi_V w\|$. Continuing,
\begin{align*}
    \ex{}{\|\Pi_V\Pi_{X,\cS}^\perp\Pi_Z^\perp W\|} \leq \alpha 
    &\overset{(i)}{\implies} \pr{}{\|\Pi_V\Pi_{X,\cS}^\perp\Pi_Z^\perp\out\| \geq 5\alpha} \leq 1/5 \\
    &\overset{(ii)}{\implies} \pr{}{\|\Pi_{X,\cS}^\perp\Pi_Z^\perp \out\| \geq 12\alpha} \leq 3/10 \\
    &\overset{(iii)}{\iff} \pr{}{\|\Pi_{X_k}^\perp \Pi_{X_{\neq k},\cS}^\perp\Pi_Z^\perp \out\| \geq 12\alpha} \leq 3/10 \\
\end{align*}
Implication $(i)$ uses Markov's inequality, and $(ii)$ results from Lemma \ref{lem:regularization-component-main}. %
For implication $(iii)$, we apply Lemma \ref{lem:proj-commute} since $X_k$ is orthogonal to $X_{\neq k}$ and $\cS$.

Now observe for any $u\in\cB(1)$, $\|\Pi_{X_{\neq k},\cS}^\perp u - \Pi_{X_{\neq k},\tilde{\cS}}^\perp u\| \leq \alpha$ since $\Span(X_{\neq k})\cup \cS$ and $\Span(X_{\neq k}) \cup \tilde{\cS}$ are close. That is, for any $v\in\cB(1)\cap \cS$, there exists $\xi\in \Span(O)\cap \cB(\alpha)$ s.t. $v+\xi \perp \tilde{X}$, and thus $v+\xi \in \tilde{S}$. 
Thus we obtain,
\begin{align}
     \pr{}{\|\Pi_{X_k}^\perp \Pi_{X_{\neq k},\tilde{\cS}}^\perp\Pi_Z^\perp \hat{W}\| \geq 13\alpha} = \pr{}{\|\Pi_{X_k}^\perp \hat{W}\| \geq 13\alpha} \leq 3/10.
\end{align}
Using this probability bound and Eqn. \eqref{eq:ip-diff-via-acc} above we obtain,
$$\pr{}{\max\bc{\|\Pi^\perp_{X_k} \hat{W} \|, |\ipnos{\hat{W}}{\hat{X}_k} - \ipconst\alpha|} \geq 12\alpha} \leq \frac{1}{10}+\frac{3}{10}\leq \frac{2}{5}.$$
Combing this fact with Eqns. \eqref{eq:dist-lb} and $\eqref{eq:dist-ub}$ we obtain,
\begin{align*}
    \frac{2}{5} \geq \PP\bs{2\max\bc{\|\Pi^\perp_{X} \hat{W} \|, |\ipnos{\hat{W}}{X_k} - \ipconst\alpha|} + 2\alpha \geq 40\alpha} \geq \frac{1}{2} - 8\cdot \frac{I(\hat{W};\hat{X}_k | X_{\neq k},\bsV) + 1}{d}.
\end{align*}
Consequently we have,
\begin{align} \label{eq:reconstruction-info-lb}
    \frac{d}{80} \leq I(\hat{W};\hat{X}_k | X_{\neq k},\bsV).
\end{align}

\paragraph{Showing $\hat{W}$ does not add much information. }
We now show that generating $\hat{W}$ does not add too much information beyond what is contained in $\out$. Let $\hat{R}$ denote the information sent to $\tilde{\cO}$ over the training run. Observe, 
\begin{align} \label{eq:reconstruction-info-add}
 I(\hat{W};\hat{X}_k | X_{\neq k}, \bsV) 
 &\leq I(W, O, Z, \tilde{X}; \hat{X}_k| X_{\neq k}, \bsV) \nonumber \\
  &\leq I(Y, \hat{R}, O, Z, \tilde{X}; \hat{X}_k| X_{\neq k}, \bsV) \nonumber \\
 &= I(\Y, \hat{R}; \hat{X}_k | X_{\neq k}, \bsV) + I(O, Z; \hat{X}_k | X_{\neq k}, \bsV, \Y, \hat{R}) + I(\tilde{X}_k ; \hat{X}_k | X_{\neq k}, \bsV, \Y, \hat{R}, O, Z) \nonumber \\
 &\overset{(i)}{\leq} I(\Y, \hat{R}; \hat{X}_k| X_{\neq k}, \bsV) + H(\tilde{X}_k | O) \nonumber \\
 &\overset{(ii)}{\leq} I(\Y, \hat{R}; \hat{X}_k | X_{\neq k}, \bsV) + \ex{}{\bar{T}}\log(3/\alpha) \nonumber \\
 &\leq I(\Y, \hat{R}; \hat{X}_k | X_{\neq k}, \bsV) + \frac{d}{160} .  
\end{align}
The first inequality uses the fact that $\hat{W}$ is determined by $W$, $O$, $Z$, and $\tilde{X}$.
Step $(i)$ uses that $O$ and $Z$ are deterministic conditioned on $\bsV$ and $\hat{R}$. 
Line $(ii)$ uses $H(\tilde{X}_k|O) \leq \ex{}{\log(|\cC_O|)} \leq \ex{}{d_O}\log(3/\alpha)$.

Finally, combining Eqns. \eqref{eq:reconstruction-info-add} and \eqref{eq:reconstruction-info-lb}, which hold for any choice of $k$, we obtain,
\begin{align*}
    \frac{d}{160} &= \min_{k}\bc{I(W ; \hat{X}_k|X_{\neq k},\bsV)}.  
    \qedhere
\end{align*} 
\end{proof}

\subsubsection{Fano variant used in proof of Lemma \ref{lem:info-lb-restatement}/Lemma \ref{lem:info-lb}} \label{app:lemmas-for-info-lb}
\begin{lemma} \label{lem:fano-info-lb}
For any estimator $\hat{W}$, it holds that,
\begin{align*}
\PP\bs{\|\hat{W} - \hat{X}_k\| \geq 40\alpha} 
&\geq  \frac{1}{2} - 8\cdot \frac{I(\hat{W};\hat{X}_k |X_{\neq k},\bsV)) + 1}{d}.
\end{align*}
\end{lemma}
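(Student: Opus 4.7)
}
The plan is to apply a local Fano-type inequality after conditioning on $(X_{\neq k},\bsV)$.
First, identify the conditional law of $X_k$: conditional on $(X_{\neq k},\bsV)=(x_{\neq k},v)$ in their support, the vector $X_k$ is uniform on the unit sphere of $U:=(\Span(x_{\neq k})\cup v)^{\perp}$. Since $\bsV$ has dimension $d/2$ and $K=1/(\ipconst^{2}\alpha^{2})$, the dimension $d':=\dim U=d/2-(K-1)$ satisfies $d'\geq d/4$ whenever $d\geq C_{2}/\alpha^{2}$ for $C_{2}$ chosen sufficiently large relative to $\ipconst=480$. Hence $\ipconst\alpha X_{k}$ is uniform on the sphere of radius $\ipconst\alpha$ in $U$, which sits inside $\cB(\ipconst\alpha)\subset\re^{d}$.

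Next, set up a packing-based decoder. Using that $\cC$ is an $\alpha$-packing and $2\alpha$-cover of $\cB(\ipconst\alpha)$, restrict attention to (or construct analogously inside $U$) an $\alpha$-packing/$2\alpha$-cover $\cC_{U}$ of $\cB_{U}(\ipconst\alpha)$; the packing bounds recalled in the Preliminaries give $|\cC_{U}|\geq \ipconst^{d'}$, and at most $B\leq (3\cdot 42)^{d'}=126^{d'}$ points of $\cC_{U}$ lie in any ball of radius $42\alpha$. Let $\tilde W$ denote the Euclidean projection of $\hat W$ onto $\cB(\ipconst\alpha)$; this can only decrease $\|\hat W-\hat X_{k}\|$ and cannot increase the mutual information, so we may assume WLOG that $\hat W\in\cB(\ipconst\alpha)$. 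Define $\hat Y:=\arg\min_{c\in\cC}\|\hat W-c\|$. If $\|\hat W-\hat X_{k}\|<40\alpha$, the $2\alpha$-cover property gives $\|\hat W-\hat Y\|\leq 2\alpha$, hence $\|\hat Y-\hat X_{k}\|\leq 42\alpha$, so $\hat Y$ takes at most $B$ values once $\hat X_{k}$ is fixed.

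Finally, apply the standard local Fano bound: for $\hat X_{k}$ uniform on $\cC_{U}$ and any estimator $\hat Y=f(\hat W)$,
\[
\PP\bigl(\|\hat W-\hat X_{k}\|<40\alpha \,\bigm|\, X_{\neq k},\bsV\bigr)
\;\leq\; \frac{I(\hat W;\hat X_{k}\mid X_{\neq k},\bsV)+\log 2}{\log(|\cC_{U}|/B)},
\]
with denominator $\geq d'\log(\ipconst/126)=\Omega(d)$. Integrating over $(X_{\neq k},\bsV)$ and using the trivial slack $1-x\geq \tfrac{1}{2}-x$ delivers the form stated in the lemma; the constant $8$ comes from absorbing $1/\log(\ipconst/126)$ and the ratio $d/d'$.

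The main obstacle is that $\hat X_{k}$ is not literally uniform on $\cC_{U}$: it is the quantization of a sphere-uniform variable into a packing of a ball, and cells have unequal mass. Two clean fixes are available. (i) Randomize $\cC$ by a uniformly random orthogonal transformation; this preserves all packing/cover constants and, by rotational symmetry, makes the marginal of $\hat X_{k}$ exactly uniform on the randomized packing, which is legitimate since the lemma hypotheses are rotation-invariant. (ii) Replace the uniform-prior Fano step by the more general version that uses only the lower bound $H(\hat X_{k}\mid X_{\neq k},\bsV)=\Omega(d'\log\ipconst)$, which follows from the uniformity of $X_{k}$ on the sphere together with the $\alpha$-spacing of $\cC_{U}$. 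Either route closes the argument.
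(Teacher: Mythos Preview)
Your plan is essentially the paper's: condition on $(X_{\neq k},\bsV)$, use that $X_k$ is uniform on the unit sphere of a subspace $U$ with $\dim U\geq d/4$, set up a packing-based local Fano argument with the $\alpha$-packing/$2\alpha$-cover, and read off the constants from packing bounds. Your fix (ii) is exactly the route the paper takes: rather than first writing the uniform-prior Fano bound and patching it, the paper invokes a distance-based Fano inequality (a rearrangement of Duchi--Wainwright) that needs only an entropy lower bound on the quantized variable,
\[
P_{err}\;\geq\;\frac{H(\hat X_k)-\log N_{\max}}{\log(|\cC|/N_{\max})}\;-\;\frac{I(\hat W;\hat X_k)+1}{\log(|\cC|/N_{\max}-1)},
\]
and then proves $H(\hat X_k\mid X_{\neq k}=x_{\neq k},\bsV=v)\geq (d_\cC-1)\log(\ipconst/2)$ via a surface-area argument: any cell of the $2\alpha$-cover meets the radius-$\ipconst\alpha$ sphere in a cap whose relative $(d_\cC{-}1)$-volume is at most $(2\alpha/(\ipconst\alpha))^{d_\cC-1}$. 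This handles the non-uniformity of $\hat X_k$ cleanly.

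Your fix (i), randomizing $\cC$ by a Haar rotation, would also go through but touches the upstream construction: $\cC$ is fixed once in the main proof and its cardinality is used elsewhere (e.g.\ to bound $H(\hat X_k)\leq\log|\cC|$ in the information upper bound), so you would need to verify the randomization is harmless there. The entropy route is self-contained and is what the paper does.
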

\begin{proof}
Recall $\hat{X}_k = \argmin\limits_{c\in\cC}\bc{\|\ipconst\alpha X_k - c\|}$ and note $\cC$ has dimension $d_{\cC} = d - d/2 - K + 1 \geq d/4$ (since $d \geq C_2/\alpha^2$).
It can be shown that $\hat{X}_k$ has large entropy.
We defer this fact to Lemma \ref{lem:entropy-after-snapping} given below, and here apply this lemma to obtain
$H(\hat{X}|X_{\neq k} = x_{\neq k}, V=v) \geq (d_{\cC}-1)\log(\ipconst\alpha/2\alpha) = (d_{\cC}-1)\log(\ipconst/2)$.
Define,
{\small
\begin{align*}
    P_{err}=\PP\bs{\|\hat{W} - \hat{X}_k\| \geq 40\alpha ~|~X_{\neq k}=x_{\neq k},\bsV=\bsv } && \text{and} && N_{max}=\max\limits_{c\in\cC}\bc{|\{ c'\in\cC : \|c-c'\|\leq 40\alpha \} |}.
\end{align*}}
Now by a variant of Fano's inequality (see Lemma \ref{lem:distance-fano} below),
\begin{align*}
    P_{err} \geq \frac{H(\hat{X}_k|X_{\neq k}=x_{\neq k},\bsV=\bsv)-\log(N_{max})}{\log(|\cC|/N_{max})} - \frac{I(\hat{W};\hat{X}_k |X_{\neq k}=x_{\neq k},\bsV=\bsv)+1}{\log(|\cC|/N_{max} - 1)}.
\end{align*}
Observe $N_{max}$ is at most the number of $\alpha$-balls that can be packed into a $40\alpha$-ball in $d_{\cC}$-dimensions. Bounds on the packing number then imply $N_{\max} \leq 120^{d_\cC}$ and $|\cC| \in [\ipconst^d, (3\ipconst)^{d_{\cC}}]$. 
Now for $\ipconst = 480$ and $d$ larger than some constant (recall $d_{\cC} \geq d/4$) we obtain,
\begin{align*}
    P_{err} 
    &\geq \frac{(d_{\cC}-1)\log(\ipconst/2)-d_{\cC}\log(120)}{d_{\cC}\log(1440)} - \frac{I(\hat{W};\hat{X}_k |X_{\neq k}=x_{\neq k},\bsV=\bsv)+1}{d_{\cC}\log(\ipconst/120 - 1)} \\
    & \geq \frac{1}{2} - 8\cdot \frac{I(\hat{W};\hat{X}_k |X_{\neq k}=x_{\neq k},\bsV=\bsv)+1}{d}
\end{align*}
Taking the expectation over $X_{\neq k}$ and $\bsV$ then yields,
{\small
\begin{align*} 
\PP\bs{\|\hat{W} - \hat{X}_k\| \geq 40\alpha} &= \ex{X_{\neq k}, \bsV}{\PP\bs{\|\hat{W} - \hat{X}_k\| \geq 40\alpha |
X_{\neq k}=x_{\neq k},\bsV=v) }} \nonumber\\
&\geq  \frac{1}{2} - 8\cdot \frac{I(\hat{W};\hat{X}_k |X_{\neq k},\bsV)) + 1}{d}. \qedhere
\end{align*}}
\end{proof}

\begin{lemma} \label{lem:entropy-after-snapping}
Let $\cC$ be a maximal $\alpha$-packing of the radius $r$ ball in $d_{\cC}$ dimensions. Let $X$ be a uniformly random unit vector and $\hat{X} = \argmin_{c\in\cC}\bc{\|x-c\|}$. Then $H(\hat{X}) \geq (d_{\cC}-1)\log(r/2\alpha)$.
\end{lemma}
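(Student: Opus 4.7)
The plan is to use the elementary bound $H(\hat{X}) \ge -\log p_{\max}$, where $p_{\max} := \max_{c \in \cC} \PP[\hat{X}=c]$, and then show $p_{\max} \leq (2\alpha/r)^{d_\cC - 1}$ via a spherical cap estimate. The first step is immediate: $H(\hat X) = \sum_c \PP[\hat X = c]\log\frac{1}{\PP[\hat X = c]} \ge \log\frac{1}{p_{\max}}$, so it suffices to prove the stated bound on $p_{\max}$.

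To bound $p_{\max}$, I would first exploit the cover property. Since $\cC$ is a maximal $\alpha$-packing of $\cB(r)$, it is also a $2\alpha$-cover---indeed an $\alpha$-cover, as any $y \in \cB(r)$ with $\|y - c\| \geq \alpha$ for every $c \in \cC$ could be added to the packing, contradicting maximality. In particular, if $\hat{X}=c$ then $c$ is the nearest packing point to $rX$ and so $\|rX - c\| \leq 2\alpha$, yielding
\[
\PP[\hat X = c] \leq \PP\bigl[\|rX - c\| \leq 2\alpha\bigr].
\]
The right-hand side is the normalized surface measure of $\{y \in r\cdot S^{d_\cC - 1}: \|y - c\| \leq 2\alpha\}$, which is a spherical cap (or empty). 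Expanding the constraint $\|rX - c\|^2 \leq 4\alpha^2$ into $\ip{X}{c/\|c\|} \geq (r^2 + \|c\|^2 - 4\alpha^2)/(2r\|c\|)$ shows this probability is maximized over $c \in \cB(r)$ at $\|c\| = r$, so the problem reduces to bounding the measure of a cap of chord-radius $2\alpha/r$ on the unit sphere $S^{d_\cC - 1}$.

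Finally, I would invoke the standard estimate that the fraction of $S^{d_\cC-1}$ within chord-distance $\epsilon \in (0,1]$ of a fixed unit vector is at most $\epsilon^{d_\cC-1}$. This can be derived by orthogonally projecting the cap onto the tangent hyperplane through its center: the projection lands in a $(d_\cC-1)$-ball of radius $\epsilon$ whose Lebesgue measure is $V_{d_\cC-1}\epsilon^{d_\cC-1}$, and dividing by the total sphere area $|S^{d_\cC-1}| = d_\cC V_{d_\cC}$ together with the Stirling-type inequality $V_{d-1}/(d V_d) \leq 1$ finishes the estimate. Setting $\epsilon = 2\alpha/r$ yields $p_{\max} \leq (2\alpha/r)^{d_\cC-1}$, and hence $H(\hat{X}) \geq (d_\cC - 1)\log(r/2\alpha)$ as claimed. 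The main technical obstacle is securing the clean constant $2\alpha/r$ in the cap bound, which forces one to use the sharp projection argument rather than the coarser Gaussian-type concentration estimates that are often invoked for spherical caps.
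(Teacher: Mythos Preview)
Your overall strategy---bounding $H(\hat X)\ge -\log p_{\max}$ and then controlling each $\PP[\hat X=c]$ by the measure of a spherical cap---is exactly the paper's. The gap is in the cap estimate. Orthogonal projection onto the tangent hyperplane is $1$-Lipschitz, so it can only \emph{decrease} $(d_{\cC}-1)$-dimensional Hausdorff measure; this yields $\text{(cap area)}\ge\text{(disk area)}$, which is the wrong direction. (Already on $S^1$: a cap of chord-radius $\epsilon=1$ is an arc of length $2\pi/3>2=V_1\cdot 1$.) The target inequality $p_{\max}\le(2\alpha/r)^{d_{\cC}-1}$ is true, but it needs a different comparison. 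The paper argues instead that the cap is contained in the boundary of its convex hull $\cX$, which sits inside the ball $B(c,2\alpha)$; since the metric projection onto a convex set is a contraction and maps the sphere $\partial B(c,2\alpha)$ onto $\partial\cX$, one gets $\text{(cap area)}\le\text{Area}(\partial B(c,2\alpha))=(2\alpha)^{d_{\cC}-1}|S^{d_{\cC}-1}|$, and dividing by $|rS^{d_{\cC}-1}|$ gives the bound directly.

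A smaller issue: the claim that $\PP[\|rX-c\|\le 2\alpha]$ is maximized over $c\in\cB(r)$ at $\|c\|=r$ is false. Your own expansion shows the threshold $(r^2+\|c\|^2-4\alpha^2)/(2r\|c\|)$ is minimized at $\|c\|=\sqrt{r^2-4\alpha^2}$, not $r$. This becomes irrelevant once you use the convex-hull contraction argument above, which works uniformly for every $c$ and avoids both the reduction to $\|c\|=r$ and the auxiliary inequality $V_{d-1}/(dV_d)\le 1$.
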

\begin{proof}
In the following $\cS(r,c)$ be the surface of the $d_{\cC}$-dimensional ball of radius $r$ centered at $c$. Since $\cC$ is a $2\alpha$ cover and $X$ is uniform over the surface of a $d_{\cC}$-dimensional ball, for any $c\in\cC$, $\PP[\hat{X} = c]$ can be bounded using the surface area, i.e. $(d_\cC-1)$-dimensional volume, of the spherical cap containing points within $2\alpha$ distance of $c$ on a $r$-radius ball. 
Now observe that the surface area of this cap is at most the surface area of a ball of radius $2\alpha$. 
To see this, let $\cX$ be the convex hull of $\bc{x\in\cS(r,0): \|x-c\|\leq 2\alpha}$. Taking the convex hull does not decrease the measure. Further, the mapping $f:\cS(r,c)\mapsto \cX$ given by $f(x)=\Pi_\cX(x)$ is a contraction, and because $\cX$ is convex the image of $\cS(r,c)$ under $f$ is $\cX$. Thus $\mathsf{Vol}(\cS(r,c)) \geq \mathsf{Vol}(\{f(x): x\in\cS(r,c)\}) = \mathsf{Vol}(\cX)$.

Now since $X$ has uniform density of value $\frac{1}{\mathsf{Vol}(\cS(r,0))}$, we have 
$\PP[\hat{X} = c] \leq \frac{\mathsf{Vol}(\cS(2\alpha,0))}{\mathsf{Vol}(\cS(r,0))} = \frac{(2\alpha)^{d_{\cC}-1}\mathsf{Vol}(\cS(1,0))}{r^{d_{\cC}-1}\mathsf{Vol}(\cS(1,0))} = (2\alpha/r)^{d_{\cC}-1}$. 
This establishes that
$H(\hat{X}) = \ex{}{\log(1/\PP[\hat{X} = \hat{x}])} \geq (d_{\cC}-1)\log(r/2\alpha)$. 
\end{proof}

\begin{lemma}\label{lem:distance-fano}
(Restatement of \cite[Proposition 1]{duchi2013distancebasedfano})
Let $X$ and $Y$ be random variables supported on the discrete set $\cC$. Let $\tau \geq 0$ and define $N_{max}=\max_{c\in\cC}\bc{|\{ c'\in\cC : \|c-c'\|\leq \tau \} |}$. Then
\begin{align*}
    \pr{}{\|X-Y\| \geq \tau} \geq \frac{H(X)-\log(N_{max})}{\log\br{\frac{|\cC|}{N_{max}}}} - \frac{I(Y;X)+1}{\log\br{\frac{|\cC|}{N_{max}} - 1}}.
\end{align*}
\end{lemma}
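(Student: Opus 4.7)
The statement to prove is the distance-based Fano inequality (Lemma \ref{lem:distance-fano}), a restatement of a known result from Duchi--Wainwright. The plan is to mimic the standard proof of Fano's inequality, refined to account for the fact that any $Y$ within distance $\tau$ of $X$ is considered ``correct,'' so we should only pay entropy for the residual uncertainty beyond a $\tau$-ball.

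The first step is to introduce the error indicator $E = \mathbf{1}[\|X - Y\| \geq \tau]$, which is a deterministic function of $(X,Y)$. Because $E$ is determined by $(X,Y)$, we have the identity $H(X \mid Y) = H(X, E \mid Y) = H(E \mid Y) + H(X \mid E, Y)$. The quantity $H(E \mid Y)$ is at most the binary entropy $h_2(P_{err}) \leq \log 2 \leq 1$ (in nats, replace by $\log 2$; in bits, $1$). This is the term that will produce the ``$+1$'' in the numerator of the second fraction.

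Next I would bound $H(X \mid E, Y)$ by conditioning on the value of $E$. When $E = 0$, given $Y = y$, the variable $X$ lies in the set $\{c \in \cC : \|c - y\| < \tau\}$, which by definition of $N_{\max}$ has at most $N_{\max}$ elements, so $H(X \mid E=0, Y) \leq \log(N_{\max})$. When $E = 1$, given $Y = y$, $X$ lies in $\cC$ minus that local ball, a set of size at most $|\cC| - N_{\max}$ (or, after a slight slackening, one can use $N_{\max}(|\cC|/N_{\max} - 1)$ to produce the denominator of the stated form). Thus
\[
H(X \mid E, Y) \leq (1 - P_{err}) \log(N_{\max}) + P_{err}\log\!\left(N_{\max}\left(\tfrac{|\cC|}{N_{\max}} - 1\right)\right).
\]

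Finally I would use $I(X;Y) = H(X) - H(X \mid Y)$, substitute the bound on $H(X \mid Y)$, and rearrange for $P_{err}$. Collecting terms by regrouping $\log(N_{\max})$ contributions yields one fraction involving $(H(X) - \log N_{\max})/\log(|\cC|/N_{\max})$ (from the dominant gap in entropy) and a subtracted term of the form $(I(X;Y)+1)/\log(|\cC|/N_{\max} - 1)$. The only mildly delicate step is the bookkeeping in this last rearrangement, since one must use $\log(|\cC|/N_{\max}) \geq \log(|\cC|/N_{\max} - 1)$ to split the single additive inequality into the two-fraction form stated in the lemma; this is the only place where we weaken the bound slightly in exchange for the cleaner expression. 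Everything else is routine application of the chain rule and entropy bounds on finite sets.
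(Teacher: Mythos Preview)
Your overall approach is the standard proof of the distance-based Fano inequality and is essentially what underlies the cited result; the paper itself does not re-prove this from scratch but simply quotes Duchi--Wainwright's Proposition~1 in the form
\[
P_{err} \;\geq\; \frac{H(X\mid Y)-\log N_{\max}-1}{\log\!\big(\tfrac{|\cC|-N_{\min}}{N_{\max}}\big)},
\]
then writes $H(X\mid Y)=H(X)-I(X;Y)$ and loosens the two denominators using $N_{\min}\leq N_{\max}$ to obtain the stated two-fraction form. So your route and the paper's coincide once you recognize that the cited proposition is exactly the inequality you derive via the chain rule on $E=\mathbf{1}[\|X-Y\|\geq\tau]$.

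There is, however, a slip in your $E=1$ branch. Given $Y=y$, you know the local ball $\{c:\|c-y\|\leq\tau\}$ has \emph{at most} $N_{\max}$ elements, not at least; hence the complement has size \emph{at least} $|\cC|-N_{\max}$, not at most. The correct upper bound on the $E=1$ support is $|\cC|-N_{\min}$ (this is where $N_{\min}$ enters in Duchi--Wainwright), or simply the trivial $|\cC|$. Also note that your parenthetical ``slackening'' $N_{\max}(|\cC|/N_{\max}-1)$ is literally equal to $|\cC|-N_{\max}$, so it is not a relaxation. Fortunately this does not break the argument: using the trivial bound $|\cC|$ already gives
\[
H(X\mid Y)\leq 1+\log N_{\max}+P_{err}\log\!\big(\tfrac{|\cC|}{N_{\max}}\big),
\]
which after rearranging and weakening the subtracted denominator from $\log(|\cC|/N_{\max})$ to $\log(|\cC|/N_{\max}-1)$ yields exactly the stated lemma. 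So the plan is sound once you replace $|\cC|-N_{\max}$ by $|\cC|$ (or $|\cC|-N_{\min}$) in that step.
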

\begin{proof}
Define $N_{min}=\min_{c\in\cC}\bc{|\{ c'\in\cC : \|c-c'\|\leq \tau \} |}$.
The claim is obtained from a simple manipulation of \cite[Proposition 1]{duchi2013distancebasedfano}. Starting from that statement we have the following,
\begin{align*}
    \pr{}{\|X-Y\| \geq \tau} &\geq \frac{H(X|Y)-\log(N_{max})-1}{\log\br{\frac{|\cC|-N_{min}}{N_{max}}}} \\
    &\geq \frac{H(X)-\log(N_{max})}{\log\br{\frac{|\cC|-N_{min}}{N_{max}}}} - \frac{I(Y;X)+1}{\log\br{\frac{|\cC|-N_{min}}{N_{max}}}} \\
    &\geq \frac{H(X)-\log(N_{max})}{\log\br{\frac{|\cC|}{N_{max}}}} - \frac{I(Y;X)+1}{\log\br{\frac{|\cC|}{N_{max}}-1}}. \qedhere
\end{align*}
\end{proof}

\subsection{Proof of Theorem \ref{thm:main-ub} (upper bound via DP-SGD)} \label{app:NMSGD-Upper-Bound}
\begin{algorithm}[H]
\caption{\label{alg:NMSGD} DP-SGD}
\begin{algorithmic}[1]
\REQUIRE Oracle privacy $\rho \leq 1$, Batch size bound $\bar{m}$, Accuracy $\alpha\geq 0$, Oracle $\cO_{\cL}$, 
Constraint Set $\cW$ of width $B$, Lipschitz constant $L$

\STATE Pick any $w_{0}\in\cW$

\STATE If $\alpha \geq \frac{\rad\lip}{3}$, stop and release $w_0$

\STATE Set $m=\min\bc{\sqrt{d/\rho}, \bar{m}}
$, and $\sigma=L\max\bc{\frac{1}{\sqrt{d}},\frac{1}{\bar{m}\sqrt{\rho}}}$ 
\STATE Set $T=\frac{B^2L^2}{\alpha^{2}} \cdot \max\bc{1,\frac{d}{\bar{m}^2\rho}}$, $\eta=\frac{B}{L\sqrt{T}}\cdot \min\bc{1,\frac{\bar{m}\sqrt{\rho}}{\sqrt{d}}}$, 

\FOR{$t=1...T$}
\STATE Sample minibatch $i_1,...,i_m$ uniformly from $[n]$

\STATE Obtain gradients $G_{t}$ from $\{\cO_{\cL}(w_t,i_1),...,\cO_{\cL}(w_t,i_m)\}$

\STATE $w_{t+1}=\Pi_{\cW}\left[w^{t}-\eta\left(\frac{1}{m}\sum_{g\in G_{t}}g+\xi^{t}\right)\right]$, where $\xi_{t}\sim\cN(0,\mathbb{I}_{d}\sigma^{2})$
\ENDFOR
\STATE \textbf{Output:} $\bar{w}=\frac{1}{T}\sum_{t=1}^{T}w^{t}$
\end{algorithmic}
\end{algorithm}

The fact that Algorithm \ref{alg:NMSGD} uses a $\rho$-zCDP oracle comes directly from the guarantees of the Gaussian mechanism \cite{Bun-zCDP}. Theorem \ref{thm:main-ub} follows from the subsequent two lemmas.
\begin{lemma}
Algorithm \ref{alg:NMSGD} is $\alpha'$-accurate for $(\cF_{\lip,\infty}^n,\cK_\rad)$ with $\alpha'=O(\min\bc{\rad\lip,\alpha})$.
Further, the algorithm has oracle complexity $O\bigro{\frac{B^2L^2}{\alpha^2}\big(\frac{\sqrt{d}}{\sqrt{\rho}} + \frac{d}{\bar{m}\rho}\big)}$.
\end{lemma}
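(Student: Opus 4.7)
The proof proposal is essentially a reduction to the standard convergence analysis of projected SGD with a variance-bounded unbiased gradient estimator, plus bookkeeping for the two operating regimes governed by how $\bar{m}$ compares to $\sqrt{d/\rho}$.

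First, I would dispense with the trivial case: if $\alpha \geq BL/3$ the algorithm returns $w_0$, and by $L$-Lipschitzness any point in $\cW$ has excess risk at most $BL = O(\alpha)$. For the remainder, assume $\alpha < BL/3$, so the algorithm actually runs.

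Next, I would apply the classical $O(1/\sqrt{T})$ rate for stochastic projected subgradient descent on convex $L$-Lipschitz losses over a set of diameter $B$. The stochastic gradient $\tilde g_t := \tfrac{1}{m}\sum_{g\in G_t} g + \xi_t$ is unbiased for a subgradient of $\cL$ at $w_t$ (since $i_1,\dots,i_m$ are sampled uniformly with replacement and $\xi_t$ is centered), and its second moment satisfies
\begin{equation*}
\E\|\tilde g_t\|^2 \leq L^2 + d\sigma^2 = L^2 + L^2\max\!\Bigl\{1,\tfrac{d}{\bar m^2\rho}\Bigr\} = O\Bigl(L^2\max\!\Bigl\{1,\tfrac{d}{\bar m^2\rho}\Bigr\}\Bigr),
\end{equation*}
using that $\|\tfrac{1}{m}\sum g\|^2 \leq L^2$ and $\E\|\xi_t\|^2 = d\sigma^2$. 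The standard online-to-batch bound for projected SGD then yields
\begin{equation*}
\E[\cL(\bar w) - \cL(w^*)] \leq \frac{B^2}{2\eta T} + \frac{\eta}{2}\,\E\|\tilde g_t\|^2 = O\!\left(\frac{BL\sqrt{\max\{1, d/(\bar m^2 \rho)\}}}{\sqrt{T}}\right)
\end{equation*}
for the chosen step size $\eta = \tfrac{B}{L\sqrt{T}}\min\{1, \bar m\sqrt{\rho}/\sqrt{d}\}$, which is precisely the balancing choice.

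Plugging in the prescribed $T = \tfrac{B^2L^2}{\alpha^2}\max\{1, d/(\bar m^2\rho)\}$ makes the above bound $O(\alpha)$, giving $\alpha' = O(\alpha)$. For the oracle complexity, the total number of true-oracle calls is $Tm$, and I would split into the two regimes determined by the $\min$ and $\max$ in the parameter choices: if $\bar m \geq \sqrt{d/\rho}$ then $m = \sqrt{d/\rho}$ and the $\max\{\cdot\}$ factor equals $1$, giving $Tm = \tfrac{B^2L^2}{\alpha^2}\sqrt{d/\rho}$; if $\bar m < \sqrt{d/\rho}$ then $m = \bar m$ and $\max\{\cdot\} = d/(\bar m^2\rho)$, giving $Tm = \tfrac{B^2L^2 d}{\alpha^2\bar m\rho}$. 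In either case $Tm = O\!\bigl(\tfrac{B^2L^2}{\alpha^2}(\sqrt{d/\rho} + d/(\bar m\rho))\bigr)$, matching the claim.

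The only non-routine step is verifying that the two branches of the $\min/\max$ in $m$, $\sigma$, $T$, $\eta$ interlock correctly so that the SGD rate really simplifies to $O(\alpha)$; this is a short algebraic check rather than a genuine obstacle. I would also briefly remark that since the $i_j$ are sampled i.i.d. from $[n]$, the per-iteration gradient estimator is unbiased for a subgradient of the empirical loss, which is needed to invoke the standard SGD lemma.
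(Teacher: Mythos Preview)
Your proposal is correct and takes essentially the same approach as the paper: both reduce to the standard projected SGD convergence bound $O\!\bigl(\frac{B^2}{\eta T} + \eta L^2 + \eta d\sigma^2\bigr)$ (the paper simply cites \cite{BFTT19} for this), then plug in the prescribed parameters and compute $Tm$ in the two regimes. Your writeup is in fact more explicit than the paper's, which just says ``plugging in the parameter settings verifies the utility guarantee.''
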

\begin{proof}
The result is essentially a corollary of known convergence results for SGD with noise.
For example, by Lemma 3.3 of [BFTT19], Algorithm \ref{alg:NMSGD} obtains excess empirical risk,
\begin{equation}
\mathbb{E}[\loss(\bar{w})-\loss(w^{*})]=O\left(\frac{B^{2}}{\eta T}+\eta L^{2}+\eta\sigma^{2}d\right).
\end{equation}
Plugging in the parameters settings verifies the utility guarantee. The $\rad\lip$ term in the error comes from the trivial bound when $w_0$ is released.
The oracle complexity is $Tm =  O\br{\frac{B^2L^2}{\alpha^2}\big(\frac{\sqrt{d}}{\sqrt{\rho}} + \frac{d}{\bar{m}\rho}\big)}$.
\end{proof}
\begin{lemma} \label{lem:nmsgd-aggregate-privacy}
Let $\delta \in [0,1]$.
\label{thm:nmsgd_privacy} Algorithm \ref{alg:NMSGD} run with $\alpha \geq 26\alpha_{1,\delta}^*$ and 
$\rho = \frac{1}{\log(1/\delta)}$ 
is $(\epsilon,\delta)$-DP with 
$\epsilon = 3\frac{\alpha^*_{1,\delta}}{\alpha}$.
\end{lemma}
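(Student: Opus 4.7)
The plan is to establish privacy in three stages: analyze a single update as a Gaussian mechanism on the sampled minibatch, apply amplification by the with-replacement subsampling, and then compose over the $T$ iterations before converting to approximate DP.

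For the per-iteration analysis, the step at time $t$ releases $\frac{1}{m}\sum_{g \in G_t} g + \xi_t$ with $\xi_t \sim \cN(0,\sigma^2 \mathbb{I}_d)$; a single-sample change inside the minibatch shifts the noiseless average by at most $2L/m$, so the Gaussian mechanism is $\rho_{\mathrm{step}}$-zCDP with $\rho_{\mathrm{step}} = (2L/m)^2/(2\sigma^2) = 2L^2/(m^2\sigma^2)$. Plugging in the parameter choices of Algorithm \ref{alg:NMSGD} in both cases $\bar{m} \geq \sqrt{d/\rho}$ and $\bar{m} < \sqrt{d/\rho}$, one checks that $m^2\sigma^2 = L^2/\rho$, hence $\rho_{\mathrm{step}} = 2\rho$. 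Equivalently, each step is $(\alpha, 2\alpha\rho)$-RDP for all $\alpha > 1$.

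To combine subsampling amplification with composition without losing a $\sqrt{\log(1/\delta)}$ factor, I would argue in the R\'enyi / moments-accountant framework (equivalently, via the tCDP relaxation mentioned in the remark following Theorem \ref{thm:main-ub}). With-replacement subsampling at rate $q = m/n$ amplifies the per-step RDP to roughly $O(\alpha q^2 \rho)$ for large enough noise relative to $q$, and $T$-fold composition gives $(\alpha, O(T\alpha q^2 \rho))$-RDP. Optimizing the RDP-to-DP conversion at $\alpha \asymp \sqrt{\log(1/\delta)/(Tq^2\rho)}$ yields
\[
\epsilon \;=\; O\!\left(q\sqrt{T\rho \log(1/\delta)}\right) \;=\; O\!\left(\frac{m\sqrt{T}}{n}\right),
\]
since $\rho\log(1/\delta) = 1$ by the choice of $\rho$. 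A direct computation in both regimes of Algorithm \ref{alg:NMSGD} gives $m\sqrt{T} = BL\sqrt{d\log(1/\delta)}/\alpha$, so $\epsilon = O(BL\sqrt{d\log(1/\delta)}/(n\alpha)) = O(\alpha^*_{1,\delta}/\alpha)$, as claimed up to constants.

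The main obstacle is squeezing the overall constant down to $3$. Two delicate points arise. First, the with-replacement amplification bound of Lemma \ref{lem:amp-wor} has a multiplicative factor of $6$, which is too lossy when combined naively with a zCDP-to-DP conversion and advanced composition (that route spends an extra $\sqrt{\log(1/\delta)}$), so I would instead track constants through the RDP subsampling analysis (or its tCDP substitute) and only convert to approximate DP at the end. Second, the side conditions of the subsampling amplification---namely $\epsilon_0 \leq \min\{1, n/(2m)\}$ for Lemma \ref{lem:amp-wor}, or the analogous noise-scale condition for RDP subsampling---need to be verified using the hypothesis $\alpha \geq 26\alpha^*_{1,\delta}$ in a quantitative way. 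The constant $26$ is chosen precisely so these side conditions hold and so that the residual $\delta$ from amplification and conversion fits inside the stated global $\delta$ budget, yielding the clean bound $\epsilon \leq 3\alpha^*_{1,\delta}/\alpha$.
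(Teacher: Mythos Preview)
Your proposal is correct and follows essentially the same route as the paper: per-step Gaussian mechanism $\to$ zCDP/tCDP, amplification by subsampling at the R\'enyi level, tCDP composition over $T$ rounds, and a single conversion to $(\epsilon,\delta)$-DP at the end. The paper carries this out explicitly in the tCDP framework (invoking the subsampling theorem and the tCDP-to-DP conversion from \cite{truncatedCDP}) rather than in RDP, and uses the hypothesis $\alpha \geq 26\alpha^*_{1,\delta}$ exactly as you anticipate---to verify the side conditions on the subsampling lemma and to land in the favorable branch of the conversion---so your sketch matches the paper's argument up to the choice of language.
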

\begin{proof}
We will use truncated differential privacy to perform the analysis, and will thus use several results from \cite{truncatedCDP}. By the guarantees of the Gaussian mechanism, the private oracle satisfies $(\rho,\infty)$-tCDP (or equivalently $\rho$-zCDP). 
Provided that 
$\log(n/m) = \log(n\sqrt{\rho/d}) \geq 3\rho(2+\log_2(1/\rho))$, we can apply the privacy amplification via subsampling from \cite[Theorem 11]{truncatedCDP}. 
Note we can assume $\frac{\sqrt{d\log(1/\delta)}}{n}\leq 1/3$ (otherwise the algorithm releases $w_0$). 
Thus for $\rho \leq \frac{1}{\log(1/\delta)}$, 
$\log(n\sqrt{\rho/d})=\log(\frac{n}{\sqrt{d\log(1/\delta)}}) \geq 1$, and so our setting of $\rho$ satisfies the subsampling lemma conditions. 
We thus obtain that each iteration satisfies 
$(\rho',\omega')$-tCDP with $\rho' = 13(m/n)^2\rho = 13d/n^2$ and $\omega' = \frac{1}{4\rho}.$ The composition properties of tCDP imply the overall algorithm satisfies $(\rho'',\omega'')$-tCDP with
$\rho'' = T\rho' = \frac{13\rad^2\lip^2d}{\alpha^2n^2}$ and $\omega''=\omega' = \frac{1}{4\rho}$.
We can now apply the following conversion to $(\epsilon,\delta)$-DP, given by \cite[Lemma 6]{truncatedCDP},
\begin{align*}
    \epsilon = \begin{cases}
        \rho''+2\sqrt{\rho''\log(1/\delta)} & \text{ if } \log(1/\delta) \leq (\omega''-1)^2\rho'' \\
        \rho''\omega'' + \frac{\log(1/\delta)}{\omega''-1} & \text{ if } \log(1/\delta) \geq (\omega''-1)^2\rho''
    \end{cases}
\end{align*}
Observe that under our setting of $\rho$, the condition $\log(1/\delta)\leq (\omega''-1)^2\rho''$ is satisfied whenever $\alpha \geq 26\frac{\rad\lip\sqrt{d\log(1/\delta)}}{n}=26\alpha^*_{1,\delta}$, in which case we obtain privacy $\epsilon = 3\frac{\alpha_{1,\delta}^*}{\alpha}$. 
\end{proof}

\subsection{Extension to tCDP} \label{app:tCDP-extension}
In this section, we show how our lower and upper bound in the non-smooth setting hold under the notion of truncated CDP \cite{truncatedCDP}.
\begin{definition}(Truncated Concentrated Differential Privacy)
Let $\rho > 0$ and $\omega \geq 1$. A randomized algorithm $\cM:\cX^n\mapsto \cY$ satisfies $\omega$-truncated $\rho$-concentrated differential privacy, $(\rho,\omega)$-tCDP, if for all datasets, $S,S'$, differing in at most one element, it holds that for all $\alpha\in(1,\omega)$ that 
$D_{\alpha}(\cM(X) || \cM(S')) \leq \rho\alpha$, where $D_{\alpha}$ is the $\alpha$-R\'enyi divergence. %
\end{definition}

\paragraph{Lower bound.} Consider the case where each $\cO_\bot$ satisfies $(\rho, \omega)$-tCDP for $\omega \geq \min\{\sqrt{d/\rho},\bar{m}\}$. For comparison, note that $\rho$-zCDP is equivalent to $(\rho,\infty)$-tCDP. To extend the lower bound to this notion requires only a slight modification to Lemma \ref{lem:info-ub}, which upper bounds the information gained during the optimization procedure. Specifically, we recall Eqn. \eqref{eq:info-cnt-ub}, which showed that under $\rho$-zCDP (using the same notation),
\begin{align}
    I(\Y_t;\hat{X}_k|Q_t=q_t, P_t=p_t) 
    &\overset{(i)}{\leq} \ex{x_k, x_k' \leftarrow X_k}{ \KL\Big(\tilde{\cO}_{\bot}(G_{t,-k}(x_k)) || \cM(G_{t,-k}(x_k'))) \Big)} \nonumber \\
    &\overset{(ii)}{\leq}  \cnt_k^2(q_t) \rho. \nonumber
\end{align}
Inequality $(i)$ holds irregardless of any privacy notion, and so the consideration is inequality $(ii)$.
We observe that we only ever need to apply this bound for $\cnt_k^2(q_t) \leq \sqrt{d/\rho}$. In the other regime, the proof upper bounds the information via entropy,
$I(\Y_t;\hat{X}_k|Q_t=q_t, P_t=p_t) = O(d).$ 
For $m \leq \omega$, $(\rho,\omega)$-tCDP implies $(\rho m^2, \omega/m)$-group tCDP for groups of size $m$. Since tCDP also bounds KL divergence, this is sufficient to obtain inequality $(ii)$, and the rest of the proof proceeds exactly the same as in the zCDP case. Similarly, in the case where $\bar{m} \leq \sqrt{d/\rho}$, we observe that we only have to use the group privay properties of tCDP for groups of size at most $\bar{m}$.

\paragraph{Upper bound.} The fact that our upper bound, Algorithm \ref{alg:NMSGD}, satisfied tCDP is already proved as an intermediate step in the proof of Lemma \ref{lem:nmsgd-aggregate-privacy}. Specifically, for any setting of $\rho$ such that $\log(n\sqrt{\rho/d}) \geq 3\rho(2+\log_2(1/\rho)$, the algorithm is $(\rho',\omega)$-tCDP with $\rho' = O(\frac{1}{\alpha^2} \frac{d}{n^2})$ and $\omega = 1/\rho$. We remark that while many other upper bounds in the literature are stated for approximate DP, they often satisfy tCDP guarantees as they rely on zCDP oracles and subsampling.

\section{Supplement to Section \ref{sec:smooth}}
\subsection{Proof of Theorem \ref{thm:smooth-lb} (oracle complexity lower bound for smooth losses)} \label{app:smooth-lb}
Theorem \ref{thm:smooth-lb} follows from two runtime lower bounds we prove in this section. The first is an $\Omega\br{\frac{\rad\lip\sqrt{d}}{\alpha}}$ lower bound for private optimizers, and a $\Omega\br{\min\bc{\frac{\rad^2\lip^2}{\alpha^2}, n}}$ lower bound which holds even without privacy. Much of the private lower bound proof depends on a DP mean estimation lower bound for procedures which only access a limited number of samples from the dataset. These results are provided subsequently in Appendix \ref{app:dp-mean-lb}. 

\begin{theorem} \label{thm:smooth-dp-restatement}
Let $\delta\leq \frac{1}{16 n d}$, $\epsilon \leq \log(1/\delta)$, and $d$ be larger than some constant. Let $\cA$ be an $\alpha$-accurate optimizer for $(\cF^n_{\lip,\frac{\alpha}{\rad^2}},\cK_\rad)$ which is  $(\epsilon,\delta)$-DP. Further assume that, in expectation, $\cA$, makes at most $\runtime$ calls to the gradient oracle. Then, 
$\runtime = \Omega\bigro{\frac{\rad\lip\sqrt{d}}{\alpha}}.$
\end{theorem}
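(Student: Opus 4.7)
The plan is to reduce differentially private smooth ERM to a differentially private mean estimation problem in a model where the estimator has bounded per-sample oracle access. Fix a Rademacher-like dataset $S=\{x_1,\ldots,x_n\}$ with $x_i\in\{\pm \lip/(2\sqrt d)\}^d$ and define, on $\cW=\cB(\rad)$,
\begin{align*}
    \ell_i(w)=-\ip{w}{x_i}+\frac{\alpha}{2\rad^2}\|w\|^2.
\end{align*}
A direct calculation shows each $\ell_i$ is $\lip$-Lipschitz (since $\|x_i\|=\lip/2$ and $\|(\alpha/\rad^2)w\|\leq \alpha/\rad$) and $(\alpha/\rad^2)$-smooth, while the ERM objective $\cL(w)=-\ip{w}{\bar x}+\frac{\alpha}{2\rad^2}\|w\|^2$ is $(\alpha/\rad^2)$-strongly convex with minimizer $w^\star=(\rad^2/\alpha)\bar x$, which lies in $\cW$ once $n=\Omega(\rad^2\lip^2/\alpha^2)$. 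Strong convexity turns any $\alpha$-suboptimal $w$ into a mean estimator $\hat\mu:=(\alpha/\rad^2)w$ obeying $\|\hat\mu-\bar x\|\leq \sqrt{2}\,\alpha/\rad$. Since $\cA$ touches the data only through the first-order oracle and $\nabla\ell_i(R_{t,l})=-x_i+(\alpha/\rad^2)R_{t,l}$, the resulting $(\epsilon,\delta)$-DP mean estimator accesses each $x_i$ only through its per-sample oracle, making at most $\runtime$ such calls in expectation.

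The next step is to apply a mean estimation lower bound tailored to algorithms with limited per-sample access: any $(\epsilon,\delta)$-DP algorithm (with $\delta\leq 1/(16nd)$ and $\epsilon\leq\log(1/\delta)$) that estimates the mean of $n$ vectors from the above scaled hypercube to $\ell_2$ accuracy $\beta$ while issuing at most $s$ per-sample queries in expectation must satisfy $s=\Omega\br{\rad\lip\sqrt d / \beta}$, where the $\rad\lip$ factor arises from the $\lip/(2\sqrt d)$ scaling of the $x_i$. This refined bound is proved in Appendix~\ref{app:dp-mean-lb} via a modification of the fingerprinting attacks of~\cite{BUV14,DSSUV15}. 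Plugging in $\beta=\sqrt{2}\,\alpha/\rad$ and $s=\runtime$ yields $\runtime=\Omega(\rad\lip\sqrt d/\alpha)$, as claimed.

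The main obstacle --- which is precisely the subtlety flagged in the discussion preceding the theorem --- is that $\cA$ need not be $(\epsilon,\delta)$-DP with respect to the subsample it actually observes: privacy amplification by subsampling shows that a subsampling $(\epsilon,\delta)$-DP algorithm on $n$ points can be substantially less private on the queried sample alone, so one cannot just invoke a standard DP mean estimation lower bound with effective sample size $s$. The plan to get around this is to avoid any effective-DP hypothesis on the sub-estimator and argue directly via traceability. Using the standard fingerprinting statistic $Z_i:=\ip{\hat\mu(S)-\hat\mu(S^{(i)})}{x_i}$, where $S^{(i)}$ replaces $x_i$ with a fresh i.i.d.\ copy, accuracy on $S$ forces a constant lower bound on $\ex{}{\sum_i Z_i}$; on the other hand, the $(\epsilon,\delta)$-DP guarantee on the \emph{full} algorithm, coupled with the observation that $\hat\mu$ depends on $x_i$ only through queries that actually touched index $i$, uniformly bounds each $\ex{}{Z_i}$ in terms of the probability that $i$ is queried, without passing through a DP statement on the subsampled estimator. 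Summing across $i$ and comparing to the accuracy lower bound forces the expected number of queried indices to be $\Omega(\rad\lip\sqrt d/\alpha)$. The absence of $\epsilon$ in the final bound is a consequence of $\epsilon\leq \log(1/\delta)$: in this regime the DP upper bound on $Z_i$ is driven by $\delta$ and the $\epsilon$ factors wash out.
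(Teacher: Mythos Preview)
Your high-level plan---reduce smooth DP-ERM to DP mean estimation and argue via fingerprinting while tracking only the access probabilities $\PP[i\in I]$ rather than any DP property of the sub-estimator---matches the paper's, and your discussion of why the subsampled algorithm need not itself be $(\epsilon,\delta)$-DP is on point. However, the proposal has a real gap in how the $1/\alpha$ dependence is obtained.

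The lemma you cite from Appendix~\ref{app:dp-mean-lb} does not prove what you need. Lemma~\ref{lem:random-runtime-dp-mean-lb} is a \emph{constant-accuracy} statement: an $(\epsilon,\delta)$-DP estimator accessing $s$ points in expectation and achieving normalized $\ell_2$ error below $1/6$ must satisfy $s \gtrsim \sqrt{d/\log(1/\delta)}$. The fingerprinting argument you sketch yields exactly this, with no $\beta$ anywhere: the accuracy assumption feeds into Lemma~\ref{lem:fp-lemma}, which lower-bounds $\sum_i \ex{}{Z_i}$ by $d/3$ once accuracy is below a fixed threshold, and this bound does not tighten for smaller $\beta$. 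Combining with the per-index upper bound $\ex{}{Z_i} \lesssim \PP[i\in I]\sqrt{d\log(1/\delta)}$ gives only $s \gtrsim \sqrt{d}$, not $\sqrt{d}/\beta$. Relatedly, your claim that $w^\star = (\rad^2/\alpha)\bar x \in \cB(\rad)$ once $n=\Omega(\rad^2\lip^2/\alpha^2)$ is incorrect: with $\Theta\sim\mathrm{Unif}([-1,1]^d)$ one has $\|\bar x\|=\Theta(\lip)$ typically, regardless of $n$, so $\|w^\star\|=\Theta(\rad^2\lip/\alpha)\gg\rad$.

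The paper fixes both issues with a single device you are missing: only $N = n\alpha/(\rad\lip)$ of the $n$ data points are nonzero Rademacher-type vectors; the rest are zero, and the positions are randomly permuted. This (i) forces $\|\sum_i x_i\| \leq N\lip$ so that $w^\star$ lands in a ball of radius $O(\rad)$, and (ii) converts the reduction to a \emph{constant}-accuracy mean estimation problem on the $N$ informative points. Because those points are randomly placed among $n$ indices, an optimizer making $s$ oracle calls in expectation touches only about $sN/n = s\alpha/(\rad\lip)$ of them. Applying the constant-accuracy bound of Lemma~\ref{lem:random-runtime-dp-mean-lb} to this effective access count gives $s\alpha/(\rad\lip) \gtrsim \sqrt{d/\log(1/\delta)}$, hence $s = \Omega(\rad\lip\sqrt d/\alpha)$. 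This padding/random-placement step is the mechanism that produces the $1/\alpha$ factor; without it, the argument stalls at $s=\Omega(\sqrt d)$.
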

\begin{proof}
Define the distribution $\cD_\theta$, which for any vector $\theta \in [-1,1]^d$, is the product distribution where, for any $j\in[d]$, a sample has its $j$'th coordinate as $1$ with probability $(1+\theta_j)/2$ and as $-1$ with probability $(1-\theta_j)/2$.
Let $\Theta\sim \Unif([-1,1]^d)$. 
For $N=n\alpha/(BL)$, let $\tilde{x}_1,...,\tilde{x}_N \sim \frac{\lip}{\sqrt{d}}\cD_\Theta^N$. Now we take $x_1,...,x_n$ to be a random permutation of $\tilde{x}_1,...,\tilde{x}_N$ and $n-N$ zero vectors.
Define the loss function,
\begin{align}\label{eq:unc_lb_loss}
    \cL(w) = \frac{1}{n}\sum_{i=1}^n \ip{w}{x_i} + \lambda \|w\|^2.
\end{align}
We will use the constraint set $\cB(72\rad)$, and define $w^* = \argmin\limits_{w\in\cB(24\rad)}{\cL(w)}$.
Note that the \textit{unconstrained} empirical minimizer is $\tilde{w}^*= \frac{\sum_{i=1}^n x_i}{2n\lambda}$. 
Since $\norm{\sum_{i=1}^n x_i} \leq NL$, we set $\lambda \geq \frac{NL}{144 nB}$ so that $\norm{\tilde{w}^*} \leq 72B$ and thus $w^*=\tilde{w}^* = \frac{\sum_{i=1}^n x_i}{2n\lambda}$. 
Further, under the setting of $N$, we have $\lambda = \frac{\alpha}{144\rad^2} \leq \frac{\lip}{\rad}$, which ensures the loss is $2\lip$-Lipschitz over the set $\cB(72\rad)$.

Now we will show that any $w$ which achieves small excess risk is close to $w^*$. 
For any $w$ we have,
\begin{align*}
    \cL(w) - \cL(w^*) &= \Big\langle{w-w^*},{\frac{1}{n}\sum_{i=1}^n x_i}\Big\rangle + \lambda\br{\norm{w}^2 - \norm{w^*}^2} \\
    &= 2\lambda\ip{w-w^*}{-w^*} +  \lambda\br{\norm{w}^2 - \norm{w^*}^2} \\
    &= 2\lambda(\|w^*\|^2 -\ip{w}{w^*}) +  \lambda\br{\norm{w}^2 - \norm{w^*}^2} \\
    &= 2\lambda\br{\|w^*\|^2 -\frac{1}{2}\|w^*\|^2 - \frac{1}{2}\norm{w}^2 + \frac{1}{2}\norm{w-w^*}^2} +  \lambda\br{\norm{w}^2 - \norm{w^*}^2} \\
    &= \lambda\norm{w-w^*}^2.
\end{align*}
where the fourth equality comes from $\ip{a}{b} = \frac{1}{2}(\norm{a}^2 + \norm{b}^2 - \norm{a-b}^2)$.

Now observe $\frac{1}{\lip N}\sum_{i=1}^N x_i = \frac{2n\lambda}{\lip N} w^*$ and consider the mean estimation candidate $\bar{\Theta} = \frac{2n\lambda}{\lip N} w$, where $w$ is the output of the differentially private solver $\cA$. Continuing from the above display we then have,
\begin{align} \label{eq:loss-to-mean}
    \ex{\Theta,S,\cA}{\cL(w) - \cL(w^*)} &= \ex{}{\lambda\norm{w-w^*}^2} \nonumber \\
    &= \frac{\lip^2N^2}{n^2\lambda} \ex{}{\|\bar{\Theta}-\Theta_S\|^2} \nonumber \\
    &\geq \frac{\lip^2N^2}{4n^2\lambda} \ex{}{\|\bar{\Theta}-\Theta_S\|}^2.  
\end{align}
Now because the nonzero vectors are randomly assigned indices in $[n]$, and there are at most $N$ of them, the expected number of nonzero vectors in $S$ accessed by $\cA$ is $\frac{Ns}{n} = \frac{s\alpha}{\rad\lip}$. Assume by contradiction that $s < \frac{\rad\lip\sqrt{d}}{18\alpha\sqrt{\log(1/\delta)}}$, by Lemma \ref{lem:random-runtime-dp-mean-lb} (given in the following section) this means $\ex{}{\|\bar{\Theta}-\Theta_S\|}^2 \geq \frac{1}{36}$. 

Recalling $N=n\alpha/[\rad\lip]$ and $\lambda = \frac{N\lip}{144 n\rad}$, under the assumption that $s < \frac{\rad\lip\sqrt{d}}{18\alpha\sqrt{\log(1/\delta)}}$ and $\ex{}{\cL(w) - \cL(w^*)} \leq \alpha$, we have
\begin{align*}
    \alpha \geq \frac{\lip^2 N^2}{72n^2\lambda} = \frac{2 \rad\lip N}{n} = 2\alpha.
\end{align*}
This is a contradiction and so it must be that $\runtime \geq \frac{\rad\lip\sqrt{d}}{18\alpha\sqrt{\log(1/\delta)}}$.
\end{proof}

The non-private component of the lower bound comes from the following result. We note a similar result was proved in \cite{WS16}, and we provide the following only to extend it to algorithms with randomized running time.
\begin{lemma}\label{lem:nonDP-smooth-lb}
 Let $\cA$ be an $\alpha$-accurate optimizer for $(\cF^n_{\lip,\frac{\alpha}{\rad^2}},\cK_\rad)$. Further assume that, in expectation, $\cA$, makes at most $\runtime$ calls to the gradient oracle. Then, 
$\runtime = \Omega\bigro{\min\bigc{\frac{\rad^2\lip^2}{\alpha^2}, n}}.$
\end{lemma}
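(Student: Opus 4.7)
The plan is to mimic the mean-estimation reduction used in the proof of Theorem~\ref{thm:smooth-dp-restatement}, replacing the privacy-based lower bound with the statistical lower bound coming from observing only a subset of the losses. Let $n' = \min\bigl\{n,\ \lfloor \lip^2\rad^2/(8\alpha^2)\rfloor\bigr\}$, and sample $\sigma_1,\dots,\sigma_{n'}$ i.i.d.\ uniform on $\{\pm \lip/\sqrt d\}^d$ so that $\|\sigma_j\|=\lip$ and $\E[\sigma_j]=0$. Let $\pi:[n]\to[n']$ be a uniformly random surjection assigning roughly $n/n'$ indices to each source, and define
\begin{equation*}
\ell_i(w) \;=\; \ip{\sigma_{\pi(i)}}{w} + \lambda\|w\|^2, \qquad \lambda = \frac{\alpha}{2\rad^2},
\end{equation*}
on the constraint set $\cW=\cB(c\rad)$ for a constant $c$ large enough that the unconstrained minimizer lies in $\cW$ with probability close to one; replacing $\rad$ by $c\rad$ only changes the final bound by a constant via Fact~\ref{fact:rescaling}. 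Each $\ell_i$ is $O(\lip)$-Lipschitz and exactly $(\alpha/\rad^2)$-smooth, and writing $\bar\sigma = (1/n')\sum_{j=1}^{n'}\sigma_j$, the empirical loss $\cL(w)=\ip{\bar\sigma}{w}+\lambda\|w\|^2$ is minimized at $w^*=-\bar\sigma/(2\lambda)$ and satisfies the exact identity $\cL(w)-\cL(w^*)=\lambda\|w-w^*\|^2$.

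The key reduction is that oracle queries reveal sources: a single call returns $\nabla\ell_i(R)=\sigma_{\pi(i)} + 2\lambda R$, from which $\sigma_{\pi(i)}$ can be read off exactly. Let $s$ denote the number of oracle calls made by $\cA$ and let $s'\le s$ denote the number of \emph{distinct} source indices among $\pi(I_1),\dots,\pi(I_s)$. Because the sources are i.i.d.\ and the algorithm only learns $\sigma_j$ by querying an index mapped to $j$, conditional on the full transcript the $n'-s'$ unobserved sources remain i.i.d.\ uniform with $\E\|\sigma_j\|^2=\lip^2$. The orthogonality principle (Bayes-MMSE lower bound) therefore yields, for any (possibly adaptive) estimator $W$ of $w^*$,
\begin{equation*}
\E\|W-w^*\|^2 \;\ge\; \E\Bigl[\tfrac{\lip^2(n'-s')}{4\lambda^2 n'^2}\Bigr] \;=\; \frac{\lip^2\,(n'-\E[s'])}{4\lambda^2 n'^2}.
\end{equation*}

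Combining the two ingredients, the $\alpha$-accuracy assumption gives $\alpha \ge \lambda\,\E\|W-w^*\|^2 \ge \lip^2(n'-\E[s'])/(4\lambda n'^2)$. Substituting $\lambda=\alpha/(2\rad^2)$ and rearranging,
\begin{equation*}
n' - \E[s'] \;\le\; \frac{2\alpha^2 n'^2}{\lip^2\rad^2} \;\le\; \frac{n'}{4},
\end{equation*}
where the second inequality uses $n' \le \lip^2\rad^2/(8\alpha^2)$. Hence $\E[s'] \ge 3n'/4$, and since $s'\le s$ pointwise and $\E[s]\le\runtime$, we conclude $\runtime \ge 3n'/4 = \Omega(\min\{n,\ \lip^2\rad^2/\alpha^2\})$. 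The main technical point is justifying the Bayes-MMSE bound uniformly over adaptive strategies; this reduces to the observation that the unobserved sources are a fully independent subset of an i.i.d.\ sample, so no adaptive data processing can beat Bayes estimation of their contribution to $\bar\sigma$. A secondary concern is ensuring that the unconstrained minimizer lies in the constraint set so that the exact identity $\cL(w)-\cL(w^*)=\lambda\|w-w^*\|^2$ applies; this is absorbed into the constant $c$ in the radius of $\cW$ via Fact~\ref{fact:rescaling}.
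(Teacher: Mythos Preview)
Your argument is correct and takes a genuinely different route from the paper's. The paper reduces to \emph{distributional} mean estimation: it samples $\Theta$ and then $S\sim\cD_\Theta^n$, relates the optimizer's output to an estimate of $\Theta$, applies Markov's inequality to the random runtime to pass to estimators that deterministically use at most $2\runtime$ samples, and then invokes an external minimax lower bound of order $\lip/\sqrt{s}$ for estimating $\Theta$ from $s$ samples (finishing with a triangle inequality against $\Theta_S$). You instead attack the \emph{empirical} minimizer directly: you cap the number of distinct sources at $n'=\Theta(\min\{n,\lip^2\rad^2/\alpha^2\})$ via a surjection and use a Bayes--MMSE (orthogonality) argument to get $\E\|W-w^*\|^2 \ge \lip^2(n'-\E[s'])/(4\lambda^2 n'^2)$, since the unqueried sources remain independent of the full transcript. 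This is more elementary and self-contained---no external mean-estimation citation, no Markov-on-runtime detour---and random stopping/adaptivity is absorbed cleanly into the conditional-variance computation (indeed, even granting the algorithm knowledge of $\pi$ only lowers the posterior variance, so your bound holds a fortiori). One small point worth tightening: Fact~\ref{fact:rescaling} as stated is for $\beta=\infty$, and under $w\mapsto cw$ the smoothness parameter also scales, so to land the rescaled instance back inside $\cF_{\lip,\alpha/\rad^2}$ you should shrink $\lambda$ (or equivalently the source norms) by the same constant factor; this only perturbs constants in the final bound, and the paper's construction handles the analogous constraint-set issue in the same spirit.
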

\begin{proof}
We will first give a \textit{distributional} mean estimation bound for estimators which use a variable number of samples from the distribution.
Let $T$ denote the number of samples used by the mean estimation procedure (which may be data dependent). Let $\cA_t$ denote the set of all algorithms which w.p. $1$ use at most $t$ samples. We have for any estimator $\cM$,
\begin{align*}
    \ex{}{\|\cM(S)-\Theta\|} &=  \sum_{\substack{\theta\in\Supp(\Theta)\\s\in\Supp(S)}}\sum_{t=1}^\infty \ex{\cM}{ \|\cM(S)-\Theta\| ~\big|~ T=t,\Theta=\theta,S=s} \PP[T=t | \Theta,S] \PP[\Theta=\theta,S=s] \\
    &\geq  \sum_{\theta,s}\ex{\cM}{ \|\cM(S)-\Theta\| ~\big|~ T\leq 10\runtime,\Theta=\theta,S=s} \PP[T\leq 2\runtime | \Theta=\theta,S=s] \PP[\Theta=\theta,S=s] \\
    &\geq \frac{1}{2}\sum_{\theta,s}\ex{\cM}{ \|\cM(S)-\Theta\| ~\big|~ T\leq 2\runtime,\Theta=\theta,S=s} \PP[\Theta=\theta,S=s] \\
    &= \frac{1}{2} \ex{\cM,\Theta,S}{\|\cM(S)-\Theta\| ~\big|~T \leq 2s} \\
    &\geq \frac{1}{2}\min_{\cM\in\cA_{2s}}\bc{\ex{\cM,\Theta,S}{\|\cM(S)-\Theta \|} }.
\end{align*}
The second inequality uses the bound on expected running time and Markov's inequality. Now, the fact that $\min\limits_{\cM\in\cA_{2s}}\{\ex{\cM,\Theta,S}{\|\cM(S)-\Theta \|}\} = \Omega(\frac{\lip}{\sqrt{s}})$ follows from classic mean estimation lower bounds. For example, using $\Theta \sim \Unif([-2/\sqrt{s}, 2/\sqrt{s}]^d)$ and $S\sim \frac{\lip}{\sqrt{d}}\cD_\Theta^n$ suffices by \cite[Theorem 13]{ullah2024publicdata}. Thus we have
\begin{align}\label{eq:minimax-mean}
    \ex{}{\|\cM(S)-\Theta\|} = \Omega\br{\frac{\lip}{\sqrt{s}}}.
\end{align}
We can now leverage this lower bound and the loss construction from Theorem \ref{thm:smooth-dp-restatement}. Letting $\bar{\Theta} = \frac{2n\lambda}{\lip N} w$ and $\Theta_S=\frac{1}{n}\sum_{x\in S}{x}$ and using the loss construction from Theorem \ref{thm:smooth-dp-restatement}/Eqn. \eqref{eq:unc_lb_loss} with $N=n$ and $\lambda = \alpha/\rad^2$, we obtain from Eqn. \eqref{eq:loss-to-mean} that,
\begin{align*}
    \alpha \geq \frac{\rad^2}{4\alpha}\ex{}{\|\bar{\Theta}-\Theta_S\|}^2 \implies \alpha \geq \rad\ex{}{\|\bar{\Theta}-\Theta_S\|}. 
\end{align*}
Clearly for any $\Theta$, $\ex{}{\|\Theta_S-\Theta\|} \leq \frac{\lip}{\sqrt{n}}$. Thus by Eqn. \eqref{eq:minimax-mean}, for some constant $C$,
\begin{equation*}
    \alpha \geq \rad\ex{}{\|\bar{\Theta}-\Theta_S\|} \geq \rad\br{\frac{C\lip}{\sqrt{s}} - \frac{\lip}{\sqrt{n}}} \implies s = \Omega\br{\min\bc{\frac{\rad^2\lip^2}{\alpha^2}, n}}. \qedhere
\end{equation*}
\end{proof}

\subsubsection{DP Mean estimation with variable access}\label{app:dp-mean-lb}
In this section, we provide lower bounds for DP mean estimation when the algorithm only accesses a random subset of the dataset. In the following, we will denote the distribution $\cD_\theta$, which for any vector $\theta \in [-1,1]^d$, is the product distribution where, for any $j\in[d]$, a sample has its $j$'th coordinate as $1$ with probability $(1+\theta_j)/2$ and as $-1$ with probability $(1-\theta_j)/2$.
\begin{lemma}
\label{lem:random-runtime-dp-mean-lb}
Let $\delta\leq \frac{1}{12 n d}$ and $\epsilon \leq \log(1/\delta)$. Let $\cA$ be an $(\epsilon,\delta)$-DP algorithm such that for any dataset $S\in \cB(1)^n$, in expectation $\cA$ accesses at most $\runtime$ elements of $S$. 
Draw 
$\Theta\sim\mathsf{Unif}([-1,1]^d)$
and 
$S = \bc{X_1,\ldots,X_n} \sim\frac{1}{\sqrt{d}}\cD_\Theta^n$.
It holds that,
    \begin{align*}
   \ex{\Theta,\cA,S}{\Big\|\cA(\spriv)-\frac{1}{n}\sum_{i=1}^n X_i\Big\|} \geq \frac{1}{6} 
   ~~~\text{or}~~~ \runtime \geq \frac{\sqrt{d}}{18\sqrt{\log(1/\delta)}}.
\end{align*}
\end{lemma}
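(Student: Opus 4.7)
The plan is to prove the contrapositive via a fingerprinting (tracing-attack) argument in the spirit of \cite{BUV14, DSSUV15}, adapted for algorithms that access only a random and data-dependent subset of the dataset. Three forces are in tension: (i) accuracy forces the algorithm's output to carry large ``trace signal'' from many samples in aggregate; (ii) samples the algorithm never reads contribute zero signal; and (iii) the $(\epsilon,\delta)$-DP guarantee, together with the very small-$\delta$ regime $\delta \leq 1/(16nd)$, caps the per-sample signal by $O(\sqrt{d\log(1/\delta)})$ after rescaling. Dividing the total signal by the per-sample cap, and noting only accessed samples contribute, will yield the required lower bound on $\runtime$.

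Concretely, assume $\E[\|\cA(S)-\hat{\mu}\|]<1/6$, where $\hat{\mu} = \frac{1}{n}\sum_i X_i$. Rescaling to $\tilde{X}_i := \sqrt{d}\,X_i \in \{\pm1\}^d$ and $\tilde{\cA}(S) := \sqrt{d}\,\cA(S)$ gives $\E[\|\tilde{\cA}(S) - \hat{\tilde{\mu}}\|] \leq \sqrt{d}/6$, where $\hat{\tilde{\mu}} = \frac{1}{n}\sum_i \tilde{X}_i$. I would define the fingerprinting score $Z_i := \ip{\tilde{X}_i - \Theta}{\tilde{\cA}(S) - \Theta}$. A standard fingerprinting lemma for the uniform-$[-1,1]^d$ prior then yields $\sum_{i=1}^n \E[Z_i] = \Omega(d)$.

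Next, I would argue that only accessed samples contribute. Let $A \subseteq [n]$ be the (random) set of accessed indices. Conditioning on $\Theta = \theta$ makes the $X_i$ i.i.d.; for $i \notin A$ the output $\cA(S)$ is a deterministic function of $(A, \{X_j\}_{j\in A})$ and internal randomness, none of which depends on $X_i$, so $X_i \perp \cA(S) \mid \theta, i \notin A$. Combined with $\E[\tilde{X}_i - \theta \mid \theta] = 0$ this gives $\E[Z_i \ind{i\notin A}] = 0$. For $i \in A$, I would compare against the counterfactual dataset $S^{(i)}$ in which $X_i$ is replaced by an independent fresh copy, and note that $\E[\ip{X_i - \theta}{\tilde{\cA}(S^{(i)})-\theta}] = 0$. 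A threshold-based DP argument splits $Z_i$ at $\tau \asymp \sqrt{d\log(1/\delta)}$: below $\tau$, the $(\epsilon,\delta)$-closeness of the distributions of $\tilde{\cA}(S)$ and $\tilde{\cA}(S^{(i)})$ (together with $\epsilon \leq \log(1/\delta)$) caps the bulk signal; above $\tau$, the trivial bound $|Z_i| = O(d)$ and the $\delta$-tail budget contribute at most $O(\delta d) = O(1/n)$ per sample, which is harmless upon summation thanks to $\delta \leq 1/(16nd)$. The outcome is $\E[Z_i \ind{i\in A}] = O(\sqrt{d\log(1/\delta)})$, so
\[
\Omega(d) \;\leq\; \sum_{i \in A} \E[Z_i] \;\leq\; \E[|A|]\cdot O\!\left(\sqrt{d\log(1/\delta)}\right) \;\leq\; \runtime \cdot O\!\left(\sqrt{d\log(1/\delta)}\right),
\]
which rearranges to $\runtime = \Omega(\sqrt{d/\log(1/\delta)})$ as claimed.

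The main obstacle is the per-sample DP bound with the sharp $\sqrt{\log(1/\delta)}$ scaling (rather than linear-in-$\epsilon$ or linear-in-$\log(1/\delta)$). This requires both the very small-$\delta$ hypothesis $\delta \leq 1/(16nd)$ to make the tail negligible, and the accuracy of $\cA$ to concentrate $\tilde{\cA}(S)-\Theta$ enough that the below-threshold signal is well controlled under $(\epsilon,\delta)$-DP. A secondary subtlety is that the accessed set $A$ is adaptive, so a priori the event $\{i \notin A\}$ could leak about $X_i$; this is resolved by conditioning first on $\Theta$, which restores i.i.d.\ structure and lets one use the elementary fact that the algorithm literally never reads $X_i$ when $i \notin A$, so any data-dependence of $A$ flows only through $\{X_j\}_{j \neq i}$.
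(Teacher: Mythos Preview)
Your proposal is correct and follows essentially the same fingerprinting argument as the paper: both rescale to $\{\pm 1\}^d$, invoke the fingerprinting lemma to lower bound $\sum_i \E[Z_i]$ by $\Omega(d)$, observe that unaccessed indices contribute zero, and upper bound the per-accessed-sample signal by $O(\sqrt{d\log(1/\delta)})$ via a threshold-at-$\tau$ DP comparison with a fresh-sample counterfactual (the tail being killed by $\delta \le 1/(12nd)$). The only cosmetic differences are that the paper uses the score $\ip{\cA(S)}{X_i-\theta}$ rather than your $\ip{\cA(S)-\theta}{X_i-\theta}$ (they differ by a mean-zero term), and your stated per-sample bound $\E[Z_i\ind{i\in A}]=O(\sqrt{d\log(1/\delta)})$ should more precisely read $\PP[i\in A]\cdot O(\sqrt{d\log(1/\delta)})+O(d\delta)$ to justify the $\E[|A|]$ factor in your displayed chain---but your threshold argument delivers exactly this.
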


To prove the lemma, we will use a standard result in the privacy lower bound literature, often called the fingerprinting lemma. This result stems from \cite{DSSUV15} and can be obtained more directly from \cite[Lemma 4]{ullah2024publicdata}. Our accuracy assumption differs slightly from theirs. This modification can be obtained by simply avoiding an application of Jenson's inequality at the end of their proof, which we have copied below for completeness.
\begin{lemma}\label{lem:fp-lemma} 
Let $\theta$ be sampled uniformly from $\Unif([-1,1]^d)$. Let $\cA$ satisfy $\|\ex{S\sim\cD^n_\theta}{\cA(S)}-\theta\|\leq \sqrt{d}/6$ for any $\theta\in[-1,1]^d$. Then %
one has,
\begin{align*}
    \ex{\cA,S,\Theta}{\sum_{i=1}^n \ip{\cA(S)}{X_i-\Theta}} &\geq \frac{d}{3}. %
\end{align*}
\end{lemma}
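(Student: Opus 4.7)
The plan is to establish the standard fingerprinting identity via a Stein-type score calculation, reduce the target inequality to a lower bound on $\E[\ip{\Theta}{\cA(S)}]$, and then close the gap using the accuracy hypothesis together with Cauchy--Schwarz. No privacy is needed here; the inequality is a calculus/probability statement about the product distributions $\cD^n_\theta$.

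First I would fix a coordinate $j\in[d]$ and define $h_j(\theta):=\E[\cA(S)_j \mid \Theta=\theta]$ as a smooth function of $\theta\in[-1,1]^d$. Differentiating the product likelihood $\prod_{i,j'}\tfrac{1+\theta_{j'}S_{i,j'}}{2}$ in $\theta_j$ by the log-derivative (score) trick, and using the elementary identity $\tfrac{s}{1+\theta s}=\tfrac{s-\theta}{1-\theta^2}$ valid for $s\in\{\pm 1\}$, would give the key fingerprinting identity $(1-\theta_j^2)\,\partial_{\theta_j}h_j(\theta)=\E\bigl[\cA(S)_j\sum_{i=1}^n(X_{i,j}-\Theta_j)\,\big|\,\Theta=\theta\bigr]$. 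The sum over $i$ on the right matches the parameter derivative on the left because the $\theta_j$-dependence of $\cD^n_\theta$ factors as a product over $i$.

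Next I would take expectation over $\Theta\sim\Unif([-1,1]^d)$ and integrate by parts in $\theta_j$ with the remaining coordinates held fixed. Since the weight $1-\theta_j^2$ vanishes at the endpoints $\pm 1$, the boundary terms drop out, yielding $\E_\Theta[(1-\Theta_j^2)\,\partial_{\Theta_j}h_j(\Theta)]=\E_\Theta[2\Theta_j\,h_j(\Theta)]=2\,\E[\Theta_j\,\cA(S)_j]$. Summing over $j\in[d]$ gives the clean identity $\E\bigl[\sum_{i=1}^n\ip{\cA(S)}{X_i-\Theta}\bigr]=2\,\E[\ip{\Theta}{\cA(S)}]$.

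Finally, I would lower bound $\E[\ip{\Theta}{\cA(S)}]$ by splitting it as $\E[\|\Theta\|^2]+\E[\ip{\Theta}{\cA(S)-\Theta}]$. The first term equals $d/3$ since each $\Theta_j\sim\Unif([-1,1])$ satisfies $\E[\Theta_j^2]=1/3$. For the second, conditioning on $\Theta=\theta$ replaces $\cA(S)$ by $h(\theta)$, and Cauchy--Schwarz with the hypothesis $\|h(\theta)-\theta\|\le\sqrt d/6$ and $\E\|\Theta\|\le\sqrt{d/3}$ yields $|\E[\ip{\Theta}{\cA(S)-\Theta}]|\le (\sqrt d/6)\cdot\sqrt{d/3}=d/(6\sqrt 3)$. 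Combining these pieces gives $2\,\E[\ip{\Theta}{\cA(S)}]\ge (d/3)(2-1/\sqrt 3)>d/3$, exactly the claim. The only delicate step is the Stein-type score identity in the first paragraph; once it is verified, the rest is mechanical integration by parts and an application of Cauchy--Schwarz.
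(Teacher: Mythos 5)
Your proof is correct, and it takes a genuinely different (and cleaner) route than the paper. The paper starts from the same fingerprinting identity in coordinate form, but then invokes \cite[Lemma 14]{DSSUV15} to bound $\E_\Theta[g'(\Theta)(1-\Theta^2)]$, which introduces boundary correction terms of the form $|g(\pm 1)\mp 1|$ that must separately be controlled via the accuracy hypothesis after passing to $\ell_1$ and $\ell_2$ norms; after tracking those constants the paper's own derivation only reaches $d/6$, not the $d/3$ stated in the lemma (the application downstream uses $d/6$, so the mismatch is harmless, but it is a genuine internal inconsistency in the paper).

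Your approach bypasses Lemma 14 entirely: you integrate by parts directly against the weight $1-\theta_j^2$, which vanishes at $\theta_j=\pm1$ and hence kills the boundary terms exactly, giving the clean identity $\E\bigl[\sum_{i}\ip{\cA(S)}{X_i-\Theta}\bigr]=2\,\E[\ip{\Theta}{\cA(S)}]$. (This is legitimate because $h_j(\theta)=\E[\cA(S)_j\mid\Theta=\theta]$ is a polynomial in $\theta$, so smoothness is automatic; and the intermediate pointwise identity $\tfrac{s}{1+\theta s}=\tfrac{s-\theta}{1-\theta^2}$ for $s\in\{\pm1\}$ only appears before multiplying through by $1-\theta^2$, so the apparent singularity at $\theta=\pm1$ never actually arises.) The remaining step --- write $\E[\ip{\Theta}{\cA(S)}]=\E[\|\Theta\|^2]+\E[\ip{\Theta}{\cA(S)-\Theta}]$ and bound the second term by Cauchy--Schwarz plus $\E\|\Theta\|\le\sqrt{d/3}$ --- gives $2\E[\ip{\Theta}{\cA(S)}]\ge\tfrac{d}{3}\bigl(2-\tfrac{1}{\sqrt3}\bigr)>\tfrac{d}{3}$. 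So your argument is both more elementary (self-contained, no boundary-term bookkeeping) and sharper: it proves the $d/3$ constant that the lemma actually asserts, whereas the paper's proof only establishes $d/6$.

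Note also that conditioning on $\Theta=\theta$ replaces $\cA(S)$ with $h(\theta)$ in your Cauchy--Schwarz step: the inequality $|\E[\ip{\Theta}{\cA(S)-\Theta}]| \le \E_\Theta[\|\Theta\| \cdot \|h(\Theta)-\Theta\|]$ requires writing the inner expectation as $\ip{\theta}{h(\theta)-\theta}$ first, which you do implicitly; it is worth stating explicitly if you write this up, since $\E_S[\|\cA(S)-\theta\|]$ is not assumed to be small, only $\|\E_S[\cA(S)]-\theta\|$.
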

\begin{proof}
In the following we treat $\cA$ as a deterministic function and bound $\ex{S,\Theta}{\sum_{i=1}^n \ip{\cA(S)}{X_i-\Theta}}$. This is sufficient to bound $\ex{\cA,S,\Theta}{\sum_{i=1}^n \ip{\cA(S)}{X_i-\Theta}}$ for randomized $\cA$, since the analysis holds for any function (i.e. the distribution does not depend on $\cA$).
Further, we start with the one dimensional case such that $\Theta\in\mathbb{R}$.
Define $g(\Theta) = \ex{S\sim\cD^n_\Theta}{\cA(S)}$. We start by applying results developed in \cite{DSSUV15}, 
\begin{align*}
\ex{S,\Theta}{\cA(S)\sum_{i=1}^n(X_i-\Theta)} %
&\overset{(i)}{=} \ex{\Theta}{g'(\Theta)(1-\Theta^2)} \\
&\overset{(ii)}{\geq} 1 - \ex{}{\Theta^2} + 2\ex{\Theta}{(g(\Theta)-\Theta)\Theta} - \frac{|g(-1) + 1| + |g(1) - 1|}{2}\\
&\geq 2/3 + 2\ex{\Theta}{(g(\Theta)-\Theta)\Theta} - \frac{|g(-1) + 1| + |g(1) - 1|}{2}.
\end{align*} \\
Above, $(i)$ comes from \cite[Lemma 5]{DSSUV15} and $(ii)$ comes from \cite[Lemma 14]{DSSUV15}.
We now have
\begin{align*}
    \ex{S,\Theta}{\cA(S)\sum_{i=1}^n(X_i-\Theta)} &\geq 2/3 + \frac{|g(-1) + 1| + |g(1) - 1|}{2} + 2\ex{\Theta}{(g(\Theta)-\Theta)\Theta} \\
    &\geq 2/3 + \frac{|g(-1) - 1| + |g(1) - 1|}{2} - 2\ex{\Theta}{|g(\Theta)-\Theta|\cdot|\Theta|} \\
    &\geq 2/3 - \frac{|\mathbb{E}_{S\sim\cD_{-1}}[\cA(S)] + 1| + |\mathbb{E}_{S\sim\cD_{1}}[\cA(S)] - 1|}{2} \\
    &~~~~ - 2\ex{\Theta}{\Big| \ex{S\sim\cD_{\Theta}}{\cA(S)}-\Theta \Big|}.
\end{align*}
Above we use the fact that $|\Theta|\leq 1$ and the definition of $g$.

We can now extend the above analysis to higher dimensions. For $\Theta \in \mathbb{R}^d$, the above holds for each $\Theta_j$, $j\in[d]$. For convenience define  $\bar{1}=(1,\ldots,1)\in\mathbb{R}^d$. Summing over $d$ dimensions we have 
\begin{align*}
    &  \E_{S,\Theta}\Bigs{\Big\langle\cA(S),\sum_{i=1}^n(X_i-\Theta)\Big\rangle} \\
    &\geq \frac{2d}{3} - \frac{1}{2}\Big\|\ex{S\sim\cD_{-\bar 1}}{\cA(S)}+\bar{1}\Big\|_1 - \frac{1}{2}\Big\|\ex{S\sim\cD_{\bar 1}}{\cA(S)}-\bar{1}\Big\|_1 -  2\ex{\Theta}{\Big\|\ex{S\sim\cD_\Theta}{\cA(S)}-\Theta\Big\|_1} \\
    &\geq \frac{2d}{3} - \frac{1}{2}\|\ex{S\sim\cD_{-\bar 1}}{\cA(S)}+\bar{1}\|_1 - \frac{1}{2}\|\ex{S\sim\cD_{\bar 1}}{\cA(S)}-\bar{1}\|_1 -  2 \ex{\Theta}{\Big\|\ex{S\sim\cD_\Theta}{\cA(S)}-\Theta\Big\|_1} \\
    &\geq \frac{2d}{3} - \frac{\sqrt{d}}{2}\|\ex{S\sim\cD_{-\bar 1}}{\cA(S)}+\bar{1}\|_2 - \frac{\sqrt{d}}{2}\|\ex{S\sim\cD_{\bar 1}}{\cA(S)}-\bar{1}\|_2 -  2\sqrt{d}\ex{\Theta}{\Big\|\ex{S\sim\cD_\Theta}{\cA(S)}-\Theta\Big\|_2} \\
    &\geq \frac{d}{6}. %
\end{align*}
This proves the claim.   
\end{proof}

We can now prove the mean estimation lower bound. This proof follows a similar structure to existing proofs for DP mean estimation, although additional work must be done to account for the fact that $\cA$ only accesses a subset of points in the dataset.
\begin{proof}[Proof of Lemma \ref{lem:random-runtime-dp-mean-lb}] 
    For our proof we will use a dataset of vectors in $\bc{\pm 1}^d$, and as such the $\ell_2$ bound on the data is $\sqrt{d}$. The final result will follow from rescaling by $\frac{1}{\sqrt{d}}$.
    Condition on $\Theta=\theta$ and define the following random variables for each $i\in[n]$,
    \begin{align*}
        Z_i = \ip{\cA(\spriv)}{X_i-\theta} ~~~~~~\text{ and }~~~~~~ 
        Z_i'= \ip{\cA(S_{\sim i})}{X_i-\theta},
    \end{align*}
    where $S_{\sim i}$ is the dataset formed by replacing $i$-th data point of $\spriv$ with $X_i' \sim \cD_\theta$.

Let $I$ denote the random variable corresponding to the subset of indices of data points accessed by $\cA$. We have for some $\tau \geq 0$,
\begin{align*}
     \pr{}{Z_i \geq \tau | i\in I}\pr{}{i\in I} &= \pr{}{Z_i \geq \tau} - \pr{}{Z_i \geq \tau | i\notin I}\pr{}{i\notin I} \\
    &\leq e^\epsilon\pr{}{Z_i' \geq \tau} + \delta \\
    &\leq \exp\bigro{\epsilon-\frac{\tau^2}{8d}}+\delta. 
\end{align*}
The last inequality uses the Chernoff-Hoeffding bound.
Since $\epsilon\leq \log(1/\delta)$, for $\tau=\sqrt{3d\log(1/\delta)}$ we obtain,
$\pr{}{Z_i \geq \tau | i\in I} \leq \frac{2\delta}{\pr{}{i\in I}}.$
Using this we can derive,
\begin{align*}
    \ex{\cA,S}{Z_i} &= \ex{}{Z_j | i\in I}\pr{}{i \in I} + \ex{}{Z_j | i\notin I}\pr{}{i\notin I} \\
    &= \ex{}{Z_j | i\in I}\pr{}{i \in I} \\
    &\leq \pr{}{i \in I}\br{\sqrt{3d\log(1/\delta)} + 2d\pr{}{Z_j > \tau | i \in I}} \\
    &\leq \pr{}{i \in I}\sqrt{3d\log(1/\delta)} + 4d\delta.
\end{align*}
Above we use the fact that the expectation of $Z_i$ is $0$ when $\cA$ does not access the $i$'th element. Now for the sum we have,
\begin{align*}
    \ex{\cA,S}{\sum_{i=1}^n Z_i} &\leq 4de^\epsilon\delta + \sqrt{3d\log(1/\delta)}\sum_{i=1}^n \pr{}{i\in I} \\
    &= 4nd\delta + \sqrt{3d\log(1/\delta)}\ex{}{\sum_{i=1}^n \ind{i \in I} } \\
    &= 4nd\delta + \sqrt{3d\log(1/\delta)}\ex{}{|I|} \\
    &= 4nd\delta + \runtime\sqrt{3d\log(1/\delta)} \\
    &\leq 3s\sqrt{d\log(1/\delta)}.
\end{align*}
We then obtain the same upper bound for $\ex{\Theta,\cA,S}{\sum_{i=1}^n Z_i}$.
We now use the fingerprinting lemma to lower bound the correlation. Specifically, in the case where $\cA$ is at least $\sqrt{d}/6$ accurate (which we note corresponds to $1/6$ accurate after rescaling), we have for any $\theta$,
$\|\ex{}{\cA(S)}-\theta\| = \|\ex{}{\cA(S)-\frac{1}{n}\sum_{i=1}^n X_i}\| \leq \ex{}{\|\cA(S)-\frac{1}{n}\sum_{i=1}^n X_i\|} \leq \frac{\sqrt{d}}{6}$. Thus by Lemma \ref{lem:fp-lemma},
\begin{align*}
    \mathbb{E}\Big[\sum_{i=1}^nZ_i \Big] = \E\Big[\Big\langle\cA(\spriv),\sum_{i=1}^n X_i-\Theta\Big\rangle\Big] \geq \frac{d}{6}.
\end{align*}
Now using the upper and lower bounds on $\mathbb{E}\Big[\sum_{i=1}^nZ_i \Big]$ we obtain,
\begin{align*}
    3\runtime\sqrt{d\log(1/\delta)} &\geq d/6 \implies \runtime \geq \frac{\sqrt{d}}{18\sqrt{\log(1/\delta)}}. \qedhere
\end{align*}
\end{proof}

\subsection{Proof of Theorem \ref{thm:smooth-ub} (upper bounds for smooth losses)} \label{app:smooth-ub}

\begin{algorithm}[H]
\caption{\label{alg:phased-sgd}Phased SGD}
\begin{algorithmic}[1]
\REQUIRE Accuracy $\alpha\geq 0$, Oracle $\cO$ for losses $\ell_1,...,\ell_n$, 
Constraint Set $\cW$ of width $B$, Lipschitz constant $L$, Privacy parameter $\delta \in [0,1]$
\STATE Pick any $w_{0}\in\cW$

\STATE Set $R= \frac{1}{2}\log_2(1/\alpha)$

\STATE Set $T = \max\bc{\frac{\rad\lip\sqrt{d\log(n/\delta)}
}{\alpha}, \frac{\rad^2\lip^2}{\alpha^2}}$  and $\eta = \frac{\rad}{\lip}\min\bc{\frac{1}{ \sqrt{d\log(n/\delta)}}, \frac{\alpha}{\rad\lip}}$ 

\FOR{$r=1...R$}

\STATE Set $T_r = 2^{-r}T$ and $\eta_r = 4^{-r}\eta$
\STATE Run SGD over $\cW$ from $w_{r-1}$ for $T_r$ steps with learning rate $\eta_i$. Let $\bar{w}_r$ be the average iterate

\STATE $w_r = \bar{w}_r + \xi_r$, with $\xi_r \sim \cN(0,\mathbb{I}_d\sigma^2_r)$ and $\sigma_r = \frac{4\rad}{4^r\sqrt{d}}$ %

\ENDFOR
\STATE \textbf{Output:} $w_R$
\end{algorithmic}
\end{algorithm}

The SGD algorithm used as a subroutine in Algorithm \ref{alg:phased-sgd} starts at some point $w_0\in\cW$, and at each step samples $i\sim\Unif([n])$ and performs the update $w_t = \Pi_{\cW}(w_{t-1} - \eta \nabla \ell_i(w))$.
\begin{theorem} \label{thm:smooth-ub-restatement}
For any $\alpha,\delta > 0$, $\cA$ is $O(\alpha)$-accurate for $(\cF^n_{\lip,\frac{\lip^2}{\alpha}},\cK_\rad)$. %
Further, the algorithm
uses at most $\max\big\{\frac{\rad\lip\sqrt{d\log(n/\delta)}
}{\alpha}, \frac{\rad^2\lip^2}{\alpha^2}\big\}$ oracle evaluations. For $\epsilon\in[0,1]$, if $\alpha \geq 6\alpha_{\epsilon,\delta}^*$ it satisfies $(\epsilon,\delta)$-DP. 
\end{theorem}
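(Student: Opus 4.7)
I would prove by induction on $r \in \{0,1,\ldots,R\}$ that $\E[\cL(w_r) - \cL(w^*)] \leq \gamma_r$ for a geometrically decreasing sequence $\gamma_r$ with $\gamma_R = O(\alpha)$. The inductive step combines two ingredients. First, the standard convergence bound for SGD on convex $L$-Lipschitz losses applied inside phase $r$ yields
$\E[\cL(\bar w_r) - \cL(w^*)] \leq \|w_{r-1}-w^*\|^2/(2\eta_r T_r) + \eta_r L^2/2,$
with the starting distance controlled by the inductive hypothesis. Second, since $\xi_r \sim \cN(0, \mathbb{I}_d \sigma_r^2)$ is zero-mean, the $\beta$-smoothness upper bound gives
$\E_{\xi_r}[\cL(\bar w_r + \xi_r)] - \cL(\bar w_r) \leq \tfrac{1}{2}\beta \sigma_r^2 d.$
Plugging in $\eta_r = 4^{-r}\eta$, $T_r = 2^{-r}T$, $\sigma_r = 4\rad/(4^r\sqrt{d})$, $\beta \leq L^2/\alpha$, and the stated $T$ and $\eta$, each of the three contributions is $O(\gamma_r)$, closing the induction. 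After $R = \lceil (1/2)\log_2(1/\alpha)\rceil$ phases, $\gamma_R = O(\alpha)$, giving the claimed accuracy.

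\textbf{Oracle complexity and privacy.} The oracle count is immediate: $\sum_{r=1}^R T_r = T\sum_r 2^{-r} < T$, which matches the stated bound after substituting $T$. For privacy, each $w_r = \bar w_r + \xi_r$ is the output of SGD followed by a Gaussian noise addition, so the Gaussian mechanism applies. By the uniform-stability analysis of SGD on convex $L$-Lipschitz $\beta$-smooth losses with $\eta_r \leq 1/\beta$ (Hardt--Recht--Singer), the $\ell_2$-sensitivity of $\bar w_r$ to replacing one data point is $\Delta_r \leq 2\eta_r L T_r/n$, so phase $r$ satisfies $\rho_r$-zCDP with $\rho_r = \Delta_r^2/(2\sigma_r^2)$. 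Summing across phases yields $\rho = \sum_r \rho_r$, and the standard zCDP-to-DP conversion $\epsilon \leq \rho + 2\sqrt{\rho\log(1/\delta)}$ verifies $(\epsilon,\delta)$-DP whenever $\alpha \geq 6\alpha^*_{\epsilon,\delta}$ (the factor $\log(n/\delta)$ in $T$, rather than $\log(1/\delta)$, absorbs a union bound/high-probability refinement of the stability argument across the $R$ phases).

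\textbf{Main obstacle.} The subtle part is propagating a usable distance bound on $\|w_{r-1} - w^*\|$ across phases in the absence of strong convexity, since function-value convergence does not by itself contract distance. The parameter settings must simultaneously ensure that (i) the SGD bias term $\|w_{r-1}-w^*\|^2/(2\eta_r T_r)$ does not grow with $r$, which constrains the product $\eta_r T_r$; (ii) the smoothness-based noise penalty $\beta \sigma_r^2 d/2$ stays within the per-phase target; and (iii) the Hardt--Recht--Singer stability condition $\eta_r \leq 1/\beta$ is maintained, which follows from the explicit form of $\eta_r$ together with $\beta \leq L^2/\alpha$. These three constraints are what pin down the particular choices of $\eta_r, T_r, \sigma_r$ in Algorithm \ref{alg:phased-sgd}.
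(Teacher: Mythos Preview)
Your induction on $\E[\cL(w_r)-\cL(w^*)]$ does not close: the SGD bound you invoke needs a \emph{distance} bound $\|w_{r-1}-w^*\|$, but your inductive hypothesis is on function values, and in the absence of strong convexity a small function gap does not imply a small distance to $w^*$. You correctly flag this as the main obstacle, but your proposed resolution (``the parameter settings must ensure the bias term does not grow'') cannot work with the algorithm's choices: since $\eta_r T_r = 8^{-r}\eta T$ shrinks geometrically, keeping $\|w_{r-1}-w^*\|^2/(2\eta_r T_r)$ bounded would require $\|w_{r-1}-w^*\|^2$ to shrink like $8^{-r}$, which you never establish.

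The paper sidesteps this entirely by choosing a different comparator in the SGD guarantee. Instead of comparing $\bar w_r$ to $w^*$, it compares to the \emph{previous phase's average iterate} $\bar w_{r-1}$: the standard bound then gives $\E[\cL(\bar w_r)-\cL(\bar w_{r-1})] \le \E[\|w_{r-1}-\bar w_{r-1}\|^2]/(2\eta_r T_r) + \eta_r L^2/2$, and the distance term is exactly $\|\xi_{r-1}\|^2$, whose expectation $d\sigma_{r-1}^2 = O(B^2/16^{r})$ is known and decays fast enough. One then telescopes $\cL(w_R)-\cL(w^*) = [\cL(w_R)-\cL(\bar w_R)] + \sum_r [\cL(\bar w_r)-\cL(\bar w_{r-1})]$ (with the convention $\bar w_0 = w^*$), bounding the first bracket by $L\,\E\|\xi_R\|$ via Lipschitzness. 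No induction, no smoothness needed for utility, and no distance-to-minimizer control required. Your privacy sketch via HRS stability plus the Gaussian mechanism is reasonable (the paper instead treats each phase as one-pass SGD on a sampled minibatch and uses amplification by subsampling), but the utility argument needs the telescoping trick above.
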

\begin{proof}
For notation, define $\bar{w}_0=w^*$ and $\xi_0=w_0-w^*$.
Using the convergence results of SGD. The error can be decomposed via,
\begin{align*}
    \ex{}{\cL(w_R) - \cL(w^*)} &= \ex{}{\cL(w_R)-\cL(\bar{w}_R)} + \sum_{i=1}^R \ex{}{\cL(\bar{w}_r)-\cL(\bar{w}_{r-1})} \\
    &\leq \lip\ex{}{\|\xi_R\|} +  \sum_{r=1}^R \frac{\ex{}{\|\xi_{r-1}\|^2}}{2\eta_i T_i} + \frac{\eta_i\lip^2}{2} \\
    &\leq \frac{\rad\lip}{2^{-2R}} + \sum_{r=1}^R 2^{-r}\br{\frac{8\rad^2}{\eta T} + \frac{\eta \lip^2}{2}} \\
    &\leq \alpha + \frac{8\rad^2}{\eta T} + \frac{\eta \lip^2}{2} \\
    &= O(\alpha).
\end{align*}

The first inequality comes from standard convergence guarantees of projected SGD, see e.g. \cite[Theorem 14.8]{shalev2014understanding}.

For the privacy analysis, we will leverage privacy amplification via subsampling (without replacement) results. Specifically we will use an extended version of \cite[Lemma 4.14]{bun-dp-thresholds}, restated as Lemma \ref{lem:amp-wor} in Appendix \ref{app:extra-lemmas}. 

Consider the mechanism which, upon receiving $m>0$ losses, $\ell_1,...,\ell_m$, performs one-pass SGD over the losses, then adds isotropic Gaussian noise with variance $\sigma_r^2$. 
Assuming each $\ell_i$,~$i\in[n]$, is at least $1/(2\eta)$-smooth, $\bar{w}_r$ has sensitivity (w.r.t. changing one of $\{\ell_1,\ldots,\ell_m\}$) at most $2\lip\eta_r$, see e.g. \cite[Lemma 3.6]{HRS15}. 
As such, the Gaussian mechanism is $(\epsilon_r,\delta/n)$-DP with respect to a change in one sampled loss function, with 
$\epsilon_r = \frac{4\lip\eta_r\sqrt{\log(n/\delta)}}{\sigma_r}$.

Now observe that Algorithm \ref{alg:phased-sgd}, at each phase, applies the previously described mechanism to a batch of $T_i$ losses, sampled with replacement from $\{\ell_1,\ldots,\ell_n\}$. 
In the regime $\alpha \geq \frac{\rad\lip}{\sqrt{d\log(n/\delta)}}$, we have $T=\frac{\rad\lip\sqrt{d\log(n/\delta)}
}{\alpha}$ and $\eta = \frac{\rad}{\lip\sqrt{d\log(n/\delta)}}$, and thus $\forall r\in[R], T_r \leq n/2$ and $\epsilon_r \leq \frac{1}{2^r}$. Alternatively, in the other regime we have $T = \frac{\rad^2\lip^2}{\alpha^2}$ and $\eta = \frac{\alpha}{\lip^2}$, and thus $\epsilon_r \leq \frac{1}{2^r}\frac{\alpha\sqrt{d}}{\rad\lip}.$ Observe $\frac{1}{2^r}\frac{\alpha\sqrt{d}}{\rad\lip} \leq \frac{n}{2 T_i}$ for any $\alpha \geq \alpha_{1,\delta}^*$. In either case, $\epsilon_r \leq \frac{n}{2T_i}$, and thus we can apply the amplification via subsampling result from Lemma \ref{lem:amp-wor}.
Specifically, 
this implies that each round of the algorithm is $(\epsilon_r',\delta_r')$ with,
\begin{align*}
 \epsilon_r' &= \frac{6T_i}{n}\epsilon_r \leq \frac{6}{R}\max\bc{{\frac{\rad\lip\sqrt{d\log(n/\delta)}}{n\alpha}, \frac{\rad^2\lip^2}{\alpha^2 n} \cdot \frac{\alpha \sqrt{d\log(n/\delta)}}{\rad\lip}}} = 6 \frac{\alpha^*_\delta}{2^r \alpha}, \\
 \delta_r' &= e^{6\epsilon' T_i/n}\frac{4T_i}{n} \frac{\delta}{n} \leq e^62^{-r}\delta.
\end{align*}
By composition, the overall privacy of the algorithm satisfies $(\epsilon,\delta)$-DP with
$\epsilon \leq 6\sum_{r=1}^R 2^{-r} \frac{\alpha^*_{1,\delta}}{\alpha} \leq 6\frac{\alpha^*_{1,\delta}}{\alpha}$.
\end{proof}

\begin{remark}
The Phased SGD algorithm also shows why one must assume the queries sent to the private proxy oracle are non-adaptive for our lower bound in the non-smooth case to be hold. An inspection of the privacy analysis in the proof of Theorem \ref{thm:smooth-ub} shows that, if one only cares that the algorithm is private with respect to its dataset of gradients, smoothness is not needed. We emphasize that being private with respect to the dataset of gradients does not imply a general DP optimizer however, and indeed Phased SGD is not known to be DP is non-smooth case.
\end{remark}

\subsubsection{$\tilde{O}(n)$ running time algorithm when $\alpha = O\bigro{\frac{\rad\lip}{\sqrt{n}}}$.} \label{app:linear-smooth-ub}
To achieve near linear running time, one case use the Phased-ERM algorithm of \cite{FKT20} (Algorithm 3 therein) in conjunction with accelerated ERM solvers. This was essentially shown by \cite{ZTOH22}, although because they studied DP-SCO, they only explicitly stated results for error $\alpha \geq \frac{\rad\lip}{\sqrt{n}}$. Nonetheless, their technique translates just as well for smaller error when considering DP-ERM. We describe this in the following.

Given some (non-private) ERM solver, $\cA$, which solves a strongly convex ERM problem to high accuracy, the Phased-ERM algorithm, Algorithm \ref{alg:phased-erm}, is differentially private and yields an accurate solution. Precisely, we have the following result.
\begin{lemma}\label{lem:phased-erm}
Let $\delta \in [0,\rad\alpha^2]$ and $\epsilon\in[0,1]$. Algorithm \ref{alg:phased-erm} is $O(\alpha)$-accurate for $(\cF^n_{\lip,\infty},\cK_\rad)$ and for $\alpha \geq \log(n)\alpha_{\epsilon,\delta}^*$ it satisfies $(\epsilon,2\log(n)\delta)$-DP. 
\end{lemma}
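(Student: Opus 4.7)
The plan is to interpret Algorithm \ref{alg:phased-erm} as the standard Phased-ERM construction of \cite{FKT20,ZTOH22}: the algorithm proceeds in $R = \lceil \log_2 n \rceil$ phases, at phase $r$ it defines the regularized objective $F_r(w) = \cL(w) + \frac{\lambda_r}{2}\|w - w_{r-1}\|^2$ with a geometrically increasing sequence $\lambda_r = 2^r \lambda_0$, invokes the black-box ERM solver $\cA$ on $F_r$ to produce $\hat w_r$ minimizing $F_r$ to very high accuracy, and then releases $w_r = \Pi_{\cW}(\hat w_r + \xi_r)$ where $\xi_r \sim \cN(0, \sigma_r^2 \mathbb{I}_d)$. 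The final output is $w_R$.

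For privacy, I would invoke the standard sensitivity bound: since $F_r$ is $\lambda_r$-strongly convex and each $\ell_i$ is $\lip$-Lipschitz, the exact minimizer $w_r^*$ of $F_r$ has $\ell_2$-sensitivity at most $2\lip/(n\lambda_r)$ under changing one loss. Running $\cA$ to accuracy negligible relative to this sensitivity, so that $\hat w_r$ is essentially $w_r^*$, and then applying the Gaussian mechanism, each phase is $(\epsilon_r, \delta)$-DP for $\sigma_r = \Theta(\lip\sqrt{\log(1/\delta)}/(n\lambda_r\epsilon_r))$. Setting $\epsilon_r = \epsilon/R$ and applying basic composition across the $R$ phases yields the claimed $(\epsilon, R\delta) \subseteq (\epsilon, 2\log(n)\delta)$-DP guarantee.

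For utility, the key step is a telescoping bound. Strong convexity of $F_r$ gives $\|\hat w_r - w_r^*\|^2 \leq 2(F_r(\hat w_r) - F_r(w_r^*))/\lambda_r$, so a sufficiently accurate call to $\cA$ makes $\|\hat w_r - w_r^*\|$ negligible relative to $\sigma_r\sqrt{d}$. The defining inequality $F_r(w_r^*) \leq F_r(w^*)$ rearranges to $\cL(w_r^*) - \cL(w^*) \leq \frac{\lambda_r}{2}(\|w^* - w_{r-1}\|^2 - \|w^* - w_r^*\|^2)$. Combining this with $\lip \ex{}{\|\xi_r\|} = O(\lip \sigma_r \sqrt{d})$ for the Lipschitz cost of the added noise, and a triangle inequality bound $\|w^* - w_{r-1}\|^2 \leq 2\|w^* - w_{r-1}^*\|^2 + 2\|\xi_{r-1}\|^2$ to pass between exact and noisy iterates, the $\lambda_r$-weighted telescoping across $r = 1, \ldots, R$ gives $\ex{}{\cL(w_R) - \cL(w^*)} = O\!\br{\lambda_R \rad^2 / 2^R + \lip\sum_{r=1}^R \sigma_r \sqrt{d}}$ plus a vanishing solver-error contribution.

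The main obstacle is balancing $\lambda_0$, the ERM tolerance for $\cA$, and the noise scales so everything fits inside $O(\alpha)$. Choosing $\lambda_0 = \Theta(\alpha/\rad^2)$ kills the bias term $\lambda_R \rad^2/2^R$, and since $\lambda_r$ doubles each phase the dominant noise contribution is the final one: $\sum_{r=1}^R \lip\sigma_r\sqrt{d} = O\!\br{R\lip^2\sqrt{d\log(1/\delta)}/(n\lambda_R \epsilon)} = O\!\br{\log(n)\,\rad\,\alpha^*_{\epsilon,\delta}}$, so the assumption $\alpha \geq \log(n)\alpha^*_{\epsilon,\delta}$ forces this into $O(\alpha)$. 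The most delicate part will be verifying that the projection $\Pi_{\cW}$ and the accumulating noise from previous phases do not destroy the telescoping; the hypothesis $\delta \leq \rad\alpha^2$ should be exactly what is needed to ensure $\sigma_r \sqrt{d} \ll \rad$ so that projection is essentially a no-op and each $w_{r-1}$ is close enough to $w_{r-1}^*$ to reuse strong convexity in the next phase.
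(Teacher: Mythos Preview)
Your overall plan (phased regularized ERM with Gaussian noise, stability-based sensitivity of the regularized minimizer, basic composition over $R$ phases) matches the paper, and your privacy argument is essentially correct. However, your utility argument has a genuine gap and your interpretation of the hypothesis $\delta\le \rad\alpha^2$ is off.

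\textbf{The utility telescoping.} With $\lambda_r=2^r\alpha/\rad^2$ (equivalently $\sigma_r\propto 2^{-r}$), the sequence $\lip\sigma_r\sqrt{d}$ is \emph{decreasing}, so the sum $\sum_{r=1}^R \lip\sigma_r\sqrt{d}$ is dominated by the \emph{first} term, not the last. That first term is of order $\rad\lip$, not $\alpha$, so the bound you wrote does not close. The distance-based telescoping you sketch (via $\cL(w_r^*)-\cL(w^*)\le \tfrac{\lambda_r}{2}(\|w^*-w_{r-1}\|^2-\|w^*-w_r^*\|^2)$ together with $\|w^*-w_{r-1}\|^2\le 2\|w^*-w_{r-1}^*\|^2+2\|\xi_{r-1}\|^2$) picks up a factor of $2$ at every phase, which, combined with $\lambda_r=2\lambda_{r-1}$, makes the recursion blow up rather than contract. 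The paper avoids this by telescoping over the \emph{pre-noise} iterates: set $\bar w_0=w^*$ and write
\[
\cL(w_R)-\cL(w^*)=\bigl[\cL(w_R)-\cL(\bar w_R)\bigr]+\sum_{r=1}^R\bigl[\cL(\bar w_r)-\cL(\bar w_{r-1})\bigr].
\]
Only the first bracket pays the Lipschitz noise cost $\lip\ex{}{\|\xi_R\|}=O(\alpha)$ (since $\sigma_R\sqrt d=O(\alpha/\lip)$). For each summand, adding and subtracting $\cL_r$ and using $\cL_r(\bar w_{r-1})\ge \cL_r(w_r^*)$ gives
\[
\cL(\bar w_r)-\cL(\bar w_{r-1})\le \bigl[\cL_r(\bar w_r)-\cL_r(w_r^*)\bigr]+\lambda_r\|\bar w_{r-1}-w_{r-1}\|^2,
\]
so the intermediate noise enters only as $\lambda_r\ex{}{\|\xi_{r-1}\|^2}=\lambda_r d\sigma_{r-1}^2=O(2^{-r}\alpha)$, which sums to $O(\alpha)$.

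\textbf{The role of $\delta\le \rad\alpha^2$.} There is no projection step in Algorithm~\ref{alg:phased-erm}; $w_r=\bar w_r+\xi_r$ directly. The hypothesis $\delta\le \rad\alpha^2$ is not about keeping $\sigma_r\sqrt d\ll\rad$. It controls the contribution to expected excess risk of the event (probability~$\le\delta$) that the black-box solver at phase $r$ fails its accuracy guarantee $\cL_r(\bar w_r)-\cL_r(w_r^*)\le 2^{-r}\alpha$: on failure the loss gap is at most $O(\rad\lip)$, so its expected contribution is $O(\delta\rad\lip)$ per phase, and the condition on $\delta$ makes this $\le 2^{-r}\alpha$ for all $r\le R$.

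Two minor corrections: in the paper $R=\log_2(\rad\lip/\alpha)$ rather than $\log_2 n$ (the stated $\log(n)$ factors in the lemma come from $R\le\log n$ under $\alpha\ge\alpha^*_{\epsilon,\delta}$), and the noise scale is fixed as $\sigma_r=4\rad/(2^r\sqrt d)$ independently of $(\epsilon,\delta)$, with privacy then verified under the condition $\alpha\ge\log(n)\alpha^*_{\epsilon,\delta}$ rather than the reverse.
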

The proof follows similarly to the one in \cite{FKT20}, and is given below.
When the loss is additionally $\beta$-smooth, there are solvers for the regularized subproblem (such as SVRG, \cite{SVRG}) which achieve the accuracy condition in $O((n+\rad^2\beta/\alpha)\log(n/\alpha))$ oracle calls. For $\alpha \geq \alpha^*_{1,\delta}$, we get near linear running time if 
$\beta \leq \lip\sqrt{d}/\rad\}$.

\begin{algorithm}[H]
\caption{\label{alg:phased-erm}Phased ERM}
\begin{algorithmic}[1]
\REQUIRE Accuracy $\alpha\geq 0$, Oracle $\cO$ for losses $\ell_1,...,\ell_n$, 
Constraint Set $\cW$ of width $B$, Lipschitz constant $L$, Parameter $\delta \in [0,1]$
\STATE Pick any $w_{0}\in\cW$

\STATE Set $R= \log_2(\lip\rad/\alpha)$

\STATE Set $\lambda_r = 2^r\frac{\alpha}{\rad^2}$ for all $r\in[R]$

\FOR{$r=1...R$}

\STATE Define $\cL_r(w) = \frac{1}{n}\sum_{i=1}^n \ell_i(w) + \lambda_r\|w-w_{r-1}\|^2$ and $w^*_r = \argmin\limits_{w\in\cW}\{\cL_r(w)\}$

\STATE Compute $\bar{w}_r$ such that w.p. at least $1-\delta$, $\cL_r(w)-\cL_r(w^*_r) \leq \min\bigc{\frac{\lip^2}{\lambda_r n^2}, 2^{-r}\alpha }$

\STATE $w_r = \bar{w}_r + \xi_r$ where $\xi_r \sim \cN(0,\mathbb{I}_d\sigma_r^2)$ and $\sigma_r = \frac{4\rad}{2^r\sqrt{d}}$

\ENDFOR
\STATE \textbf{Output:} $w_R$
\end{algorithmic}
\end{algorithm}
\begin{proof}[Proof of Lemma \ref{lem:phased-erm}]
For notation, let $\bar{w}_0 = w^*$. We have,
\begin{align*}
    \ex{}{\cL(w_R) - \cL(w^*)} &= \ex{}{\cL(w_R)-\cL(\bar{w}_R)} + \sum_{i=1}^R \ex{}{\cL(\bar{w}_r)-\cL(\bar{w}_{r-1})} \\
    &\leq 4\alpha + \sum_{i=1}^R \ex{}{\cL_r(\bar{w}_r)-\cL_r(\bar{w}_{r-1})}.
\end{align*}
The inequality uses $\sigma_R = \frac{4\alpha}{\lip\sqrt{d}}$ to bound $\lip \ex{}{\|\xi_R\|}$. We have for any $r\in[R]$,
\begin{align*}
    \ex{}{\cL(\bar{w}_r)-\cL(\bar{w}_{r-1})} &= \ex{}{\cL_r(\bar{w}_r) - \cL_r(\bar{w}_{r-1}) + \cL(\bar{w}_r)-\cL_r(\bar{w}_r) + \cL_r(\bar{w}_{r-1}) - \cL(\bar{w}_{r-1})} \\
    &\leq (2^{-r}\alpha + \delta\lip) + \lambda_r\ex{}{\|\bar{w}_{r-1}-w_{r-1}\|^2} \\
    &\leq 2^{-r}\alpha+\delta\lip + 2^r\frac{\alpha}{\rad^2}\bigro{\frac{16\rad^2}{2^{2r}}}.
\end{align*}
Thus we have $\ex{}{\cL(\bar{w}_r)-\cL(\bar{w}_{r-1})} \leq 2^{-r}18\alpha$ provided $\delta \leq \rad\alpha^2$, and the accuracy guarantee follows by combining the both displays.

For the privacy analysis, consider some $r\in[R]$. We have by standard results on the stability of regularized ERM that each $w^*_r$ is $\frac{\lip}{\lambda n}$-stable (i.e. changing one loss in $\cL$ perturbs $w_r^*$ by at most $\frac{\lip}{\lambda n}$) \cite{bousquet2002stability}. The $\lambda_r$ strong-convexity of $\cL_r$ and accuracy condition also implies that, conditioning on the randomness in previous rounds, with probability at least $1-\delta$, $\|\bar{w} - w^*_r\| \leq \sqrt{\frac{\cL_r(w)-\cL_r(w^*_r)}{\lambda_r}} \leq \frac{\lip}{\lambda_r n}$. So each $\bar{w}_r$ is $\Delta$-stable, with $\Delta = \frac{2\lip}{\lambda_r n} = \frac{2\rad^2}{2^r\alpha n} \leq \frac{2\rad \epsilon}{2^r \sqrt{\log(n)d\log(1/\delta)}}$, where the inequality follows from $\alpha \geq \log(n)\alpha^*_{\epsilon,\delta}$. Thus the Gaussian mechanism ensures each round $r$ is $(\epsilon/\log(n),2\delta)$-DP and by composition the overall algorithm is $(\epsilon,2\log(n)\delta)$-DP.  
\end{proof}

\section{Between DP-SCO to DP-ERM}\label{app:between-sco-and-erm}
In this appendix, we show via a reduction that (DP)-ERM and (DP)-SCO are equally hard in terms of runtime, up to log factors. This implies similar reductions for non-private settings as well.
\subsection{Reducing DP-SCO to DP-ERM}\label{app:erm-to-sco}
The following shows that DP-SCO is no harder than DP-ERM (up to log factors) via a reduction. 
Specifically, given an algorithm, $\cA$, which solves DP-ERM for $\rad=\lip=1$, we show how to construct a DP-SCO algorithm which has similar running time, using only black box access to $\cA$.

\begin{theorem}\label{thm:erm-to-sco}
Let $n,\rad,\lip\geq 0$, $\beta \in \re^+\cup \{\infty\}$, $\beta' = \beta + \frac{\lip}{\rad}$, and $n'=\frac{n}{\log(n)}$. Let $\cA$ be an $(\epsilon,\delta)$-DP algorithm which is $\alpha$-accurate for $(\cF_{5\lip,\beta'}^{n'},\cK_\rad)$ for DP-ERM and has expected running time $T$. Then there exists an algorithm which, using black box access to $\cA$, is $(\tilde{O}(\epsilon),\tilde{O}(\delta))$-DP and $\tilde{O}(\alpha + \frac{\rad\lip}{\sqrt{n}} + \frac{\rad\lip}{n\epsilon})$-accurate for $(\cF_{\lip,\beta}^{n},\cK_\rad)$ for DP-SCO, and has expected running time $\tilde{O}(T + \frac{\rad^2\lip^2}{\alpha^2})$. 
\end{theorem}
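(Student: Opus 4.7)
}

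The plan is a standard ``regularize $+$ boost-the-confidence $+$ privately select'' reduction, in the spirit of \cite{BFTT19,FKT20}. The structure breaks into four steps. First, to turn the ERM solver into a population solver, I would add a quadratic regularizer to enforce uniform algorithmic stability. Concretely, fix any $w_0\in\cW$ and set $\lambda = \lip/(\rad\sqrt{n})$, then define the regularized sample loss $\hat{\cL}_\lambda(w;S) = \frac{1}{|S|}\sum_{\ell\in S}\ell(w) + \lambda\|w-w_0\|^2$. This new loss is $O(\lip)$-Lipschitz and $(\beta + 2\lambda)$-smooth, which matches the class $\cF_{5\lip,\beta'}$ with $\beta' = \beta+\lip/\rad$ (the constant~$5$ leaves slack for the boosting step below). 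Standard stability-of-regularized-ERM arguments \cite{bousquet2002stability,shalev2014understanding} then give that any $\alpha$-approximate minimizer of $\hat{\cL}_\lambda$ has expected population suboptimality $O(\alpha + \lip^2/(\lambda n) + \lambda \rad^2) = O(\alpha + \rad\lip/\sqrt{n})$.

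The second step is to boost the in-expectation guarantee of $\cA$ to a high-probability one, so that we can afford to select among the candidates. Partition the dataset into $k = \Theta(\log n)$ disjoint subsets $S_1,\ldots,S_k$, each of size $n' = n/\log n$, and run $\cA$ independently on each $\hat{\cL}_\lambda(\cdot;S_i)$ to obtain candidates $w_1,\ldots,w_k$. By Markov's inequality, each $w_i$ satisfies $\hat{\cL}_\lambda(w_i;S_i) - \min_\cW \hat{\cL}_\lambda(\cdot;S_i) \leq 4\alpha$ with probability at least $3/4$, so with probability $1 - 2^{-\Omega(k)} = 1 - \mathrm{poly}(1/n)$, at least one $w_i$ is a $4\alpha$-approximate ERM on its own subsample (and hence, by the stability argument, a $O(\alpha + \rad\lip/\sqrt{n})$-approximate population minimizer). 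Crucially, since the $S_i$ are disjoint, parallel composition keeps the combined training step $(\epsilon,\delta)$-DP.

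The third step is private selection among $w_1,\ldots,w_k$. Reserve a further held-out slice of data (say, by enlarging $k$ by a constant factor) and score each candidate by its empirical loss on the held-out sample. Each score has sensitivity $O(\rad\lip/n)$, so report-noisy-max (equivalently, the exponential mechanism over $k$ candidates) returns an index $\hat{i}$ whose score is within $O(\rad\lip\log(k)/(n\epsilon)) = \tilde O(\rad\lip/(n\epsilon))$ of the best, while adding only $(\epsilon,0)$ to the privacy budget. Combined with a uniform-convergence bound of order $\rad\lip/\sqrt{n}$ over the $k$ candidates, the returned $w_{\hat i}$ has population suboptimality $\tilde O(\alpha + \rad\lip/\sqrt{n} + \rad\lip/(n\epsilon))$ with high probability; converting to expectation costs an additive term that is dominated by the above.

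The final step is bookkeeping. Overall privacy is $(\tilde O(\epsilon),\tilde O(\delta))$ by parallel composition across the training slices together with standard composition with the selection mechanism. The expected running time is $\tilde O(T)$ for the $k$ calls to $\cA$ plus the cost of evaluating loss values for selection, which is $\tilde O(n)$ per candidate; when $n > \rad^2\lip^2/\alpha^2$, one can instead perform the selection on a held-out sample of size $\tilde O(\rad^2\lip^2/\alpha^2)$, matching the $\tilde O(T + \rad^2\lip^2/\alpha^2)$ bound in the statement. The main obstacle I anticipate is the careful accounting of parameters: making the ``slack'' in the Lipschitz constant ($5\lip$ rather than $\lip$), smoothness ($\beta' = \beta + \lip/\rad$), and sample size per call ($n/\log n$) all line up with the regularization strength $\lambda$, so that every term in the final error bound actually collapses into $\alpha + \rad\lip/\sqrt{n} + \rad\lip/(n\epsilon)$ up to logarithmic factors, and ensuring the high-probability generalization statement for all $k$ candidates simultaneously uses only $\tilde O(\log n)$ extra samples.
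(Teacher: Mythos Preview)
Your step 1 has a genuine gap. The stability bound you invoke from \cite{bousquet2002stability,shalev2014understanding} applies to the \emph{exact} regularized minimizer $w^*_\lambda(S)=\arg\min_w \hat\cL_\lambda(w;S)$, not to an arbitrary $\alpha$-approximate one. For a black-box $\cA$ whose only guarantee is $\E[\hat\cL_\lambda(\cA(S))-\min_w\hat\cL_\lambda]\le\alpha$, the standard route is to use strong convexity to get $\|\cA(S)-w^*_\lambda(S)\|\le\sqrt{\alpha/\lambda}$ and then Lipschitzness to transfer the population bound from $w^*_\lambda$ to $\cA(S)$; this incurs an additive $\lip\sqrt{\alpha/\lambda}$ in the population risk. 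With your choice $\lambda=\lip/(\rad\sqrt{n})$ that term is $\sqrt{\alpha\rad\lip\sqrt{n}}$, and for any $\alpha$ in the interesting range $\rad\lip/\sqrt n\le\alpha\le\rad\lip$ one in fact has $\sqrt{\alpha/\lambda}\ge\rad$, so the bound degenerates to $O(\rad\lip)$ rather than $\tilde O(\alpha+\rad\lip/\sqrt n)$. No single choice of $\lambda$ repairs this: the three terms $\lambda\rad^2$, $\lip^2/(\lambda n)$, and $\lip\sqrt{\alpha/\lambda}$ cannot all be made small simultaneously, and using the $(\epsilon,\delta)$-DP of $\cA$ to bound generalization directly only yields an $O(\epsilon\rad\lip)$ gap, which is also too weak.

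The paper sidesteps this by going through the localization framework of \cite{BGM23} rather than a single regularization. It first boosts $\cA$ to high probability (via $\log n$ independent runs and an exponential-mechanism selection, much as in your steps 2--3), and then runs the boosted solver on a geometric sequence of shrinking balls $\cW_r=\cW\cap\{w:\|w-w'\|\le 2^{-r}\rad\}$, again selecting the best scale privately. This manufactures a subroutine with \emph{relative} accuracy---error proportional to the distance from the initialization to the optimum---which is exactly the hypothesis of \cite[Theorem~1]{BGM23}; that theorem then delivers the $\tilde O(\alpha+\rad\lip/\sqrt n)$ population bound. Your confidence-boosting and private-selection ideas are close to what the paper uses inside this construction, but the outer reduction to population risk requires the multi-scale machinery, not a single regularizer.
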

Note that any general DP algorithm for minimizing the population risk of non-smooth losses must incur error at least $\frac{\rad\lip}{\sqrt{n}} + \frac{\rad\lip}{n\epsilon}$ and runtime at least $\frac{\rad^2\lip^2}{\alpha^2}$.
Thus these additive factors are no worse than what one would obtain with a ``direct'' algorithm for DP-SCO. In the smooth case, the running time lower bound for finite sums is $\min\{\frac{\rad^2\lip^2}{\alpha^2},n\}$, and since $\alpha \geq \frac{\rad\lip}{\sqrt{n}}$, we again see that their is no asymptotic loss in runtime.

We now show how to obtain the theorem using the following result from \cite{BGM23}. We borrow parameter definitions from the above theorem statement. %
As an aside, we note that statements similar to Theorem \ref{thm:erm-to-sco} in different geometries are likely obtainable using a generalization of this statement provided in \cite{BGM24}.
\begin{theorem}\label{thm:bgm-reduction} \cite[Theorem 1]{BGM23}
Let $\cA$ be an algorithm which, given $D \in [\rad\sqrt{\frac{\log(n)}{n}},\rad]$ and randomly generated point $w'\in\cW$, satisfies $\ex{w',\cA}{\cL(\cA(\cO_\cL)) - \cL(w^*)} \leq \hat{\alpha}D$ whenever $\ex{w'}{\|w' - w^*\|} \leq D$ and any $\cL\in\cF_{5\lip,\beta'}^{n'}$.%
Then there exists an algorithm, which interacts with $\cL$ through at most $\log(n)$ calls to $\cA$ and is $\alpha$-accurate for $(\cF^n_{\lip,\beta},\cK_\rad)$ for SCO with
$\alpha=O\big(\log(n)\rad\hat{\alpha} + \frac{\log^{3/2}(n)\rad\lip}{\sqrt{n}}\big)$.
\end{theorem}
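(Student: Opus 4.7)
The proof combines three ingredients: the existing BGM reduction (Theorem \ref{thm:bgm-reduction}), a regularization wrapper that turns $\cA$ into the contraction oracle BGM requires, and parallel composition over disjoint sub-batches of the data to preserve $(\epsilon,\delta)$-DP across the $\log(n)$ phases of BGM. The plan is to build, from $\cA$, an algorithm that satisfies the BGM hypothesis, then feed it into Theorem \ref{thm:bgm-reduction}; in the complementary high-error regime, a direct DP-SGD call handles the remaining runtime/error terms.

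\textbf{Wrapping $\cA$ into a contraction oracle.} BGM requires an algorithm which, given a warm start $w'$ with $\ex{}{\|w'-w^*\|}\le D$, returns $\hat w$ with $\ex{}{\cL(\hat w)-\cL(w^*)}\le \hat\alpha D$ on any loss in $\cF^{n'}_{5\lip,\beta'}$. I construct such an algorithm by running $\cA$ on the regularized loss $\cL_D(w)=\cL(w)+\lambda_D\|w-w'\|^2$ with $\lambda_D = \hat\alpha/(2\rad)$. Because $\cW\subset \cB(\rad)$, the regularizer is $2\hat\alpha$-Lipschitz and $\hat\alpha/\rad$-smooth on $\cW$, so provided $\hat\alpha\le \lip$ the regularized loss still lies in $\cF^{n'}_{5\lip,\beta'}$ (indeed it is at most $3\lip$-Lipschitz and $\beta'=\beta+\lip/\rad$-smooth). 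Running $\cA$ on $\cL_D$ returns $\hat w$ with $\ex{}{\cL_D(\hat w)-\cL_D(w_D^*)}\le \alpha$, where $w_D^*=\argmin_\cW \cL_D$. Optimality of $w_D^*$ and the a.s.\ inequality $\|w^*-w'\|^2\le 2\rad\|w^*-w'\|$ (using that both lie in $\cW$) then give
\[
\ex{}{\cL(\hat w)-\cL(w^*)} \le \alpha + \lambda_D\,\ex{}{\|w^*-w'\|^2} \le \alpha + 2\lambda_D \rad D = \alpha + \hat\alpha D,
\]
so the BGM hypothesis is met with effective contraction rate $\hat\alpha' = \hat\alpha + \alpha/D$.

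\textbf{Data splitting and privacy.} The $\log(n)$ phases of Theorem \ref{thm:bgm-reduction} are executed on disjoint batches of size $n'=n/\log(n)$, one batch per phase. Since each sample participates in exactly one $(\epsilon,\delta)$-DP invocation of $\cA$, parallel composition preserves $(\epsilon,\delta)$-DP for the whole procedure; the $\tilde O(\epsilon),\tilde O(\delta)$ in the conclusion absorbs the bookkeeping overhead when an additional Gaussian polishing step is appended to match the $\rad\lip/(n\epsilon)$ error floor.

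\textbf{Final accuracy and the two regimes.} Plugging $\hat\alpha = \Theta(\alpha/\rad)$ into Theorem \ref{thm:bgm-reduction} gives SCO accuracy $\tilde O(\alpha + \rad\lip/\sqrt n)$ up to a residual $\tilde O(\rad\alpha/D_{\min})$ from the $\alpha/D$ piece of the effective rate. In the high-accuracy regime ($\alpha \lesssim \rad\lip/\sqrt n$) this residual is absorbed by the $\rad\lip/\sqrt n$ statistical term after taking $\tilde O$. In the complementary low-accuracy regime ($\alpha \gtrsim \rad\lip/\sqrt n$), BGM is bypassed: running DP-SGD on fresh samples from the population for $O(\rad^2\lip^2/\alpha^2)$ iterations achieves $O(\alpha)$ population error together with the $\rad\lip/(n\epsilon)$ DP floor, and is $(\epsilon,\delta)$-DP directly through Gaussian noise addition. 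This produces the $\rad^2\lip^2/\alpha^2$ term in the runtime and the $\rad\lip/(n\epsilon)$ term in the error.

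\textbf{Main obstacle.} The heart of the argument is translating a \emph{fixed-accuracy} DP-ERM guarantee (independent of $D$) into the \emph{shrinking-accuracy} guarantee BGM demands. This is done by scaling $\lambda_D$ inversely with $D$, using the boundedness of $\cW$ to convert first-moment control of $\|w'-w^*\|$ into an $O(\rad D)$ second-moment bound, and checking that the induced regularized loss remains inside $\cF^{n'}_{5\lip,\beta'}$; the latter consistency check is precisely what pins down the factor of $5\lip$ and the definition $\beta'=\beta+\lip/\rad$ appearing in the statement.
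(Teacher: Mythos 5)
Your proposal has a fundamental circularity: its very first sentence declares that the proof "combines three ingredients: the existing BGM reduction (Theorem~\ref{thm:bgm-reduction}), a regularization wrapper, \ldots" --- but Theorem~\ref{thm:bgm-reduction} is exactly the statement you were asked to prove. Everything that follows (the regularization wrapper converting a fixed-accuracy DP-ERM solver into a contraction oracle, the data splitting with parallel composition, the two-regime split with a DP-SGD fallback) is an argument that \emph{uses} Theorem~\ref{thm:bgm-reduction} as a black box. What you have actually sketched is a proof of Theorem~\ref{thm:erm-to-sco}, the theorem that follows this one in Appendix~\ref{app:erm-to-sco}, not a proof of the statement in question.

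The paper does not prove Theorem~\ref{thm:bgm-reduction} at all; it is a direct citation of \cite[Theorem~1]{BGM23}. The text accompanying the statement only observes that (i)~BGM's proof uses the relative-accuracy hypothesis solely for $D\in[\rad\sqrt{\log(n)/n},\rad]$, so the restricted hypothesis suffices; (ii)~BGM's saddle-point setting specializes to pure minimization; and (iii)~BGM's internal regularizer increases smoothness by at most $\lip/\rad$, which is why $\beta'=\beta+\lip/\rad$ appears. A genuine proof of this theorem would require reconstructing the iterative-localization argument of \cite{BGM23} --- a sequence of roughly $\log(n)$ geometrically shrinking regularized subproblems, each warm-started from the previous solution, together with a stability-based argument converting empirical subproblem accuracy into population accuracy --- and none of that appears in your proposal. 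Incidentally, your regularization-based contraction wrapper is a plausible alternative to the domain-restriction wrapper the paper actually uses in its proof of Theorem~\ref{thm:erm-to-sco} (running $\cA$ on nested balls $\cW_r$ rather than on regularized losses), which further confirms you were proving that later theorem rather than this one.
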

The original statement in \cite{BGM23} assumes the accuracy condition holds for all $D>0$, but an inspection of their proof shows that the relative accuracy condition is only used in their Eqn. 12 and for $D \in [\rad\sqrt{\frac{\log(n)}{n}},\rad]$. Further, \cite{BGM23} studied the more general case of saddle point problems, but ERM can be recovered by assuming range of the dual parameter is a singleton. Finally, we note that their algorithm only requires running the subroutine $\cA$ on regularized version of the loss, which, under their level of regularization, increases the smoothness parameter of the loss by at most $\frac{\lip}{\rad}$.

It has essentially already been shown in \cite[Section 5]{ABGMU22} how to obtain a DP algorithm satisfying the accuracy condition of Theorem \ref{thm:bgm-reduction} using black box access to a DP constrained optimizer, although their setting differs slightly. We provide a self contained version of their argument below. We will also make use of Fact \ref{fact:rescaling} in Appendix \ref{app:extra-lemmas} several times.

\begin{proof}[Proof of Theorem \ref{thm:erm-to-sco}]
In the following, let $T$ denote the expected running time of an $\alpha$-accurate DP-ERM algorithm, $\cA$, in the case where $\rad=\lip=1$.

We fist boost the expected empirical risk guarantee of $\cA$ into a high probability guarantee.
Let $u_0,\ldots,u_K$ be the result of $K=\log(n)$ independent runs of $\cA$ on $S$. By Markov's inequality, at least one of these runs achieves excess risk $2\alpha$ with probability at least $1-\frac{1}{2^K} = 1-\frac{1}{n}$. For each run $j\in[K]$, we generate a loss estimate, $E_j$, by sampling (without replacement) a minibatch of $1/\alpha^2$ losses from $\cL$ and computing the average loss on $u_j$. Since the range of the losses is $1$-bounded, we have by Chernoff-Hoeffding that,
$\prnos{}{E_j - \cL(u_j) \geq \sqrt{\frac{8\log(n)}{n}}} \leq \frac{1}{n^2}$. We then apply the exponential mechanism with privacy parameter $\epsilon$ over the scores $E_1,\ldots,E_K$ to select the solution candidate from $u_1,\ldots,u_K$. The guarantees of the exponential mechanism ensures that with probability at least $1-1/n$ the selected solution has loss within $\frac{4\log(n)}{n\epsilon}$ of the minimal loss candidate. Thus we obtain an accurate solution with probability at least $1-O(1/n)$ via a procedure that is via an algorithm that is $((\log(n)+1)\epsilon,\log(n)\delta)$-DP. The expected running time of this procedure is $\log(n)T + \frac{\log(n)}{\alpha^2}$.

Applying Fact \ref{fact:rescaling}, we can assume access to an algorithm $\tilde{\cA}$, which with probability at least $1-O(1/n)$ achieves accuracy $\rad\lip\tilde{\alpha}$ where $\tilde{\alpha}=\big(\alpha + \sqrt{\frac{8\log(n)}{n}} + \frac{4\log(n)}{n\epsilon}\big)$ on problems which are $\lip$-Lipschitz and have constraint set of radius at most $\rad$. 

We now describe how to use $\tilde{\cA}$ to obtain relative accuracy.
Letting $R=\frac{1}{2}\log(n)$, we run $\cA$ on $\cW_0,...,\cW_{R}$, to obtain candidate solutions $w_0,...,w_R$, where $\cW_r = \cW \cap \bc{w: \|w-w'\| \leq 2^{-r}\rad}$. Observe $w^*\in\cW_r$ for any $r \leq \frac{\log(1/\|w'-w^*\|)}{\log(2\rad)}$. 
We then pick the best candidate using the same loss estimate/exponential mechanism procedure used in the boosting argument. 
It is then easy to see that the solution selected by the exponential mechanism achieves excess empirical risk 
 that is $O\Big(\|w'-w^*\| \lip \tilde{\alpha} + \rad\lip\big(\sqrt{\frac{\log(n)}{n}} + \frac{\log(n)}{n\epsilon}\big)\Big)$.
Converting the high probability guarantee to expectation we obtain excess empirical risk
$O\Big(\|w'-w^*\| \lip \tilde{\alpha} + \rad\lip\big(\sqrt{\frac{\log(n)}{n}} + \frac{\log(n)}{n\epsilon} + \frac{1}{n}\big)\Big)$.
By taking expectation w.r.t. $w'$ we see the  condition of the theorem is satisfied with $\hat{\alpha}=O(\lip\tilde{\alpha})$. Theorem \ref{thm:erm-to-sco} then follows by applying Theorem \ref{thm:bgm-reduction} to the previously described algorithm.
\end{proof}

\subsection{Reducing DP-ERM ot DP-SCO} \label{app:sco-to-erm}
The reverse direction was given by \cite[Appendix C]{BFTT19}, and we here note that this direction can be performed without loss of log factors via a slightly different analysis. Specifically, for $\epsilon \leq 1/6$ and $\delta\in[0,1]$, given an $(\epsilon,\delta)$-DP-SCO solver $\cA_{SCO}$, we first sample $n$ points from the empirical distribution over $\{\ell_1,\ldots,\ell_n\}$, then runs the DP-SCO solver on the resampled dataset. By results for privacy amplification via subsampling with replacement (see Lemma \ref{lem:amp-wor} in Appendix \ref{app:extra-lemmas}), the result is $(6\epsilon,e\delta)$-DP. The bound on excess empirical risk follows directly from the accuracy guarantees of $\cA_{SCO}$, since it is run on i.i.d. samples from the empirical distribution.

\end{document}